\title{
What does it mean to be a Transformer? \\ Insights from a theoretical Hessian analysis
}
\author{Weronika Ormaniec\thanks{Correspondence to wormaniec@ethz.ch, fdangel@vectorinstitute.ai, contact@sidakpal.com.} \\
ETH Zürich\\
Zürich, Switzerland \\
\And
Felix Dangel \\
Vector Institute \\
Toronto, Canada \\
\And
Sidak Pal Singh \\
ETH Zürich \\
Zürich, Switzerland \\
}
\setlist[itemize]{leftmargin=*,nosep}
\begin{document}

\maketitle

\begin{abstract}
The Transformer architecture has inarguably revolutionized deep learning, overtaking classical architectures like multi-layer perceptrons (MLPs) and convolutional neural networks (CNNs). At its core, the attention block differs in form and functionality from most other architectural components in deep learning---to the extent that, in comparison to MLPs/CNNs, Transformers are more often accompanied by adaptive optimizers, layer normalization, learning rate warmup, etc. The root causes behind these outward manifestations and the precise mechanisms that govern them remain poorly understood. In this work, we bridge this gap by providing a fundamental understanding of \textit{what distinguishes the Transformer from the other architectures---grounded in a theoretical comparison of the (loss) Hessian.} Concretely, for a single self-attention layer, \textbf{(a)} we first entirely derive the Transformer's Hessian and express it in matrix derivatives; \textbf{(b) }we then characterize it in terms of data, weight, and attention moment dependencies; and \textbf{(c)} while doing so further highlight the important structural differences to the Hessian of classical networks. 
Our results suggest that various common architectural and optimization choices in Transformers can be traced back to their highly non-linear dependencies on the data and weight matrices, which vary heterogeneously across parameters. Ultimately, our findings provide a deeper understanding of the Transformer’s unique optimization landscape and the challenges it poses.
\end{abstract}

\vspace*{-4mm}
\section{Introduction and Related Work}
\vspace*{-1.5mm}
The Transformer architecture \citep{vaswani2017attention} has shown remarkable success across natural language processing \citep{devlin2018bert,radford2018improving} and vision \citep{dosovitskiy2020image} tasks.
In particular, its self-attention mechanism has become a mainstay of modern architectures, enabling parallelization while effectively capturing long-range and modality-agnostic dependencies.
Yet, despite their impressive performance, a significant gap remains in our understanding of the properties of Transformer-based models relative to traditional architectures such as multi-layer perceptrons \citep[MLPs,][]{rosenblatt1958perceptron} or convolutional neural networks \citep[CNNs,][]{lecun1998gradient}.\looseness=-1

\vspace*{-0.5mm}
\textbf{Transformers' unique architectural built. }Compared to the classical architectures, Transformers are unique in several ways.  Firstly, the data (tokens) enters the Transformer, through the self-attention, multiple times. Secondly, the softmax nonlinearity inside self-attention differs from the piece-wise linear activations, like ReLU~\citep{glorot2011deep}. Thirdly, the query-key attention incorporates two (instead of one) directly multiplied weight matrices within a single architectural block.

\vspace*{-0.5mm}
\textbf{The side-factors making Transformers' unique architecture work. }These architectural differences render practical approaches for training a Transformer distinct compared to classical nets~\citep{popel2018training,bengio2012practical}.
E.g., Transformers are usually trained with adaptive optimizers like Adam(W)~\citep{kingma2014adam,loshchilov2017decoupled} and require architectural extensions such as skip connections~\citep{he2016deep} and layer norm~\citep{xiong2020layer}, learning rate warm-up~\citep{goyal2017accurate}, and using different weight initializations~\citep{huang2020improving}.\looseness=-1

\vspace*{-0.5mm}
\textbf{Aim. }Given these \emph{outward and indirect} manifestations of the Transformer's presence, it is unclear how these are \emph{explicitly} triggered due to particular loss landscape geometry endowed by Transformers. To address this, we theoretically investigate the Transformer's loss landscape by analyzing, in detail, the structure of the Hessian matrix as well as its data dependency and behavior across layers.

\vspace{-0.5mm}
\textbf{Prior Hessian-related work.} The Hessian matrix is a fundamental object for optimization~\citep{martens2010deep,schaul2013no,cohen2021gradient}, generalization~\citep{keskar2016large,jiang2019fantasticgeneralizationmeasures}, and more~\citep{lecun1989optimal,singh2020woodfisher}. 
\textit{\textbf{(a)} Empirical work: }For traditional architectures, the Hessian has received a lot of attention through empirical studies on its bulk-outlier spectrum and eigenvalue density~\citep{sagun2016eigenvalues,sagun2017empirical,ghorbani2019investigation,papyan2020traces,yao2020pyhessian} and in turn how it is affected by architectural components and how it changes during training. \textit{\textbf{(b)} Theoretical studies in classical architectures: }More recently, several theoretical works have also analyzed, in detail, the Hessian structure and rank ~\citep{singh2021analytic,singh2023hessian}, beyond Random-Matrix-Theory based approximations~\citep{pennington2017geometry}. However, this latter line of work has been restricted to fully-connected and convolutional architectures. 

\vspace{-0.5mm}
\textbf{Transformer-related Hessian studies.} \citet{park2022vision} provide empirical evidence that the multi-head self-attention mechanism in Transformers leads to a more non-convex but smoother loss landscape than that of CNNs, as the Hessian has more negative eigenvalues, but of smaller magnitude. \citet{zhang2024transformers} studied the Hessian spectra of Transformers to explain the need for adaptive optimizers for successful training. They empirically showed that although the full Hessian spectra of Transformers are quite similar to those of CNNs, the Hessian diagonal blocks in Transformers are much more heterogeneous---a possible cause for the need for adaptive optimizers. \textit{Barring these few empirical studies, to the best of our knowledge, a theoretical treatment of the Hessian in Transformers remains lacking.}\looseness=-1

\vspace{-0.5mm}
\textbf{Goals and contributions. }By theoretically analyzing the Hessian of Transformers, we aim to identify and explicitly state the differences to classical architectures.  \textit{We believe this provides the foundation for a deeper understanding of the unique loss landscape features and challenges the Transformer poses.} Our detailed contributions are:

\vspace{-0.3mm}
\textbf{(i) }We derive the Hessian of a single self-attention layer and exhibit the structure contained within its well-known positive-definite and indefinite parts (\cref{th:self_attention_outer_product,th:self_attention_functional}).

\vspace{-0.3mm} 
\textbf{(ii)} We categorize its dependencies into data, weights, and attention moments, find that the Hessian is highly non-linear and heterogeneous for different parameter groups within a self-attention block, and show how various Transformer-related tricks can be understood through these characteristics.

\vspace{-0.3mm}    
\textbf{(iii)} We explicitly establish how individual components of self-attention, like the softmax and query-key-parameterization, result in a more non-linear and heterogeneous structure within the Hessian (see \cref{fig:hess-fig}), then contrast the Transformer's Hessian with that of traditional architectures and uncover pronounced differences.
\looseness=-1

\begin{figure}
    \centering
    \vspace{-1em}
    \begin{subfigure}{0.30\linewidth}
    \centering
    \begin{tikzpicture}
    \draw[VeryLightGray, thin, rounded corners=2pt] (-0.7,-0.2) rectangle (4.2,0.2);
    \draw[PurplePrint, thick] (-0.5,0) -- (0.0,0) node[right, right] {\tiny\color{black}
    $\Hm(\wv, \wv)$
    };
\draw[GreenPrint, thick] (1.9,0) -- (2.4,0) node[right, right] {\tiny
\color{black}$\Hm(\wq, \wq)$};
\end{tikzpicture}
   \includegraphics[width=1.0\linewidth]{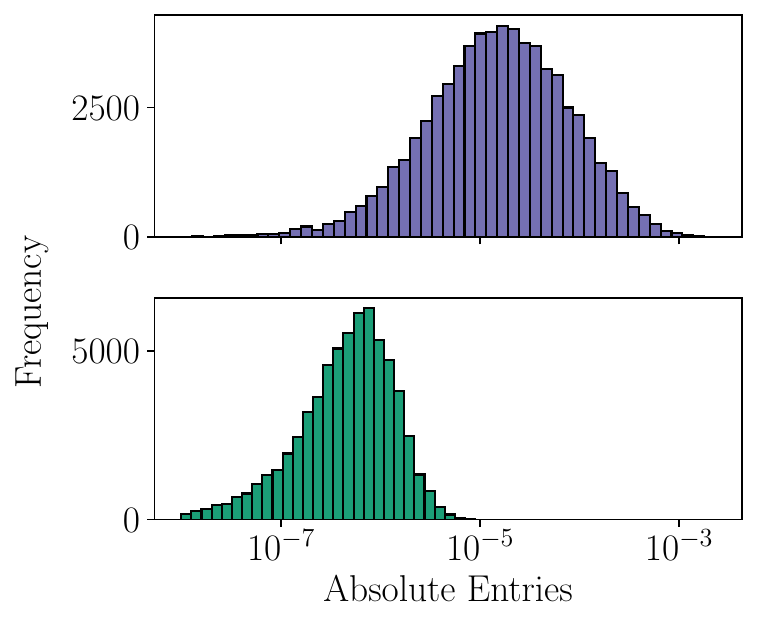}
   \caption{Histogram of (absolute) entries corresponding to the \textbf{\textcolor{PurplePrint}{value}} and \textbf{\textcolor{GreenPrint}{query}} diagonal blocks.}
      \label{fig:entries-magnitude-stx}
    \end{subfigure}
    \hspace{7mm}
    \textcolor{gray!50}{\vrule width 0.5pt height 5.0cm} 
    \hspace{3mm}
    \begin{subfigure}{0.57\linewidth}
    \begin{tikzpicture}

    \clip (-0.06,0.93) rectangle (10,5.1);
    
    \node[anchor=south west, inner sep=0] (image) at (0.01,0) {\includegraphics[width=1.03\linewidth, trim=100pt 0 130pt 0, clip]{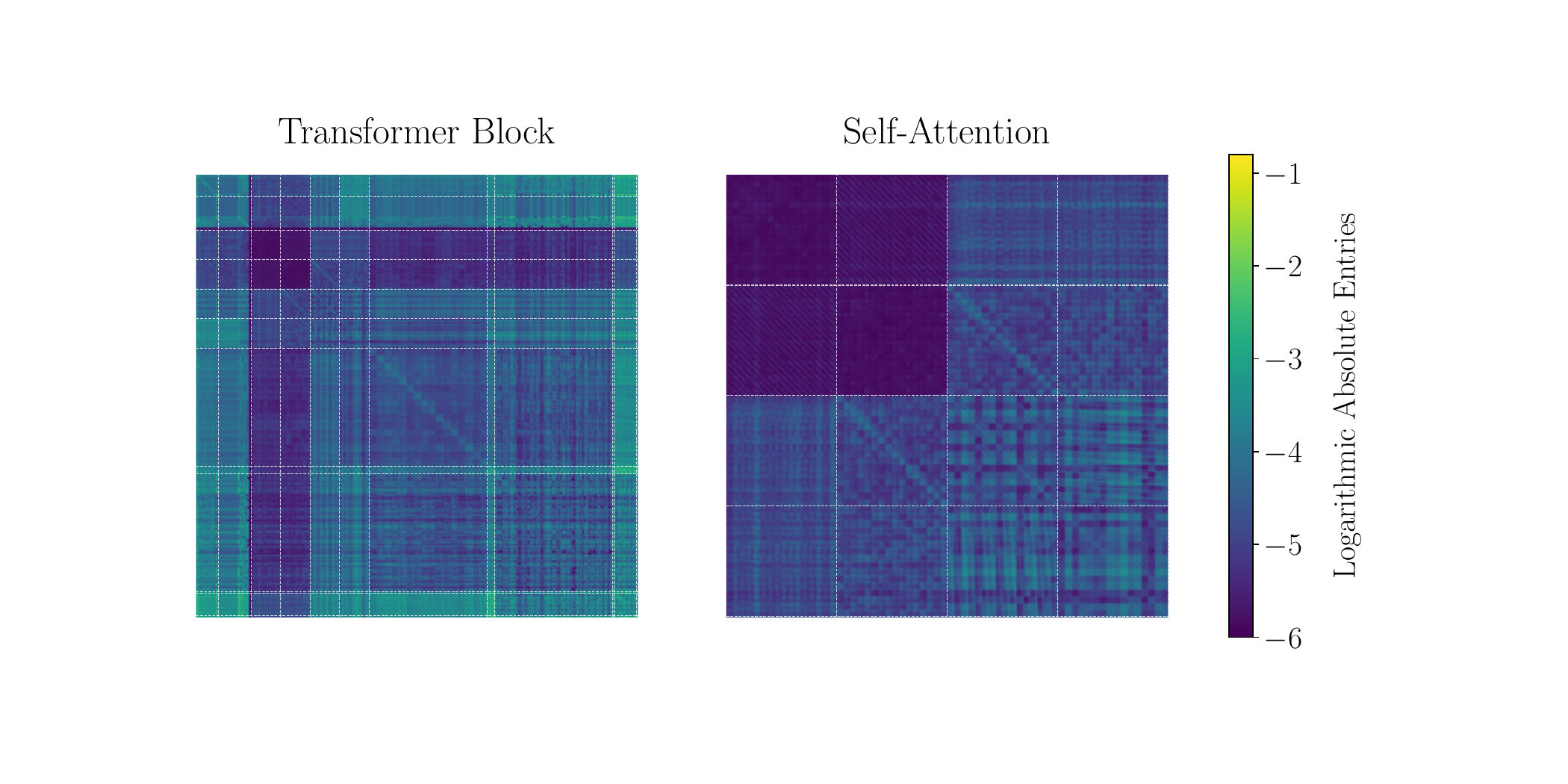}{}}; 
    \begin{scope}[shift={(image.south west)}, x={(image.south east)}, y={(image.north west)}]

    \draw[red!60, thick] (0.08,0.705) -- (0.472,0.775);  
    \draw[red!60, thick] (0.08,0.555) -- (0.472,0.219); 
    \draw[red!60, thick] (0.177,0.555) -- (0.472,0.405);
    \draw[red!60, thick] (0.177,0.705) -- (0.472,0.735);

    \node[rotate=90] at (0.0,0.741) {\tiny Emb.};
    \node[rotate=90] at (0.0,0.61) {\tiny Self-Att.};
    \node[rotate=90] at (0.0,0.4) {\tiny MLP};

    \node[rotate=90] at (0.44,0.71) {\tiny $\wq$};
    \node[rotate=90] at (0.44,0.565) {\tiny $\wk$};
    \node[rotate=90] at (0.44,0.43) {\tiny $\wv$};

    \draw[red!60, thick] (0.08,0.555) rectangle (0.177,0.705);
    \end{scope}
\end{tikzpicture}  
\vspace{-4.0mm}
\caption{On the left, the Hessian matrix for a minimal Transformer, and, on the right, the zoomed-in block w.r.t. \textbf{\textcolor{GreenPrint}{query}}, \textbf{\textcolor{OliveGreen}{key}}, and \textbf{\textcolor{PurplePrint}{value} }parameters.}
    \end{subfigure}\vspace{-1.5mm}
    \caption{{\bf Disparity in the Hessian blocks of a Transformer seen quantitatively and qualitatively.} We used a single-block GPT-2 Transformer at initialization applied to the next token prediction task (for details see \cref{sec:experimental-setup}). We observe block heterogeneity in the magnitudes of Hessian entries---those of the query block are significantly smaller than those of the value block. }
    \label{fig:hess-fig}\vspace{-5.0mm}
\end{figure}

\vspace*{-2mm}
\section{Setup and Background}
\label{sec:setup-background}
\vspace*{-0.5mm}
\textbf{Single self-attention layer. } 
We consider a single self-attention layer with a slightly generalized definition, which will later enable us to investigate the impact of specific components. The layer maps token embeddings $\Xm\in\M{L}{\dv}$ of a sequence of length $L$ with embedding dimension $\dv$ into a new sequence $\bfun(\Xm) \in\M{L}{\dv}$ by computing the values $\Xm \wv$ using value weights $\wv \in \M{\dv}{\dv}$ and re-weighting them with the self-attention matrix (or map) $\bb{A}(\Xm) \in \M{L}{L}$, i.e.
\begin{equation}
    \label{eq:self-attention}
    \bfun(\Xm) = \Am(\Xm) \Xm \wv \quad\text{where}\quad\bb{A}(\Xm) = a \left(\Tm(\Xm)\right)\,.
\end{equation}
Here, we decompose the attention map into two components, the query-key similarity transformation $\Tm(\Xm) \in \M{L}{L}$ which may introduce additional trainable parameters, and the activation function $a: \M{L}{L} \to \M{L}{L}$.
The classical self-attention from \citet{vaswani2017attention} uses $\Tm(\Xm) = \Xm\wq\wk^\top\Xm^\top \, / \, \sqrt{\dk}\,,$ where $a=\sm \,\,\text{(row-wise)}\,,$
with learnable query and key weight matrices $\wq,\wk \in \M{\dv}{\dk}$.
Further, our single self-attention layer from \cref{eq:self-attention} feeds into a loss function $\ell: \M{L}{\dv} \times \M{L}{\dv} \to \R$ that measures the discrepancy between the predicted sequence $\bfun(\Xm)$ and the sequence labels $\Ym$. We will assume square loss (where $\| \cdot \|_\text{F}$ denotes the Frobenius norm of a matrix), $\ell(\bfun(\Xm), \Ym) = \, \|\bfun(\Xm) - \Ym\|_\text{F}^2 / (L\dv)\,.$

\textbf{Hessian, block structure, and Gauss-Newton decomposition. }
 Our goal is to compute the full Hessian of the loss w.r.t.\,the learnable, matrix-shaped, parameters $\{ \bb{W}_i \in \R^{q_i \times p_i} \}_i$ of the attention layer (for instance $i \in \{\text{K}, \text{Q}, \text{V} \}$ for canonical self-attention).
 To break down this task, first notice that this matrix consists of blocks $\partial^2 (\ell \circ \bfun) / \partial \bb{W}_i \partial \bb{W}_j$.
 Further, the Hessian decomposes into two components as a consequence of the chain rule applied to the functional split $\ell \circ \bfun$,
\begin{equation}
\label{eq:gauss_newton_self_attention}
    \dfrac{\partial^2 \left(\ell \circ \bfun \right)}{\partial \bb{W}_i \partial \bb{W}_j}(\cdot)
    =
    \underbrace{{\color{black}\der{\bfun}{\bb{W}_i}(\cdot)^\top}\dfrac{\partial^2 \ell}{\partial \bfun^2}(\bfun(\cdot)){\color{black}\der{\bfun}{\bb{W}_j}(\cdot)}}_{\substack{\bfun\text{-outer-product Hessian} \\ \hob{\phantom{\bfun}}{\bb{W}_i,\bb{W}_j}}}
    +
    \underbrace{\left(\der{\ell}{\bfun}(\bfun(\cdot)) \otimes \id{p_i q_i}\right){\color{black}\dfrac{\partial^2 \bfun}{\partial \bb{W}_{i} \partial \bb{W}_j}(\cdot)}}_{\substack{\bfun\text{-functional Hessian} \\ \hfb{\phantom{\bfun}}{\bb{W}_i,\bb{W}_j}}}\,.
\end{equation}
\Cref{eq:gauss_newton_self_attention} is known as Gauss-Newton decomposition and such splits can be widely seen in the literature~\citep{sagun2017empirical,kunstner2019limitations,martens2020new,dangel2020modular,singh2021analytic}. We adopt the terminology provided in \citet{singh2021analytic, singh2023hessian} by referring to the term containing second-order derivatives of $\ell$ as outer-product Hessian (often called generalized Gauss-Newton~\citep{schraudolph2002fast} matrix), and to the term capturing second-order derivatives of the network as functional Hessian.
To emphasize the split at which the decomposition through the chain rule was applied, we added the prefix $\bfun$ above. This will later become useful to identify expressions in the Hessian as stemming from different splits (see \cref{sec:self-attention-hessian,ap:quey-key-eigenvalues}). But until then and unless stated otherwise, we will consider the split $\ell \circ \bfun$ and thus omit specifying this hereafter. For classical self-attention, this yields the Hessian decomposition  $\bb{H} = \bb{H}_\text{o}^{\phantom{\bfun}} + \bb{H}_\text{f}^{\phantom{\bfun}}$ with\looseness=-1
\begin{align}\label{eq:hessian-block-structure}
   \bb{H}_\bullet^{\phantom{\bfun}}
   =
   \begin{bmatrix}
       \bb{H}_\bullet^{\phantom{\bfun}}(\wq, \wq)
       &
       \bb{H}_\bullet^{\phantom{\bfun}}(\wq, \wk)
       &
       \bb{H}_\bullet^{\phantom{\bfun}}(\wq, \wv)
       \\
       \bb{H}_\bullet^{\phantom{\bfun}}(\wk, \wq)
       &
       \bb{H}_\bullet^{\phantom{\bfun}}(\wk, \wk)
       &
       \bb{H}_\bullet^{\phantom{\bfun}}(\wk, \wv)
       \\
       \bb{H}_\bullet^{\phantom{\bfun}}(\wv, \wq)
       &
       \bb{H}_\bullet^{\phantom{\bfun}}(\wv, \wk)
       &
       \bb{H}_\bullet^{\phantom{\bfun}}(\wv, \wv)
   \end{bmatrix}
\end{align}
for $\bullet \in \{\text{o}, \text{f}\}$ and where $\bb{H}_\bullet^{\phantom{\bfun}}(\bb{W}_i, \bb{W}_j) = (\bb{H}_\bullet^{\phantom{\bfun}}(\bb{W}_j, \bb{W}_i))^\top$ due to symmetry.

\begin{figure}
\begin{subfigure}{\linewidth}
    \centering
    \vspace{-2em}
    \begin{tikzpicture}[every node/.style={scale=0.9}]

    \node (FMatrix) at (0, 0) {$\bfun$};
    \node (VecF) at (2.0, 0) {$\vecr(\bfun)$};
    \node (nablaXF) at (5.0, 0) {$\der{\bfun}{\bb{W}_i} := \dfrac{\partial \vecr \bfun}{\partial \vecr \bb{W}_i}$};
    \node (VecNablaXF) at (8.0, 0) {$\vecr\left(\der{\bfun}{\bb{W}_i}\right)$};
    \node (D) at (11.4, 0.2) {$\dfrac{\partial^2 \bfun}{\partial \bb{W}_i \partial \bb{W}_j}:= \dfrac{\partial \vecr \der{\bfun}{\bb{W}_i}}{\partial \vecr \bb{W}_j}$};
    
    \node (FMatrix) at (0, -1.5) {$\begin{bmatrix} {\color{NavyBlue}  \bullet} & {\color{CrimsonRed}\bullet }\\ {\color{ForestGreen}\bullet} & {\color{Gold}\bullet} \end{bmatrix}$};
    \node (VecF) at (2.0, -1.5) {$\begin{bmatrix}{\color{NavyBlue}  \bullet} \\ {\color{CrimsonRed}\bullet } \\ {\color{ForestGreen}\bullet} \\ {\color{Gold}\bullet} \end{bmatrix}$};
    \node (nablaXF) at (5.0, -1.5) {$\begin{bmatrix} {\color{NavyBlue}\rightarrow} \\ {\color{CrimsonRed}\rightarrow} \\ {\color{ForestGreen}\rightarrow} \\ {\color{Gold}\rightarrow} \end{bmatrix}$};
    \node (VecNablaXF) at (8.0, -1.5) {$\begin{bmatrix} {\color{NavyBlue}\downarrow} \\ {\color{CrimsonRed}\downarrow} \\ {\color{ForestGreen}\downarrow} \\ {\color{Gold}\downarrow} \end{bmatrix}$};
    \node (D) at (11.4, -1.5) {$\begin{bmatrix} {\color{NavyBlue}\downarrow }&{\color{NavyBlue} \rightarrow }\\ 
    {\color{CrimsonRed}\downarrow }& {\color{CrimsonRed} \rightarrow}\\ {\color{ForestGreen}\downarrow} & {\color{ForestGreen}\rightarrow}\\ {\color{Gold}\downarrow }&{\color{Gold}\rightarrow}\end{bmatrix}
    $};

    \draw[-{Latex[length=2mm]}, draw=black!20] (FMatrix.east) --node[above=3mm] (label1) {}  (VecF.west);
    \draw[-{Latex[length=2mm]}, draw=black!20] (VecF.east) --node[above=3mm] (label2) {}  (nablaXF.west);
    \draw[-{Latex[length=2mm]}, draw=black!20] (nablaXF.east) --node[above=3mm] (label3) {} (VecNablaXF.west);
    \draw[-{Latex[length=2mm]}, draw=black!20] (VecNablaXF.east) --node[above=3mm] (label4) {} (D.west);

    \node[draw, circle, minimum size=5mm, inner sep=0] at (label1) {1};
    \node[draw, circle, minimum size=5mm, inner sep=0] at (label2) {2};
    \node[draw, circle, minimum size=5mm, inner sep=0] at (label3) {3};
    \node[draw, circle, minimum size=5mm, inner sep=0] at (label4) {4};
\end{tikzpicture}
\end{subfigure}
    \caption{Construction of the second derivative matrix of a matrix-valued network $\bfun$. Taking the second derivative of $\bfun$ using row-wise vectorization and numerator layout is equivalent to computing the second derivatives of each entry separately and stacking them into a column block matrix.}\vspace{-5.5mm}
    \label{fig:stacked-hessian}
\end{figure}

\textbf{Matrix calculus. } So far, we have slightly abused notation and did not define formally what it means to take derivatives of a matrix-valued object w.r.t.\,a matrix-shaped argument.
We will follow the recipe of matrix calculus~\citep{magnus2019matrix} which reduces those derivatives to the vector case by introducing a flattening convention (we will use row-stacking, indicated by the operation $\vecr$).
Consider two vectors $\bb{a} \in \R^A$ and $\bb{b}(\bb{a}) \in \R^B$.
The Jacobian $\partial \bb{b} / \partial{\bb{a}} \in \R^{B \times A}$ collects the first-order derivatives such that $[\partial \bb{b} / \partial{\bb{a}}]_{i,j} = \partial b_i / \partial a_j$.
The Hessian $\partial^2 \bb{b} / \partial \bb{a}^2 \in \R^{B A \times A}$ is a column block matrix that concatenates the Hessians for each entry of $\bb{b}$ w.r.t.\,$\bb{a}$; equivalent to flattening the Jacobian, then differentiating it a second time.
To generalize these definitions to matrix-valued objects, we simply flatten all arguments before differentiation (see \cref{fig:stacked-hessian} for a visualization).
Applied to the expressions in \cref{eq:gauss_newton_self_attention}, we have,\vspace{-4mm}
\begin{equation}\label{eq:matrix-calculus-jacobian-and-hessian}
    \dfrac{\partial \bfun}{\partial \bb{W}_i} := \dfrac{\partial \vecr \bfun}{\partial \vecr \bb{W}_i} \in \M{L\dv}{p_iq_i}
    \quad
    \text{and}
    \quad
    \frac{\partial^2 \bfun}{\partial \bb{W}_i \partial \bb{W}_j}:= \dfrac{\partial \vecr \der{\bfun}{\bb{W}_i}}{\partial \vecr \bb{W}_j} \in \M{L\dv p_i q_i}{p_j q_j}\,.
\end{equation}
For the square loss discussed above, the Jacobian $\partial \ell/\partial \bfun \in \M{1}{L \dv}$ and Hessian $\partial^2 \ell/\partial^2 \bfun \in \M{L \dv}{L \dv}$ w.r.t.\,the network's prediction simplify to  $2 \vecr{(\bfun(\cdot) - \by)}^\top / (L \dv)$ and  $2 \bb{I}_{L d_v} / (L \dv)$ (i.e., a scaled identity matrix) respectively.\looseness=-1

\textbf{Hessian of MLPs and CNNs. } Since one of our main interests is stating explicit differences between the Transformer Hessian and that of traditional architectures, we briefly recap the Hessian structure from 
\citet{singh2021analytic, singh2023hessian}, for the case of linear MLPs, $\bfun(\bb{x}) = \wM{L} \cdots \wM{1} \bb{x}$. As an illustration, we will only consider the outer-product Hessian and look at its diagonal block corresponding to the $i$-th layer parameter matrix, $\wM{i}$, which takes the form $$\mathbf{H}_\text{o} (\wM{i}, \wM{i})= \wMt{\idx+1} \cdots \wMt{L} \wM{L}\cdots \wM{\idx+1} \, \otimes \,
\wM{\idx-1} \cdots \wM{1} \,\covx\, \wMt{1} \cdots \wMt{\idx-1}\,,$$ with $\covx=\Expect[\bb{x}\bb{x}^\top]$, the uncentered data covariance over the dataset.
Observe that the above expression depends quadratically on the data through $\covx$. Likewise, the functional Hessian for MLPs, has a quadratic dependence on the data, but through a different matrix which carries the covariance of the residuals $\delta_{\bb{x}, \bb{y}} = \bfun(\bb{x}) - \bb{y}$ with the input $\bb{x}$. Importantly, the Hessian diagonal blocks are entirely comprised of the contribution from the outer-product Hessian, as the functional Hessian has a block-hollow structure (zero diagonal blocks), which also extends to any piecewise nonlinearity in the sense of almost everywhere. 

A notable difference from our setup is that these works aggregate the Hessian over the dataset, while, for brevity, we focus on a single sample setting.\footnote{Here we discuss the Hessian in a single data point setting to improve exposition, but our insights translate directly into a batch setting---one just needs to average the Hessian formulas over the data points.} As a side-benefit, this allows us to cast their (non-sequential) setup one-to-one here \textit{by considering their whole dataset in the form of a single sequence, where the MLP is applied separately to each element of the sequence.} This point of view helps facilitate a precise comparison between the two settings. 
\vspace{-2.0mm}
\section{Exact Structure of the Self-Attention Hessian}
\vspace{-0.5mm}

\label{sec:self-attention-hessian}
We start by studying the Hessian of standard self-attention with square loss. 
We present our main results regarding the exact Hessian in the form of two theorems, each of them targeting one term in the Gauss-Newton decomposition from \cref{eq:gauss_newton_self_attention}, further broken down into parameter blocks from~\cref{eq:hessian-block-structure}.
Later, our goal will be to analyze and interpret them further through simplifications. 
For brevity, we omit the blocks w.r.t.\,the key weight matrix as they are essentially symmetric to the ones w.r.t. the query weight matrix (modulo differences in arrangement) and defer the proofs to the appendix. 
\begin{restatable}{theorem}{outerproducthessian}{\textbf{Outer-product Hessian $\mathbf{H}_{\mathrm{o}}$}.}
\label{th:self_attention_outer_product}
For a single self-attention layer, \cref{eq:self-attention}, with classical self-attention that feeds into the square loss, the blocks of $\mathbf{H}_{\text{o}}$ are\looseness=-1 \vspace{-0.8em}
\begin{equation*}
\begin{aligned}
\label{eq:outer_product_full_weight_matrices}
    \hobSimple{\bfun}{\wv,\wv} &= \frac{2}{L\dv}\Mm_1^\top\Mm_1 \otimes \id{\dv},
    \\[1mm]
   \hobSimple{\bfun}{\wq,\wq} &= \frac{2}{L\dv\dk} \left(\id{\dv} \otimes \wk^\top\right)\Zm_1^\top\left(\id{L}\otimes \wv\wv^\top\right)\Zm_1\left(\id{\dv} \otimes \wk\right), \\[1mm]
    \hobSimple{\bfun}{\wv,\wq}&= \frac{2}{L\dv\sqrt{\dk}} \left(\Mm_1^\top \otimes \wv^\top\right) \Zm_1 \left( \id{\dv} \otimes \wk \right), 
    \\[1mm]
    \hobSimple{\bfun}{\wq,\wk} &= \frac{2}{L\dv\dk} \left(\id{\dv} \otimes \wk^\top\right)\Zm_1^\top\left(\id{L}\otimes \wv\wv^\top\right)\Zm_1\left(\wq  \otimes \id{\dv}\right)\K{\dk,\dv},
\end{aligned}
\end{equation*}
with the first attention moment matrix $\Mm_1:=\Am\Xm \in \M{L}{\dv}$ (see \cref{ssec:attention-moment-matrices}) and where $\smash{\Zm_1:= (\id{L} \otimes \Xm^\top) (\partial \Am / \partial \Tm) (\Xm \otimes \Xm) \in \M{L\dv}{\dv^2}}$ contains first derivatives of the softmax.
\end{restatable}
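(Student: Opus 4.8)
The plan is to exploit the fact that for square loss the prediction-space Hessian collapses to a scaled identity, so that the outer-product Hessian becomes a pure Gram matrix of Jacobians. From \cref{eq:matrix-calculus-jacobian-and-hessian} we have $\partial^2\ell/\partial\bfun^2 = (2/(L\dv))\,\id{L\dv}$, so the first term of \cref{eq:gauss_newton_self_attention} reduces to $\hobSimple{\bfun}{\bb{W}_i,\bb{W}_j} = (2/(L\dv))\,(\der{\bfun}{\bb{W}_i})^\top\der{\bfun}{\bb{W}_j}$. Hence the entire task is to compute the three Jacobians $\der{\bfun}{\wv}$, $\der{\bfun}{\wq}$, $\der{\bfun}{\wk}$ and then form the relevant products.

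The workhorse identity throughout will be the row-vectorization rule $\vecr(\bb{A}\bb{B}\bb{C}) = (\bb{A}\otimes\bb{C}^\top)\vecr(\bb{B})$ together with the mixed-product property $(\bb{A}\otimes\bb{B})(\bb{C}\otimes\bb{D}) = (\bb{A}\bb{C})\otimes(\bb{B}\bb{D})$. For the value block, since $\Am$ is independent of $\wv$, writing $\bfun = (\Am\Xm)\wv\,\id{\dv}$ and applying the rule directly yields $\der{\bfun}{\wv} = \Mm_1\otimes\id{\dv}$ with $\Mm_1 = \Am\Xm$; squaring it and using the mixed-product property gives the stated $\hobSimple{\bfun}{\wv,\wv}$ immediately.

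For the query and key blocks I would differentiate through the composition $\bfun = \Am(\Tm)\,\Xm\wv$ by the chain rule, $\der{\bfun}{\wq} = \der{\bfun}{\Am}\,(\partial\Am/\partial\Tm)\,\der{\Tm}{\wq}$, and compute the three factors separately: $\der{\bfun}{\Am} = \id{L}\otimes\wv^\top\Xm^\top$ (treating $\bfun$ as linear in $\Am$), the softmax Jacobian $\partial\Am/\partial\Tm$ which I keep as an opaque factor, and $\der{\Tm}{\wq} = (\Xm\otimes\Xm\wk)/\sqrt{\dk}$. The crucial simplification is to factor each outer Kronecker term through the mixed-product property, namely $\id{L}\otimes\wv^\top\Xm^\top = (\id{L}\otimes\wv^\top)(\id{L}\otimes\Xm^\top)$ and $\Xm\otimes\Xm\wk = (\Xm\otimes\Xm)(\id{\dv}\otimes\wk)$, so that the data- and softmax-dependent middle collapses exactly into $\Zm_1 = (\id{L}\otimes\Xm^\top)(\partial\Am/\partial\Tm)(\Xm\otimes\Xm)$, leaving $\der{\bfun}{\wq} = (1/\sqrt{\dk})(\id{L}\otimes\wv^\top)\Zm_1(\id{\dv}\otimes\wk)$. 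The key block is handled identically except that differentiating $\wk^\top$ rather than $\wk$ produces a commutation matrix converting $\vecr(\wk^\top)$ into $\vecr(\wk)$, which accounts for the factor $\K{\dk,\dv}$ and the swapped Kronecker placement $\wq\otimes\id{\dv}$ via $\Xm\wq\otimes\Xm = (\Xm\otimes\Xm)(\wq\otimes\id{\dv})$.

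The three Hessian blocks then follow by assembling the Gram products: $\hobSimple{\bfun}{\wq,\wq}$ from $(\der{\bfun}{\wq})^\top\der{\bfun}{\wq}$ using $(\id{L}\otimes\wv)(\id{L}\otimes\wv^\top) = \id{L}\otimes\wv\wv^\top$; $\hobSimple{\bfun}{\wv,\wq}$ from $(\der{\bfun}{\wv})^\top\der{\bfun}{\wq}$ using $(\Mm_1^\top\otimes\id{\dv})(\id{L}\otimes\wv^\top) = \Mm_1^\top\otimes\wv^\top$; and $\hobSimple{\bfun}{\wq,\wk}$ analogously. I expect the main obstacle to be purely bookkeeping: consistently maintaining the row-stacking and numerator-layout conventions so that every Kronecker factor lands on the correct side, and correctly inserting the commutation matrix for the key block. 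Keeping $\partial\Am/\partial\Tm$ symbolic throughout---its explicit softmax form is never needed here---is what makes the factorization into $\Zm_1$ clean.
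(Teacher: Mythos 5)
Your proposal is correct and follows essentially the same route as the paper: reduce $\mathbf{H}_{\mathrm{o}}$ to $\frac{2}{L\dv}$ times the Gram matrix of the Jacobians (since $\partial^2\ell/\partial\bfun^2$ is a scaled identity for square loss), then simplify via the mixed-product property, with the commutation matrix $\K{\dk,\dv}$ arising from differentiating $\wk^\top$. The only cosmetic difference is that the paper imports the Jacobians of self-attention from \citet{noci2022signal} (\cref{lemma:gradients_of_self_attention}) rather than rederiving them by the chain rule as you do, but the factorization through $\Zm_1$ and the assembly of the blocks are identical.
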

To arrive at these expressions we start from \cref{eq:gauss_newton_self_attention}, insert $\partial^2 \ell/\partial^2 \bfun $, and use the self-attention Jacobians derived by \citet{noci2022signal}.
See \cref{ap:gradients_hessians} for details.
We note that the value diagonal block is structurally identical to an MLP's outer-product Hessian, while the query diagonal and the mixed query-key blocks display similar Kronecker structures, but with varying weight matrices.

\begin{restatable}{theorem}{functionalhessian}{\textbf{Functional Hessian $\mathbf{H}_{\mathrm{f}}$.}}
\label{th:self_attention_functional}
For the setup of \cref{th:self_attention_outer_product}, the functional Hessian w.r.t.\,the value weight matrix $\hfbSimple{\bfun}{\wv,\wv}$ is zero and the remaining blocks are given by
\begin{equation*}
\begin{aligned}
\label{eq:functional_full_weight_matrices}
    \hfbSimple{\bfun}{\wq,\wq} =& \dfrac{2}{L\dv\dk}\Rm_{\dv\dk}\left(\id{L} \otimes \wv^\top \otimes \id{\dv} 
 \otimes \wk^\top\right)\Zm_2 \left(\id{\dv} \otimes \wk \right),
 \\[1mm]
    \hfbSimple{\bfun}{\wv,\wq} =& \frac{2}{L\dv\sqrt{\dk}} \Rm_{\dv^2}\left(\id{L} \otimes \Sm \right) \, \Zm_1 \left(\id{\dv} \otimes \wk \right),
    \\[1mm]
    \hfbSimple{\bfun}{\wq,\wk} =& \dfrac{2}{L\dv\dk}\Rm_{\dv\dk}\left(\id{L} \otimes \wv^\top \otimes \id{\dv} 
 \otimes \wk^\top\right)\Zm_2 \left(\wq \otimes \id{\dv}\right) \K{\dk, \dv} \\ &+\frac{2}{L\dv\sqrt{\dk}}\Rm_{\dv}\left(\id{L} \otimes \wv^\top \otimes \id{\dv}\right)\left(\Zm_1 \otimes \id{\dv}\right)\Sm \otimes \id{\dk},
\end{aligned}
\end{equation*}
with the duplicated residual $\Rm_{m}:=\vecr{(\bfun(\Xm) - \by)}^\top \otimes \id{m} \in \M{m}{mL\dv}$, a shuffling matrix $\Sm := \left(\id{\dv} \otimes \K{\dv, \dv}\right)\left(\vecr{\id{\dv}} \otimes \id{\dv}\right) \in \M{\dv^3}{\dv}$ where $\K{\dv,\dv}$ is a commutation matrix (see e.g.\,\cref{lemma:kron_komm_id_swap}), $\Zm_1$ defined as in \cref{th:self_attention_outer_product}, and $\Zm_2 := (\id{L} \otimes \Xm^\top \otimes \Xm^\top \otimes \Xm^\top ) (\partial^2 \Am / \partial \Tm^2)(\Xm \otimes \Xm) \in \M{L\dv^3}{\dv^2}$ containing second-order softmax derivatives.
\end{restatable}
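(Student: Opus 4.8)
The plan is to evaluate each functional-Hessian block straight from its definition in \cref{eq:gauss_newton_self_attention}, $\hfbSimple{\bfun}{\bb{W}_i,\bb{W}_j} = \left(\der{\ell}{\bfun}\otimes\id{p_i q_i}\right)\partial^2\bfun/\partial\bb{W}_i\partial\bb{W}_j$. The left factor is universal across blocks: substituting the square-loss gradient $\der{\ell}{\bfun} = 2\,\vecr(\bfun(\Xm)-\by)^\top/(L\dv)$ turns $\der{\ell}{\bfun}\otimes\id{p_i q_i}$ into exactly $\tfrac{2}{L\dv}\Rm_{p_i q_i}$, which accounts for the duplicated-residual prefactors $\Rm_{\dv\dk},\Rm_{\dv^2},\Rm_{\dv}$ and the overall $2/(L\dv)$ scaling. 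The entire problem therefore reduces to computing the network's second derivative $\partial^2\bfun/\partial\bb{W}_i\partial\bb{W}_j$ for each relevant pair, using the stacked-Hessian construction of \cref{fig:stacked-hessian} (row-wise vectorization, numerator layout) so the output is a column-block matrix of the correct shape.

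I would organize the computation around the functional split $\bfun = \Am\Xm\wv$ with $\Am = a(\Tm)$ and $\Tm = \Xm\wq\wk^\top\Xm^\top/\sqrt{\dk}$. Since $\Am$ does not depend on $\wv$, the map $\bfun$ is affine in $\wv$, so $\partial^2\bfun/\partial\wv^2 = 0$ and hence $\hfbSimple{\bfun}{\wv,\wv}=0$ immediately. For the remaining blocks I would differentiate through $\Am$, reusing the first-order objects already assembled for \cref{th:self_attention_outer_product}: the Jacobians of \citet{noci2022signal} and the matrix $\Zm_1$, which packages the first softmax derivative $\partial\Am/\partial\Tm$ with the value-path and query--key $\Xm$ factors. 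Differentiating a second time brings in the second softmax derivative $\partial^2\Am/\partial\Tm^2$; wrapping it with the requisite $\Xm$ factors and vectorizations yields precisely $\Zm_2$.

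The three nonzero blocks would then be read off according to where the two derivatives land. For $\partial^2\bfun/\partial\wq^2$, both derivatives act on $\Am$, and because $\Tm$ is linear in $\wq$ for fixed $\wk$, the sole contribution is the softmax-curvature term carrying $\Zm_2$, dressed by $\Rm_{\dv\dk}(\id{L}\otimes\wv^\top\otimes\id{\dv}\otimes\wk^\top)$ on the left and $(\id{\dv}\otimes\wk)$ on the right. For the mixed $\partial^2\bfun/\partial\wv\partial\wq$, one derivative hits the affine value path---replacing the $\wv^\top$ factor seen in the $\wq,\wq$ block by the pure reshuffling $\Sm$ (via $\id{L}\otimes\Sm$)---while the other is a single softmax derivative supplying $\Zm_1$; this gives the one-term expression (and the single power of $1/\sqrt{\dk}$). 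The delicate block is $\partial^2\bfun/\partial\wq\partial\wk$, where $\Tm$ is \emph{bilinear} in $(\wq,\wk)$: two qualitatively distinct terms appear---a softmax-curvature term analogous to $\wq,\wq$ (hence $\Zm_2$, the commutation matrix $\K{\dk,\dv}$, and the $\wq$ factor) plus a ``direct'' term from differentiating the product $\wq\wk^\top$ once in each argument, which is where $\vecr\id{\dv}$ and thus $\Sm\otimes\id{\dk}$ enter.

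I expect the main obstacle to be the bookkeeping of vectorization conventions and the associated commutation/shuffling matrices, rather than any single hard identity. Taking a second derivative of a matrix-valued map with respect to matrix arguments nests two flattenings (\cref{fig:stacked-hessian}), and each reshuffle between the natural Kronecker order and the $\vecr$ order must be absorbed by a commutation matrix $\K{a,b}$; placing these correctly---and, in the $\wq,\wk$ block, cleanly separating the softmax-curvature contribution from the bilinear ``direct'' contribution that generates $\Sm$---is the crux. Keeping $\Zm_1,\Zm_2$ abstract (rather than expanding the softmax derivatives) should keep the algebra tractable, and I would verify each block's dimensions and cross-check against a small numerical autodiff Hessian.
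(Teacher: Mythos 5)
Your proposal is correct and follows essentially the same route as the paper's proof: split off the loss gradient as $\tfrac{2}{L\dv}\Rm_{p_iq_i}$, kill the $(\wv,\wv)$ block by linearity of $\bfun$ in $\wv$, and obtain the remaining blocks by differentiating the Jacobians of \citet{noci2022signal} a second time, with the chain rule through $\partial\Am/\partial\Tm$ producing the $\Zm_2$ terms, the Kronecker-product derivative of the value path producing $\id{L}\otimes\Sm$, and the product rule on the bilinear $(\wq,\wk)$ dependence producing the two-summand structure with $\Sm\otimes\id{\dk}$. You also correctly anticipate that the real work is the commutation-matrix bookkeeping, which the paper handles via its Lemmas \ref{lemma:kron_komm_id_swap} and \ref{lemma:x_extract}.
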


For \cref{th:self_attention_functional}, we differentiate the self-attention Jacobian, and multiply it with the square loss gradient $\partial \ell / \partial \bfun$. See \cref{ap:gradients_hessians} for details. Similarly to the MLP functional Hessian, the diagonal value block of the self-attention functional Hessian disappears. Again, structurally similar expressions appear in the query-key mixed term, and the query diagonal block, with the query-key mixed term being the most involved and consisting of two summands.

\textbf{Categorizing the constituent terms. } To simplify the analysis of \cref{th:self_attention_functional,th:self_attention_outer_product}, we divide the terms into four categories. First, the least important are layout matrices that copy or shuffle entries via Kronecker products with an identity or permutation matrix.
These serve to correctly arrange the entries in the Hessian from our matrix layout in \cref{eq:matrix-calculus-jacobian-and-hessian} and therefore we will often ignore them. 
Second, we have the matrices $\Zm_1, \Zm_2$, which contain first- and second-order derivatives of the softmax and have explicit dependencies on the data $\Xm$ (\cref{ssec:data-matrices}). 
Third, we identify first-order moments $\Mm_1$ of the attention matrix, as well as higher moments in the attention derivatives in $\Zm_{1,2}$ (\cref{ssec:attention-moment-matrices}).
Fourth, the Hessian depends on the weight matrices $\wv, \wq, \wk$ (\cref{ssec:weight-matrices}). 

As one of our main interests is the scaling behavior of blocks w.r.t.\,different dependencies, we will simplify expressions using the Landau notation from \citet[][Section 14]{martens2020new}, where
for two matrices $\Am, \Bm$, denoting $f\in\mathcal{O}(\Am\Bm)$ means all entries of $f$ are $\mathcal{O}(A_{i,j} B_{k,l})$ for any $i,j,k,l$.
Due to their boundedness to $[0, 1]$, we can ignore the softmax expressions in the attention matrix.

\subsection{Data Dependence Varies across Hessian Blocks}
\label{ssec:data-matrices}
\vspace{-1mm}
One characteristic of self-attention is that the data $\Xm$ enters multiple times: as keys, queries, and values. This leads to highly non-linear data dependencies in the Hessian. For brevity omitting the dependence of all the functional Hessian blocks on the residual $\smash{\ddelta:=\vecr{(\bfun(\Xm) - \by)}^\top}$, we find that for the expressions in \cref{th:self_attention_functional,th:self_attention_outer_product}
\begin{equation}
\label{tab:hessian_inter_data}
\hobeSimple{\bfun}
    \in
    \begin{blockarray}{cccc}
        & \textcolor{gray}{\text{Q}} & \textcolor{gray}{\text{K}} & \textcolor{gray}{\text{V}} \\
        \begin{block}{c[ccc]}
          \textcolor{gray}{\text{Q}} & \mathcal{O}(\Xm^6) & \mathcal{O}(\Xm^6) & \mathcal{O}(\Xm^4)
          \\[1mm]
          \textcolor{gray}{\text{K}} & \cdot & \mathcal{O}(\Xm^6) & \mathcal{O}(\Xm^4)
          \\[1mm]
          \textcolor{gray}{\text{V}} & \cdot & \cdot & \mathcal{O}(\Xm^2)
          \\
        \end{block}
    \end{blockarray}\,,
    \hfbeSimple{\bfun} 
    \in
    \begin{blockarray}{cccc}
        & \textcolor{gray}{\text{Q}} & \textcolor{gray}{\text{K}} & \textcolor{gray}{\text{V}} \\
        \begin{block}{c[ccc]}
          \textcolor{gray}{\text{Q}} & \mathcal{O}(\Xm^5) & \mathcal{O}(\Xm^5 + \Xm^3) &\mathcal{O}(\Xm^3)
          \\[1mm]
          \textcolor{gray}{\text{K}} & \cdot & \mathcal{O}(\Xm^5) & \mathcal{O}(\Xm^3)
          \\[1mm]
          \textcolor{gray}{\text{V}} & \cdot & \cdot & \mathcal{O}(1)
          \\
        \end{block}
    \end{blockarray}\,.
 \vspace{-2mm}
\end{equation}

 \textbf{Data heterogeneity. } Throughout different Hessian blocks, we observe a varying dependence on the embedding matrix $\Xm$.
 The most data-dependent blocks are the key and query ones.
 For the outer-product Hessian $\Hm_\text{o}$, each key and query weight contributes a cubic dependence on $\Xm$ while value weight matrix $\wv$ brings a linear dependence on $\Xm$.
 This observation confirms the finding of \citet{noci2022signal} who show $\lVert \partial \bfun / \partial \wv \rVert_\text{F} \in \mathcal{O}(\frob{\Xm})$, while
 $\smash{\lVert \partial \bfun / \partial \wk \rVert_\text{F}, \lVert \partial \bfun / \partial \wq \rVert_\text{F} \in \mathcal{O}(\frob{\Xm}^3)}$.
 The block-heterogeneous dependence is also visible in the functional Hessian $\Hm_\text{f}$, which includes an additional dependence on $\Xm$ through the residual $\ddelta$.
 If we ignore $\ddelta$, then the outer-product Hessian dominates the Hessian for large $\Xm$.

\textbf{Empirical validation. }
We test our theoretical derivations empirically using a single GPT-2 Transformer block~\citep{radford2019language} on a digit addition task adapted from \cite{quirke2024understanding}.
The problem is set up as a next-token prediction task, \textit{with cross-entropy (CE) loss} (see \cref{sec:experimental-setup})---a setting closer to a realistic Transformer application.
We initialize the embedding $\Xm$ from a distribution with varying standard deviation $\sigma$ and measure the Frobenius norm of the Hessian's diagonal blocks (\cref{fig:frob-no-ln}).
The growth rates from \cref{tab:hessian_inter_data} transfer to the dependence of the block Frobenius norm on $\sigma$. We omit the value functional Hessian block $\Hm_f(\wv, \wv)$ from the plot because it equals zero in both theory and our empirical evaluation.
For the value block, the full Hessian follows the $\sigma^2$ trend from the outer-product Hessian from \cref{tab:hessian_inter_data}, since the functional block is zero.
The query block follows the functional Hessian's $\sigma^5$ trend, and not the $\sigma^6$ trend from the functional Hessian, which is likely suppressed by other non-data dependencies. \textit{Overall, this confirms that although our theoretical results were derived for MSE loss and only a single self-attention layer, they extend faithfully to CE loss and a full Transformer block.}

\begin{figure}
    \centering
    
    \begin{subfigure}{0.505\linewidth}
    \includegraphics[width=\linewidth, trim=0 0 245pt 0, clip]{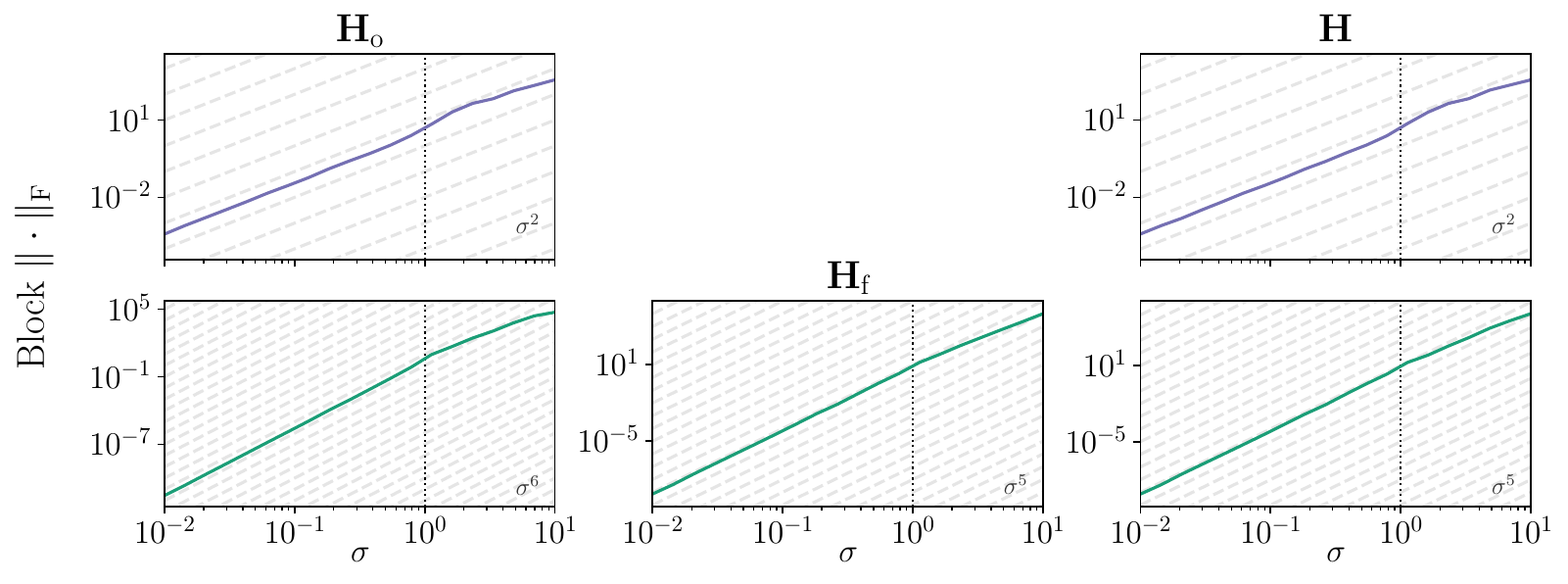}
     \begin{tikzpicture}[overlay, remember picture]
    \draw[VeryLightGray, thin, rounded corners=2pt] (4.3,3.0) rectangle (6.8,3.8);
    \draw[PurplePrint, thick] (4.5,3.6) -- (5.0,3.6) node[right, right] {\tiny\color{black}
    $\Hm(\wv, \wv)$
    };
\draw[GreenPrint, thick] (4.5,3.2) -- (5.0,3.2) node[right, right] {\tiny
\color{black}$\Hm(\wq, \wq)$};
    \end{tikzpicture}
    \vspace{-7mm}
    \caption{No LN. We omit $\hfbeSimple{\bfun}(\wv, \wv)$ as it equals zero.}
    \label{fig:frob-no-ln}
    \end{subfigure}
    \hspace{1mm}
    \textcolor{gray!50}{\vrule width 0.50pt height 4.2cm} 
     \begin{subfigure}{0.465\linewidth}\includegraphics[width=\linewidth, trim=40pt 0 245pt 0, clip]{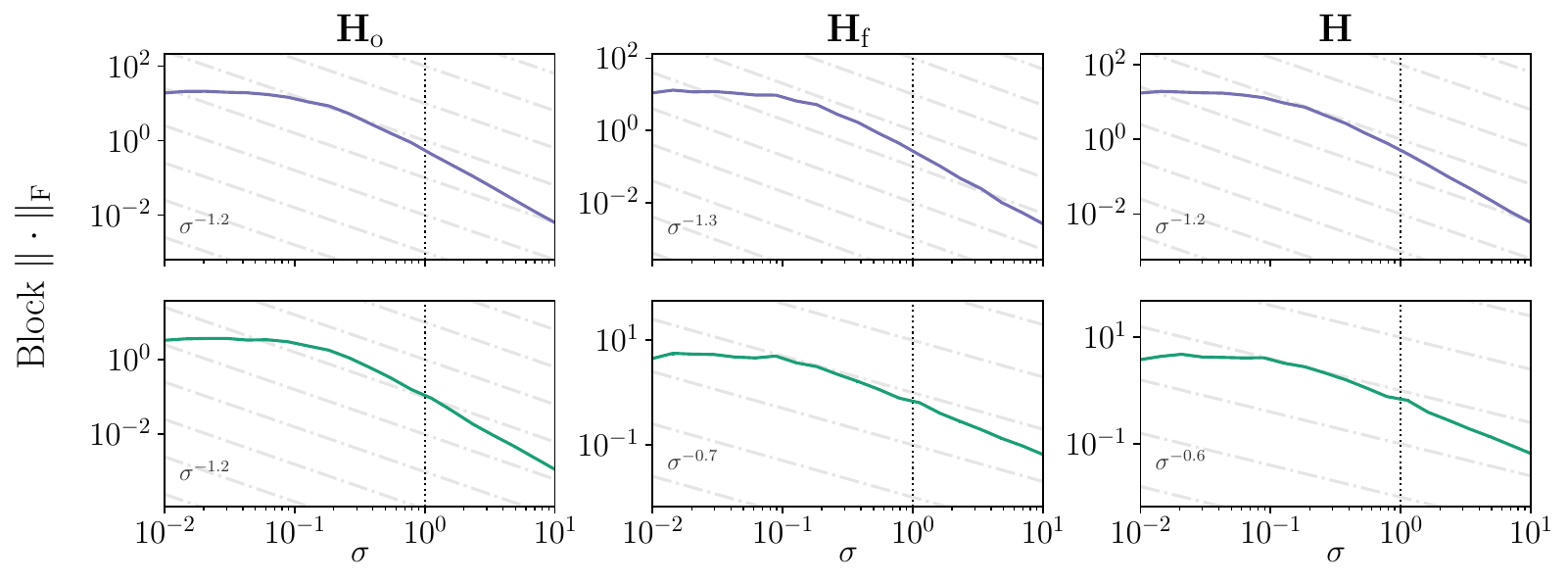}
     \begin{tikzpicture}[overlay, remember picture]
    \end{tikzpicture}
    \vspace{-7mm}
    \caption{Pre-LN.}
    \label{fig:frob-pre-ln}
    \end{subfigure}
    \vspace{-3mm}
    \caption{(Plotted in log-log scale.) {\bf  Empirical verification with a CE loss confirms derived growth rates w.r.t. magnitude $\sigma$ of $\Xm$ from \cref{tab:hessian_inter_data}.} We show the growth rates through the Frobenius norm $\|\cdot \|_\text{F}$ of {\color{PurplePrint} value} and {\color{GreenPrint} query} diagonal blocks. The dashed lines correspond to the trend (a) predicted by theory as in \cref{tab:hessian_inter_data}, (b) estimated from the Frobenius norm measurements on the log-log scale by the linear regression slope. For details on the experimental setting, see \cref{sec:experimental-setup}. $\sigma < 1$ (LHS of the vertical line) corresponds to practical values of $\sigma$.}\vspace{-7mm}
    \label{fig:block-frob-data-dependence}
\end{figure}

\subsection{Attention Moments Affect the Hessian}
\label{ssec:attention-moment-matrices}
\vspace{-0.5mm}
\textbf{Attention scores induce distributions. } Each row of the self-attention matrix $\Am(\Xm)$ is a normalized distribution over the input tokens, and we already saw the first moment $\Mm_1$ (i.e.\,multiplying the sequence by the attention matrix) of those distributions emerge in \cref{th:self_attention_functional,th:self_attention_outer_product}.
To further simplify the $\Zm_{1,2}$ matrices, we now introduce the natural generalization to higher-order centered attention moments. The $i$-th centered attention moment matrix $\Mm_i$ should be thought of as an $L \times \dv \times \ldots \times \dv$ tensor (where $\dv$ appears $i$ times) that stacks the $i$-th centered moments for each distribution, which is then flattened into a $L \dv^{i-1} \times \dv$ matrix.
In matrix notation, we have:
\begin{definition}
\textbf{Attention moment matrices.}
\label{def:att_first_moment}
Let $\Am\in \M{L}{L}$ be an attention matrix and $\Xm \in \M{L}{\dv}$ be the token embeddings. We define the first attention moment matrix as
    $$\Mm_1 := \Am\Xm = \begin{bmatrix}
    \matrow{\Am}{i}^\top\Xm
\end{bmatrix}_{1\leq i \leq L} \in \M{L}{\dv}.$$
Similarly, the second and the third central moment matrices are
\begin{align*}
    \Mm_2 :=& \begin{bmatrix}
    \sum_{j=1}^L\Am_{i,j}\left(\matrow{\Xm}{j}
    -
    [\Mm_1]_{i,:}
    \right)\left(\matrow{\Xm}{j}
    -
    [\Mm_1]_{i,:}
    \right)^\top
\end{bmatrix}_{1\leq i \leq L} \in \M{L\dv}{\dv} \\
\Mm_3 :=& \begin{bmatrix}
    \sum_{j=1}^L\Am_{i, j}(\matrow{\Xm}{j}
    -
    [\Mm_1]_{i,:}
    ) \otimes (\matrow{\Xm}{j}
    -
    [\Mm_1]_{i,:}
    )(\matrow{\Xm}{j}
    -
    [\Mm_1]_{i,:}
    )^\top
\end{bmatrix}_{1\leq i \leq L} \!\!\!\in \M{L\dv^2}{\dv}.\!\!\!\!
\end{align*}
\end{definition}

\textbf{Transformer Hessian dependency on attention moments. } 
\Cref{def:att_first_moment} allows us to further simplify the matrices $\Zm_{1,2}$ from  \cref{th:self_attention_outer_product,th:self_attention_functional}.
In addition to the dependency on the first attention moment matrix, we obtain dependencies on the second and third moment matrices.

\begin{restatable}{remark}{hessianmomentmatrices}
\label{remark:hessian_as_moment_matrices}
The data terms emerging in the self-attention Hessian can be expressed as functions of the self-attention central moment matrices (proof in \cref{ap:moments-general}),
    \begin{align*}
         \Zm_1 = \Xm * \Mm_2\,,
         \quad\text{and}\quad
         \Zm_2 = \left(\id{L} \otimes \K{\dv, \dv} \otimes \id{\dv}\right)\left(\Xm * \Xm^\top * \Mm_3\right)\,,
    \end{align*}
where $*$ is the Khatri-Rao product \citep{khatri1968solutions}, see \cref{def:khatri_rao_product}.
\end{restatable}

Regarding attention moments, the value outer-product Hessian depends on the first moment, while the query-key outer-product Hessians are influenced by the second central moment.
The query-key functional Hessian even exhibits a dependency on the third central moment.

\paragraph{Influence of the attention moment matrices on the Hessian.} If the attention scores were data-independent (similar to the uniform attention assumption in \cite{noci2022signal}), which happens almost surely at initialization for large $\dk$, sequences with more similar words will result in a lower contribution of the query-key block. Considering an orthogonal setting with a fixed input sequence and studying the influence of varying attention scores, the query-key outer-product Hessian will dominate if the attention scores are highly dispersed across tokens. If instead, the attention matrix is sparse and some tokens attend to only one token (attention row is a one-hot vector), the contribution of the query-key part of the Hessian diminishes because the second and third central moment matrices of such one-hot self-attention distributions equal zero. To the best of our knowledge, our work is the first one to notice the relationship between the self-attention Hessian and self-attention moments. Knowing this dependence can contribute to a better understanding and interpretability of the Hessian---especially its evolution during training.

\subsection{Dependence on the Weight Matrices}
\label{ssec:weight-matrices}

\textit{The blocks in the self-attention Hessian vary significantly when it comes to the dependence on the weight matrices.} Among the terms on the diagonal of the outer-product Hessian, only the query and key components explicitly (and quadratically) depend on the weights beyond their influence through the attention scores. This dependence is on both the value weight matrix and one of the key and query matrices. The mixed terms involving queries and keys depend linearly on the keys and queries weights and quadratically on the value weights. Finally, the mixed terms involving values depend linearly on the selected weight matrices.

Also, the blocks of the functional Hessian depend on the weight matrix in a varied manner. The first observation is that the diagonal value functional Hessian block is always zero. Furthermore, the diagonal query block depends quadratically on the key weight matrix and linearly on the value weight matrix. The mixed value-query block depends only linearly on the key weight matrix. 

\begin{tcolorbox}
\faLightbulbO \kern2mm
We find highly non-linear dependencies and block heterogeneity in terms of data, degrees of attention moments, and weights. We expect that identified sources of heterogeneity influence different algebraic properties, such as traces, norms, and eigenvalues of blocks---possibly explaining the varying block spectra, previously observed by \cite{zhang2024transformers}.
\end{tcolorbox}

\vspace{-2mm}
\section{Impact of Transformer Design Components on the Hessian}
\vspace{-0.5mm}

The Transformer architecture has several key components that distinguish it from classical models like MLPs and CNNs. In the previous section, we analyzed how data dependence is one big distinguishing factor between the Hessian for Transformers and classical architectures. Now, we proceed to the other two significant departures within a Transformer, namely, the effects of having  (i) a softmax nonlinearity within a layer block and (ii) a quadratic weight interaction through the query-key parameterization, by disabling them one at a time.
\subsection{Softmax Activation}
\label{sec:softmax-effects}
\vspace{-1mm}
Apart from the Transformer architecture, softmax is not a common choice for an internal activation function. It is usually used as the final layer activation in classification tasks to model a distribution. It is not an easy activation to work with, as it is prone to numerical instabilities and vanishing gradient problems \citet{Goodfellow-et-al-2016,noci2022signal,wang2021escaping}.

\textbf{Hessian of linear self-attention.}
To study the influence of softmax, we compare self-attention to \emph{linear} self-attention, i.e.\,$a = \text{id}$ and $\Tm(\Xm)=\Xm\wq\wk^\top\Xm^\top / \sqrt{\dk}$ in \cref{eq:self-attention}.
In \cref{th:self_attention_functional,th:self_attention_outer_product}, we now use that the first and second derivatives of $\Am$ w.r.t.\,$\Tm$ are $\id{\dv^2}$ and $\bb{0}$, respectively:\looseness=-1
\begin{remark}
\label{rem:hessian-no-softmax}
    Assume a linear self-attention $\bfun$ with $a=\mathrm{id}$ and $\Tm(\Xm)=\Xm\wq\wk^\top\Xm^\top / \sqrt{\dk}$ followed by square loss.
    Then the outer-product Hessian blocks are 
\begin{align*}
&\hobSimple{\bfun}{\wv,\wv} = \frac{2L^2}{\dv\dk}  \seqc\wk\wq^\top\seqc\wq\wk^\top\seqc \otimes \id{\dv},
\\
&\hobSimple{\bfun}{\wq,\wq}= \frac{2L^2}{\dv \dk} \seqc \otimes \wk^\top\seqc\wv\wv^\top\seqc\wk,
\\
& \hobSimple{\bfun}{\wv,\wq} = \frac{2L^2}{\dv\dk} \left( \seqc\wk\wq^\top\seqc \otimes \wv^\top \seqc\wk\right), 
\\
&\hobSimple{\bfun}{\wq,\wk} = \frac{2L^2}{\dv\dk} \left( \seqc\wq \otimes \wk^\top \seqc\wv\wv^\top \seqc\right)\K{\dk,\dv},
\end{align*}
where $\seqc := \Xm^\top\Xm / L$ is the empirical (uncentered) intra-sequence covariance. 

Moreover, only the off-diagonal functional Hessian blocks are non-zero and equal to 
    \begin{align*}
        \hfbSimple{\bfun}{\wv,\wq} &= \frac{2}{\dv\sqrt{\dk}}\left(\ddelta^\top\otimes \id{\dv^2}\right)\left(\id{L} \otimes \Sm\right)\left(\Xm\otimes \seqc \wk\right) \\
        \hfbSimple{\bfun}{\wq,\wk} &=\frac{2}{\dv\sqrt{\dk}}\left(\ddelta^\top\left(\Xm \otimes \wv^\top\seqc\right) \otimes \id{\dv}\right)\Sm \otimes \id{\dk}
    \end{align*}
    where $\ddelta := \vecr\left({\bfun(\Xm) - \Ym}\right)$ and $\Sm$ is defined as in \cref{th:self_attention_functional}.
\end{remark}
\vspace{-3mm}
Since linear self-attention is simple matrix multiplication, we observe a Kronecker product structure akin to that emerging from the outer product of matrix multiplication derivatives (\cref{eq:prod_derivative}). Moreover, we note that the functional Hessian has a block hollow structure, similar to the MLP functional Hessian as discussed in \cref{sec:setup-background}.\looseness=-1

\begin{SCfigure}
    \centering
    \begin{subfigure}{0.6\linewidth}
    \centering
\begin{tikzpicture}
    \draw[VeryLightGray, thin, rounded corners=2pt] (-0.7,-0.2) rectangle (4.2,0.2);
    \draw[PurplePrint, thick] (-0.5,0) -- (0.0,0) node[right, right] {\tiny\color{black}
    $\Hm(\wv, \wv)$
    };
\draw[GreenPrint, thick] (1.9,0) -- (2.4,0) node[right, right] {\tiny
\color{black}$\Hm(\wq, \wq)$};
\end{tikzpicture}         \includegraphics[width=\linewidth]{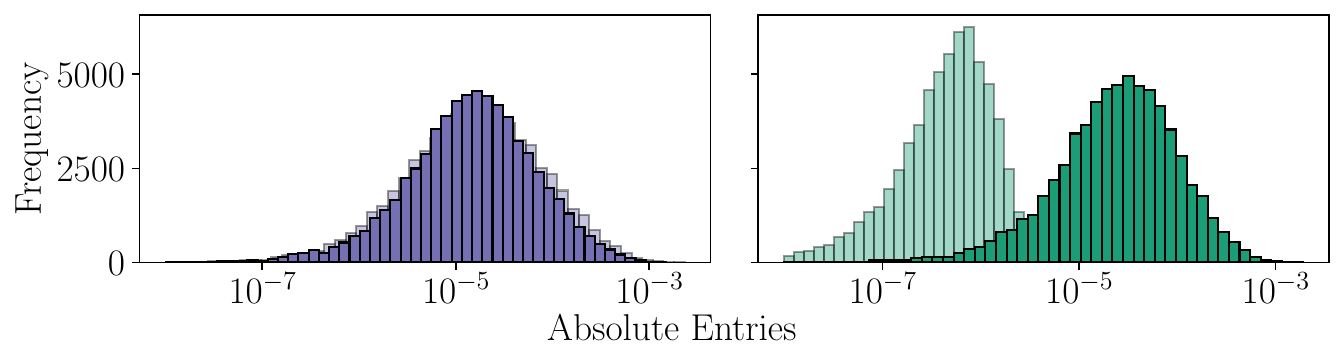}
 \begin{tikzpicture}[overlay, remember picture]
 \draw[->, red, thick] (-1.72,2.2) -- (-1.4,2) 
        node[ above, black] at (-1.72,2.13)  {\color{red} \scriptsize linear};

        \draw[->, red, thick] (-0.4,1.9) -- (-0.7,1.6) 
        node[ above, black] at (-0.4,1.85)  {\color{red} \scriptsize classical};
 
        \draw[->, red, thick] (3.45,2.2) -- (3.1,2) 
        node[ above, black] at (3.5,2.15)  {\color{red} \scriptsize linear};

        \draw[->, red, thick] (1.2,2.2) -- (1.54,2.0) 
        node[ above, black] at (1.15,2.15)  {\color{red} \scriptsize classical};
    \end{tikzpicture}
    \end{subfigure}
    \caption{{\bf Softmax results in \mbox{heterogeneity} in magnitudes of Hessian block entries.} Histogram of (absolute) entries corresponding to the \textbf{\textcolor{PurplePrint}{value} }and \textbf{\textcolor{GreenPrint}{query} }diagonal Hessian blocks of a single block Transformer. High and low saturation correspond to linear and classical self-attention respectively.}\vspace{-5.5mm}
    \label{fig:entries-magnitude-no-stx}
\end{SCfigure}

\textbf{Softmax makes the Hessian block-heterogenous.}
 Removing the softmax from the self-attention definition makes the dependence of the Hessian blocks on the data more uniform across blocks. All blocks of the outer-product Hessian depend cubically on $\seqc$, while the non-zero functional Hessian blocks depend linearly on $\Xm \otimes \seqc$. Similarly, the number of weight matrices entering the expressions for the blocks for the Hessian without softmax is the same for the outer-product Hessian and across non-zero blocks of the functional Hessian. 
 
 \Cref{fig:entries-magnitude-no-stx} empirically demonstrates that removing softmax from self-attention makes the magnitudes of the Hessian entries more homogeneous across blocks. For the classical attention (see also \cref{fig:entries-magnitude-stx}), we see that the distribution of Hessian block entries varies between query and value blocks---the entries of the query block are two orders of magnitude smaller than the entries of the value block. We could explain this by $\Xm$ being initialized to values $<1$ and the query Hessian block having a higher order dependence on $\Xm$ than the value Hessian block (see \cref{ssec:data-matrices}). After removing the softmax (\cref{fig:entries-magnitude-no-stx}), we see that both query and value histograms are largely similar as we would expect from \cref{rem:hessian-no-softmax}. \Cref{fig:entries-magnitude-no-stx} also informs us that softmax has a particularly strong influence on the magnitude of the query Hessian block and not on the value Hessian block.

 \textbf{Softmax makes the Hessian diagonal blocks more indefinite.} When we remove the softmax from the self-attention definition, the functional Hessian has a block-hollow structure with zero blocks on the diagonal. This means that the Hessian blocks are fully defined by the outer-product Hessian, which ensures that they are positive-semidefinite. \cite{park2022vision} empirically noticed the more indefinite nature of Transformer Hessian compared to CNNs---our observation suggest that the reason for that could lie in the ubiquitous use of softmax in Transformers.

\textbf{Linear self-attention vs MLP.} The loss Hessian of an MLP and CNN has been previously theoretically studied in the setting without nonlinearities, so analyzing the linear self-attention Hessian lets us compare the two in a similar setting. The Transformer Hessian is much more data-dependent than that of MLPs and CNNs, and we summarize this in \cref{tab:linear_hessian_dat_dependence} using the matrix big-$\mathcal{O}$ notation. 

\vspace{-2mm}
\begin{table}[!h]
\begin{minipage}{0.43\linewidth}
\centering
\label{tab:linear_hessian_dat_dependence}
\begin{tabular}{ccc}
\toprule
& MLP/CNN               & Transformer                    \\ \midrule
$\hobeSimple{\bfun}$ & $\mathcal{O}(\seqc)$  & $\mathcal{O}(\seqc^3)$         \\[1mm]
$\hfbeSimple{\bfun}$ & $\mathcal{O}(\seqom)$ & $\mathcal{O}(\seqom\seqc)$\\
\bottomrule
\end{tabular}
\end{minipage}
\hfill
\begin{minipage}{0.56\linewidth}
\caption{Dependence of the Hessian of a linear self-attention layer on the intra-sequence covariance $\seqc$ and the input-residual covariance $\seqom:=\frac{1}{L}\Xm^\top(\bfun(\Xm) - \Ym)$ in big-$\mathcal{O}$ notation (\cref{rem:hessian-no-softmax}).}
\vspace{-2ex}
\end{minipage}
\end{table}
\vspace{-2mm}
One crucial difference is the asymptotic growth rate w.r.t.\,depth.
If we stack $D$ linear self-attention layers, the block-diagonal matrices of the Hessian will contain $3^{D}$ input intra-sequence covariances $\seqc$. This result follows directly from the fact that a $D$-layer linear self-attention network, as well as its Jacobian w.r.t.\,any weight matrix, is a matrix chain involving $3^D$ input sequence matrices $\Xm$ due to the recurrence that the output of a layer is used three times by the consecutive layer. \emph{This is in stark contrast to a deep linear MLP}, whose Jacobian is only linear in $\bb{x}$, and therefore a Hessian diagonal block contains only $2$ data instances, independent of depth.

\subsection{Query-Key Parameterization}
\label{ssec:attention-single-matrix}
\vspace{-1mm}
In practice, self-attention 
parameterizes $\Tm$ using two matrices $\wq$ and $\wk$. This ensures having clearly defined, interpretable token embeddings on the self-attention level and for small $\dk$ results in fewer parameters. From a function class perspective, we could equivalently replace the product $\wq\wk^\top$ with a single matrix $\wqk$. However, this changes the landscape of loss.

\textbf{Query-key parameterization induces additional Hessian decomposition.}  From \cref{th:self_attention_functional} we know that we can further decompose the query-key blocks of the functional Hessian of classical self-attention. We present this in detail in \cref{ap:quey-key-eigenvalues}. This decomposition and specifically $\Tm$\nobreakdash-functional Hessian from \cref{remark:T_decomposition} are by-products of parameterizing the self-attention matrix with two matrices $\wq$ and $\wk$. To see that, let us consider a self-attention parameterized with a single matrix as another control model. In the definition of self-attention from \cref{eq:self-attention} we assume $a=\sm$ applied row-wise and $\Tm(\Xm) = \Xm\wqk\Xm^\top$. In \cref{lemma:single-matrix-param-hessian} we present the Hessian of this model w.r.t. $\wqk$.
\begin{restatable}{lemma}{singlematrixparam}\textbf{Hessian of self-attention with single matrix attention parameterization.}
\label{lemma:single-matrix-param-hessian}
Assume the self-attention definition from \cref{eq:self-attention}, where $a=\sm$ is applied row-wise, and $\Tm(\Xm)=\Xm\wqk\Xm^\top$ is followed by an MSE loss function.
The loss Hessian w.r.t. $\wqk$ is
\begin{equation*}
\begin{aligned}
    &\Hm(\wqk,\wqk)  = \Zm_1^\top\left(\id{L}\otimes \wv\wv^\top\right)\Zm_1 + \left(\ddelta^\top(\id{L} \otimes \wv^\top ) \otimes \id{\dv^2}\right)\Zm_2
\end{aligned} 
\end{equation*}
where $\Zm_1, \Zm_2, \ddelta$ are defined as in \cref{th:self_attention_outer_product,th:self_attention_functional,remark:T_decomposition}.
\end{restatable}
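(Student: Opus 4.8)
The plan is to specialize the Gauss-Newton decomposition of \cref{eq:gauss_newton_self_attention} to the single split $\ell \circ \bfun$ with $\bfun(\Xm) = \Am(\Xm)\Xm\wv$, $\Am = \sm(\Tm)$ and $\Tm(\Xm) = \Xm\wqk\Xm^\top$, and to read off the two claimed summands as the outer-product and functional Hessian blocks w.r.t.\,$\wqk$. The structural feature that makes the single-matrix case cleaner than the two-matrix query-key block of \cref{th:self_attention_functional} is that $\Tm$ is \emph{linear} in $\wqk$: its Jacobian $\der{\Tm}{\wqk} = \Xm \otimes \Xm$ is constant and its second derivative $\partial^2 \Tm / \partial \wqk^2$ vanishes. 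Both summands then inherit the factor $\Xm \otimes \Xm$ verbatim, which is exactly what lets us recycle $\Zm_1$ and $\Zm_2$.

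First I would compute the Jacobian $\der{\bfun}{\wqk}$. Since $\Xm\wv$ is independent of $\wqk$, the only dependence enters through $\Am$; writing $\bfun = \id{L}\,\Am\,(\Xm\wv)$ and applying the row-vectorization identity $\vecr(\bb{P}\bb{Q}\bb{R}) = (\bb{P}\otimes\bb{R}^\top)\vecr\bb{Q}$ together with the chain rule through $\Tm$ gives $\der{\bfun}{\wqk} = (\id{L}\otimes\wv^\top\Xm^\top)\,\der{\Am}{\Tm}\,\der{\Tm}{\wqk}$. Substituting $\der{\Tm}{\wqk} = \Xm \otimes \Xm$ and factoring $\id{L}\otimes\wv^\top\Xm^\top = (\id{L}\otimes\wv^\top)(\id{L}\otimes\Xm^\top)$ via the mixed-product property identifies $\der{\bfun}{\wqk} = (\id{L}\otimes\wv^\top)\Zm_1$. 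Inserting this into the outer-product term with $\partial^2\ell/\partial\bfun^2 \propto \id{L\dv}$ and collapsing $(\id{L}\otimes\wv)(\id{L}\otimes\wv^\top) = \id{L}\otimes\wv\wv^\top$ yields the first summand $\Zm_1^\top(\id{L}\otimes\wv\wv^\top)\Zm_1$.

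For the functional block I would differentiate $\der{\bfun}{\wqk}$ once more. Writing it as $\bb{C}_1\,\der{\Am}{\Tm}\,\bb{C}_2$ with constant $\bb{C}_1 = \id{L}\otimes\wv^\top\Xm^\top$ and $\bb{C}_2 = \Xm\otimes\Xm$, only the middle factor carries $\wqk$-dependence; vectorizing through $\vecr(\bb{C}_1\bb{G}\bb{C}_2) = (\bb{C}_1\otimes\bb{C}_2^\top)\vecr\bb{G}$ and chaining the second softmax derivative against $\der{\Tm}{\wqk}$ gives $\partial^2\bfun/\partial\wqk^2 = (\bb{C}_1\otimes\bb{C}_2^\top)(\partial^2\Am/\partial\Tm^2)(\Xm\otimes\Xm)$, with no additional term precisely because $\partial^2\Tm/\partial\wqk^2 = 0$ (in the two-matrix case the dependence of $\der{\Tm}{\wq}$ on $\wk$ is what produces the extra summand in \cref{th:self_attention_functional}). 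Contracting with $\der{\ell}{\bfun}\otimes\id{\dv^2} \propto \ddelta^\top\otimes\id{\dv^2}$ and repeatedly applying the mixed-product property to absorb $\ddelta^\top(\id{L}\otimes\wv^\top)$ and the two trailing $\Xm^\top$ factors into the prefactor of $\Zm_2$ recovers the second summand.

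The main obstacle is not conceptual but the matrix-calculus bookkeeping: keeping the row-wise vectorization and numerator layout consistent, correctly tracking Kronecker and commutation structure when $\partial^2\Am/\partial\Tm^2$ is chained against $\der{\Tm}{\wqk}$, and verifying that the regrouped prefactor $(\ddelta^\top(\id{L}\otimes\wv^\top)\otimes\id{\dv^2})(\id{L}\otimes\Xm^\top\otimes\Xm^\top\otimes\Xm^\top)$ equals $(\ddelta^\top(\id{L}\otimes\wv^\top\Xm^\top))\otimes(\Xm^\top\otimes\Xm^\top)$ under the mixed-product property. The cleanest route is to reuse the self-attention Jacobian and softmax-derivative identities already established for \cref{th:self_attention_outer_product,th:self_attention_functional}, specializing $\der{\Tm}{\wqk}=\Xm\otimes\Xm$ and $\partial^2\Tm/\partial\wqk^2=0$ rather than redoing the chain rule from scratch.
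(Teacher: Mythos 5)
Your proposal is correct and follows essentially the same route as the paper: state the Jacobian $\partial\bfun/\partial\wqk = (\id{L}\otimes\wv^\top\Xm^\top)(\partial\Am/\partial\Tm)(\Xm\otimes\Xm)$ via the chain rule and the matrix-product derivative, differentiate it once more through the only $\wqk$-dependent factor $\partial\Am/\partial\Tm$ (using that $\partial\Tm/\partial\wqk=\Xm\otimes\Xm$ is constant), and plug both pieces into the Gauss--Newton decomposition, regrouping Kronecker factors by the mixed-product property to surface $\Zm_1$ and $\Zm_2$. The prefactor identity you flag for verification does hold by the mixed-product property, exactly as in the paper's derivation.
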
\vspace{-1.5mm}
\vspace{-1.5mm}
The expression for the Hessian $\Hm(\wqk,\wqk)$ is part of the $\Tm$\nobreakdash-outer-product Hessian of the classical self-attention in \cref{remark:T_decomposition}, while there is no counterpart expression to the $\Tm$\nobreakdash-functional Hessian. Double matrix parameterization of self-attention implies also that the query-key Hessian part explicitly depends on the query and key weight matrices $\wq$ and $\wk$.

\subsection{Other Common Design Choices}
\vspace{-0.5mm}
\textbf{The influence of temperature on the Hessian terms varies across blocks.}
Assume that in the definition of self-attention from \cref{eq:self-attention} we employ the classical self-attention but with an additional temperature scaling, namely $\Tm(\Xm) = \Xm\wq\wk^\top\Xm^\top/ (t\sqrt{\dk})$. This impacts the scaling factors in the $\Tm$\nobreakdash-Gauss-Newton decomposition from \cref{remark:T_decomposition}. The $\Tm$\nobreakdash-outer-product Hessian $\hobe{\Tm}$ will be scaled by $1 / t^2$, and the $\Tm$\nobreakdash-functional Hessian $\hfbe{\Tm}$ by $1 / t$.
To see that, note that $t$ is just an extra multiplier in front of $\sqrt{\dk}$ and the formulas for $\hfbe{\Tm}$ and $\hobe{\Tm}$ depend on $1 / \sqrt{\dk}$ linearly and quadratically, respectively. This implies that as $t$ grows, $\hfbe{\Tm}$ dominates the query-key part of the Hessian, and when $t\rightarrow 0$, $\hobe{\Tm}$ becomes the more prominent part.\looseness=-1

\textbf{Layer norm can reduce inter-block Hessian data heterogeneity.} Layer norm is used either in the original Post-LN version, where it is applied between residual blocks, or in the Pre-LN version (see \citet{baevskiadaptive, wang2019learning, xiong2020layer}), where it is placed inside the residual connection and additionally after the final layer. The heavy dependence of the Hessian on the input matrix $\Xm$, growing super-exponentially with network depth (see \cref{sec:softmax-effects}), means that, unless the data is standardized, we can observe exploding or vanishing phenomena, which will be more pronounced than in MLPs. This highlights the importance of layer norm in the Transformer architecture. Moreover, the proper placement of layer norm, as in the Pre-LN setting, addresses the data heterogeneity across Hessian blocks. In \cref{fig:frob-pre-ln} we plot the Frobenius norm of the Hessian blocks for the Transformer block including the Pre-LN. We
verify that Pre-LN indeed addresses the block-heterogeneity w.r.t. data growth rates---the difference
between the trend exponent in the two compared blocks is much smaller than in \cref{fig:block-frob-data-dependence}.
\vspace*{-3mm}
\section{Conclusion}
\vspace*{-1.5mm}
\textbf{Summary.}
In this work, we theoretically derived the entire structure of the self-attention Hessian and discussed, in detail, how it behaves.
In particular, \textit{we explicitly characterized that the self-attention Hessian blocks have heterogeneous dependence on the data, weight, and degree of the attention moment matrices.} 
Further, we identified that Transformer-specific design decisions, such as query-key parameterization and softmax, result in a more non-linear and heterogeneous Hessian structure. Thus, theoretical works should be mindful of not discarding these design choices for the sake of mathematical convenience, as they can significantly alter the Hessian structure across blocks.\looseness=-1

\textbf{Discussion.} Our results contribute to the theoretical and practical discussion:

\vspace{-0.3mm}
\textbf{(i) } \textit{Understanding Transformer optimization.} The research community has been exploring why optimizing Transformers is more challenging than other architectures~\citep{zhang2025adammini,xiong2020layer,liu2019variance,pan2022toward}. \cite{ahn2023linear} proposed a linear self-attention model with a single weight matrix parameterization to examine the Transformer's loss landscape. Using this simplified model, they reproduced key optimization phenomena, such as the performance gap between Adam and SGD. Our work provides a Hessian-based perspective on this model, providing new hypotheses on which components may drive Transformer optimization challenges.

\vspace{-0.3mm}
\textbf{(ii) } \textit{Transformer-specific optimizers.} We believe that analyzing the block-diagonal structure can be beneficial for developing optimization algorithms specifically tailored to Transformer models. For example, there is evidence \citep{zhang2024transformers,zhang2025adammini} suggesting that the memory consumption of adaptive optimizers can be significantly reduced by assigning a single adaptive learning rate to parameters corresponding to a homogeneous block on the Hessian diagonal. Knowing the exact block structure in self-attention layers, we could now use it to adapt the learning rate.

\textbf{Limitations.}
While our theoretical setting is limited to a simple, single-layer\footnote{\Cref{th:self_attention_outer_product,th:self_attention_functional} directly generalize to the last layer of any deep self-attention network if we replace the data matrix $\Xm$ with the output from the penultimate layer. Moreover, thanks to the chain rule, the second derivatives of self-attention we derive are useful for the analysis of Transformers of any depth.} model, we saw that it displays a rich algebraic structure that is contained within the Hessian. Nevertheless, extending it to multi-layer networks resembling fully-fledged Transformers, at least along some specific axes, would be interesting. Moreover, since usually the embedding matrix $\Xm$ is also learned, it would be worth studying its Hessian blocks.

\vspace{-1mm}
\textit{Notwithstanding, our work lays a crucial foundation for the theoretical analysis of the Transformer Hessian---to our knowledge, we are the first to study its exact expressions and their dependencies.\looseness=-1}

\section*{Reproducibility Statement} We attach proofs of all theorems, lemmas, and more involved remarks presented in this manuscript in the appendix. Specifically, \Cref{ap:defs-props} discusses prerequisites, and \Cref{ap:gradients_hessians,ap:moments-general} outline the proofs. \Cref{sec:experimental-setup} contains details on the experimental setup. The code used to generate numerical results is available at: \href{https://github.com/dalab/transformer-hessian}{https://github.com/dalab/transformer-hessian}.
\section*{Acknowledgments} We would like to thank Thomas Hofmann, Antonio Orvieto, and the members of the DALab for their insightful comments. We also thank Michael Vollenweider for proofreading the manuscript. Felix Dangel would like to recognize that resources used in preparing this research were provided, in part, by the Province of Ontario, the Government of Canada through CIFAR, and companies sponsoring the Vector Institute. Sidak Pal Singh would like to acknowledge the financial support from Max Planck ETH Center for Learning Systems.

\bibliography{iclr2025_conference}
\bibliographystyle{iclr2025_conference}
\newpage
\appendix
\section*{Contents}
\startcontents[sections]
\printcontents[sections]{l}{1}{\setcounter{tocdepth}{2}}
\vspace{2em}

\section{Known Definitions and Properties}
\label{ap:defs-props}
This manuscript extensively uses the Kronecker product, as introduced in \cref{def:kronecker_product}. Some of our results rely on the generalization of the Kronecker product named Khatri–Rao product from \cref{def:khatri_rao_product}. This section lists matrix calculus and linear algebra properties used in this manuscript.

\begin{definition}\textbf{Kronecker product.}
\label{def:kronecker_product}
    Let \( \Am \in \M{m}{n}\) and \( \Bm \in \M{p}{q} \) matrix. The Kronecker product \( \Am \otimes \Bm \in \M{mp}{nq}\) is a block matrix given by:
\[
\Am \otimes \Bm = \begin{pmatrix}
a_{11}\Bm & a_{12}\Bm & \cdots & a_{1n}\Bm \\
a_{21}\Bm & a_{22}\Bm & \cdots & a_{2n}\Bm \\
\vdots & \vdots & \ddots & \vdots \\
a_{m1}\Bm & a_{m2}\Bm & \cdots & a_{mn}\Bm
\end{pmatrix}.
\]

\end{definition}

\begin{definition}\textbf{Khatri–Rao product \cite{liu1999matrix}.}
\label{def:khatri_rao_product}
    Let \( \Am \) and $\Bm$ be block matrices with $n \times m$ blocks. The Khatri–Rao product \( \Am * \Bm \) is a block matrix with $n\times m$ blocks defined as:
\[
\Am * \Bm = (\Am_{i, j} \otimes \Bm_{i, j})_{i,j} = \begin{pmatrix}
\Am_{1,1} \otimes \Bm_{1,1} &  \cdots & \Am_{1,m} \otimes \Bm_{1,m} \\
\vdots &  \ddots & \vdots \\
\Am_{n,1} \otimes \Bm_{n,1} &  \cdots & \Am_{n,m} \otimes \Bm_{n,m} \\
\end{pmatrix},
\]
where \( \Am_{l,k} \) and \( \Bm_{l,k} \) are the blocks from $l^{\text{th}}$ block row and $k^{\text{th}}$ block column of $\Am$ and $\Bm$, respectively.
\end{definition}

\newpage
\paragraph{Basic Kronecker product properties.} Here we list some useful properties of the Kronecker product. A discussion on the Kronecker product and proofs of the listed properties can be found in \cite{magnus2019matrix}. For all the listed properties, we assume that the matrices are of the appropriate dimensions to perform the operations.

    \begin{equation}
    \label{eq:kronecker_addition}
    \Am \otimes (\Bm+\Cm) = \Am \otimes \Bm + \Am \otimes \Cm \; \text{  and  } \;
    (\Bm+\Cm) \otimes \Am  = \Bm \otimes \Am + \Cm \otimes \Am
    \end{equation}
    \begin{equation}
    \label{eq:kronecker_scalar}
    (k\Am) \otimes \Bm = \Am \otimes (k\Bm) = k(\Am \otimes \Bm), \; k \in \R \end{equation}
    \begin{equation}
    \label{eq:kronecker_ass}
        (\Am \otimes \Bm) \otimes \Cm = \Am \otimes (\Bm \otimes \Cm)
    \end{equation}
    \begin{equation}
        \label{eq:kronecker_mixed_product}
        (\Am \otimes \Bm)(\Cm \otimes \Dm) = (\Am\Cm) \otimes (\Bm\Dm)
    \end{equation}
    \begin{equation}
        \label{eq:kronecker_transposition}
        (\Am \otimes \Bm)^\top = \Am^\top \otimes \Bm^\top
    \end{equation}

For $\bb{a}, \bb{b}$ being column vectors 
\begin{equation}
    \label{eq:kronecker_vectors}
    \bb{a}^\top \otimes \bb{b} = \bb{b}\bb{a}^\top = \bb{b} \otimes \bb{a}^\top
\end{equation}

\paragraph{Vector-matrix Kronecker product.} Assume that $\Am$ and $\Bm$ can be multiplied. Based on the mixed product property and the fact that taking a Kronecker product of a matrix and a scalar is equivalent to only scaling matrix entries we can write
\begin{equation}
    \label{eq:kron_vector_extraction}
    \Am(\bb{v}^\top \otimes \Bm) = (1\otimes\Am)(\bb{v}^\top \otimes \Bm)= (\bb{v}^\top \otimes \Am\Bm).
\end{equation}

\paragraph{Vectorization \& Kronecker product.} Assume that $\Am \in \M{m}{n}$, then
\begin{equation}
    \label{eq:vect_kron_vector_extraction}
    \vecr{\Am} = \left(\Am \otimes \id{n}\right)\vecr{\id{n}}.
\end{equation}
\begin{proof}
    For a column-wise vectorization, theorem 2.2 from \cite{magnus2019matrix} tells us that for a matrix $\Bm\in \M{p}{q}$ it holds that $\vecc{\Bm} = \left(\id{q}\otimes \Bm\right)\vecc{\id{q}}$. Hence, we can write
    \begin{align*}
        \vecr{\Am} 
        &= \K{m,n}\vecc{\Am} = \K{m,n}\left(\id{n}\otimes\Am\right)\vecc{\id{n}} \\
    &=\left(\Am\otimes\id{n}\right)\K{n,n}\vecc{\id{n}} = \left(\Am\otimes\id{n}\right)\vecr{\id{n}}.
    \end{align*}
\end{proof}

\paragraph{Product of a block-diagonal matrix and a Kronecker product.} Let for any $i \in \{1, \dots, m\}$ $\Am \in \M{q}{r}$, $\Bm \in \M{m}{n}$ and $\Cm \in \M{r}{t}$. Then, thanks to the vector-matrix Kronecker product property (\cref{eq:kron_vector_extraction}) the following expression holds

\begin{equation}
    \begin{aligned}
    \label{eq:block_diag_kronecker}
    \bd{(\Am_i)}\left(\Bm\otimes\Cm\right) &= \begin{bmatrix}
        \Am_1 &\dots &\bb{0} \\
        \vdots &\ddots &\vdots \\
        \bb{0} &\dots &\Am_m \\
    \end{bmatrix}\left(\begin{bmatrix}
        \matrow{\Bm}{1}^\top \\
        \vdots  \\
        \matrow{\Bm}{m}^\top \\
    \end{bmatrix} \otimes \Cm\right) \\
    &= \begin{bmatrix}
        \Am_1 &\dots &\bb{0} \\
        \vdots &\ddots &\vdots \\
        \bb{0} &\dots &\Am_m \\
    \end{bmatrix}\begin{bmatrix}
        \matrow{\Bm}{1}^\top  \otimes \Cm\\
        \vdots  \\
        \matrow{\Bm}{m}^\top  \otimes \Cm\\
    \end{bmatrix} = 
    \begin{bmatrix}
        \Am_1\left(\matrow{\Bm}{1}^\top  \otimes \Cm \right)\\
        \vdots  \\
        \Am_m\left(\matrow{\Bm}{m}^\top  \otimes \Cm \right)\\
    \end{bmatrix} \\
    & =
    \begin{bmatrix}
        \matrow{\Bm}{1}^\top  \otimes 
 \Am_1\Cm\\
        \vdots  \\
        \matrow{\Bm}{m}^\top  \otimes\Am_{m}\Cm\\
    \end{bmatrix}.
\end{aligned}
\end{equation}
In the above equation $\matrow{\Bm}{i}$ is the $i$-th row of $\Bm$ in column vector format.
\paragraph{Derivative of a matrix product.} If $\bfun(\bx) = \bb{A}\bx\bb{B}$, then 
    
\begin{equation}
\label{eq:prod_derivative}
    \nabla_\bx \bfun = \bb{A} \otimes \bb{B}^\top.  
\end{equation}

The proof of the above formula can be found in \cite{singh2021analytic}.

\paragraph{Derivative of a Kronecker product.} Let's take $\bfun (\bx) = \bx \otimes \by$, where $\bx \in \M{n}{q}$ and $\by \in \M{p}{r}$. Then 
\begin{equation}
\label{eq:kron_derivative}
    \nabla_\bx \bfun = \left(\id{n} \otimes \K{p,q} \otimes \id{r} \right)\left(\id{nq} \otimes \vecr \bb{Y} \right),
\end{equation}
and analogously

\begin{equation}
\label{eq:kron_derivative_last}
    \nabla_\by \bfun = \left(\id{n} \otimes \K{p,q} \otimes \id{r} \right)\left(\vecr \bx \otimes \id{pr} \right).
\end{equation}

\begin{proof}
    The proof makes use of the same formula but for the column-wise definition of derivative that can be found in \cite{magnus2019matrix} and states that the column-wise derivative of $\bfun$ w.r.t. $\bx$ is given by $\left(\id{q} \otimes \K{r,n} \otimes \id{p} \right)\left(\id{nq}\otimes \vecc{\Ym}\right)$. We will use the above formula with $\bx^\top, \Ym^\top$ instead of $\bx, \Ym$ respectively together with the first identification theorem (theorem 5.6 from \cite{magnus2019matrix}).

    \begin{align*}
        d(\vecr{(\bx \otimes \Ym)}) & = d(\vecc{(\bx \otimes \Ym)^\top}) = d(\vecc{(\bx^\top \otimes \Ym^\top)}) \\
        & = \left(\id{n} \otimes \K{p,q} \otimes \id{r} \right)\left(\id{nq}\otimes \vecc{\Ym^\top}\right)d(\vecc{\bx^\top}) \\
        & = \left(\id{n} \otimes \K{p,q} \otimes \id{r} \right)\left(\id{nq}\otimes \vecr{\Ym}\right)d(\vecr{\bx}),
    \end{align*}

    so again by the first identification theorem, we obtain \cref{eq:kron_derivative}.

    To prove \cref{eq:kron_derivative_last} we follow similar steps as above and use the formula for column-wise derivative of $\bfun$ w.r.t. $\Ym$ from \cite{magnus2019matrix}, namely $\left(\id{q} \otimes \K{r,n} \otimes \id{p}\right)\left(\vecc{\bx} \otimes \id{pr}\right)$. As for the previous formula, we again use the first identification theorem.

    \begin{align*}
        d(\vecr{(\bx \otimes \Ym)}) & = d(\vecc{(\bx \otimes \Ym)^\top}) = d(\vecc{(\bx^\top \otimes \Ym^\top)}) \\
        & = \left(\id{n} \otimes \K{p,q} \otimes \id{r} \right)\left(\vecc{\bx^\top} \otimes \id{pr} \right)d(\vecc{\Ym^\top}) \\
        & = \left(\id{n} \otimes \K{p,q} \otimes \id{r} \right)\left(\vecr{\bx} \otimes \id{pr} \right)d(\vecr{\Ym}),
    \end{align*}
    
\end{proof}

\paragraph{Derivative of transposition.} Let $\bfun (\bx) = \bx^\top$ where $\bx \in \M{n}{q}$. Then
\[
\dif\vecr{\bfun(\bx)} = \dif\vecr{\bx^\top} = \K{q,n}\dif\vecr{\bx},
\]
so from the first identification theorem (theorem 5.6 from \cite{magnus2019matrix}) 
\begin{equation}
\label{eq:transpose_der}
\nabla_\bx \bfun = \K{q, n}.
\end{equation}

\section{Jacobian and Hessian Expressions}
\label{ap:gradients_hessians}
In this section, we derive the expression for the self-attention Hessian. Firstly, we focus on the classical self-attention which we discuss in \cref{sec:self-attention-hessian}. Next, we move on to self-attention parameterized by a single query-key matrix, as discussed in \cref{ssec:attention-single-matrix}. For clarity, in this section, we will refer to the functional and outer-product Hessians as $\bfun$\nobreakdash-functional and $\bfun$\nobreakdash-outer-product Hessians.

\subsection{Classical Self-Attention}
Knowing the Jacobian of self-attention is a crucial step in deriving the Hessian (the loss gradient's Jacobian). We start by recalling the formulas for self-attention Jacobians w.r.t. weight matrices derived in \cite{noci2022signal}.

\newpage
\begin{lemma}
\label{lemma:gradients_of_self_attention}
  \textbf{Jacobians of self-attention \cite{noci2022signal}.} The Jacobians of the classical self-attention layer~\citep{vaswani2017attention} have the following form:
  \begin{align*}
    \dfrac{\partial \bfun}{\partial \wv} &= \sm \left( \dfrac{\Xm\wq\wk^\top\Xm^\top}{\sqrt{\dk}} \right) \Xm \otimes \id{\dv}, \\[3mm]
    \dfrac{\partial \bfun}{\partial \wq} &= \left(\id{L} \otimes \wv^\top\Xm^\top\right) \dfrac{\partial \bb{A}}{\partial \Tm} \left( \dfrac{\Xm \otimes \Xm\wk}{\sqrt{\dk}} \right),
  \end{align*}
where the Jacobian of the row-wise softmax w.r.t. its inputs is as follows:
\begin{equation}
\dfrac{\partial \bb{A}}{\partial \Tm} = \bd \left(  \dfrac{\partial \matrow{\Am}{i}}{\partial \matrow{\Tm}{i}} \right),
\end{equation}
and where
\begin{equation*}
\dfrac{\partial \matrow{\Am}{i}}{\partial \matrow{\Tm}{i}} = \diag(\matrow{\Am}{i}) - \matrow{\Am}{i}\matrow{\Am}{i}^\top,
\end{equation*}
with $\matrow{\Am}{i}$ being the $i$-th row of $\bb{A}$ in column vector format.
\end{lemma}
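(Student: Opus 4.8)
The plan is to reduce both Jacobians to the single matrix-product derivative rule $\nabla_{\bb{X}}(\bb{A}\bb{X}\bb{B}) = \bb{A}\otimes\bb{B}^\top$ of \cref{eq:prod_derivative}, composing the relevant factors through the chain rule for the flattened (row-stacked, numerator-layout) Jacobians defined in \cref{eq:matrix-calculus-jacobian-and-hessian}. The decisive structural observation is that $\Am=\sm(\Tm)$ depends on $\wq,\wk$ only, so the value Jacobian reduces to that of a single matrix multiplication, whereas the query Jacobian must first peel off the softmax through an intermediate dependence chain $\wq \mapsto \Tm \mapsto \Am \mapsto \bfun$.

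For $\der{\bfun}{\wv}$, I would use that $\Am$ is constant in $\wv$. Writing $\bfun = (\Am\Xm)\,\wv\,\id{\dv}$ and applying \cref{eq:prod_derivative} with left factor $\Am\Xm$ and right factor $\id{\dv}$ gives $\der{\bfun}{\wv} = \Am\Xm \otimes \id{\dv}$; substituting $\Am = \sm(\Xm\wq\wk^\top\Xm^\top/\sqrt{\dk})$ yields the claimed form. The dimensions are consistent since $\bfun,\Am\Xm\in\M{L}{\dv}$ and $\wv\in\M{\dv}{\dv}$.

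For $\der{\bfun}{\wq}$, I would invoke the chain rule $\der{\bfun}{\wq} = \der{\bfun}{\Am}\,\der{\Am}{\Tm}\,\der{\Tm}{\wq}$, valid because each factor is a flattened Jacobian and composition is ordinary matrix multiplication in numerator layout. For the outer factor, treating $\Xm\wv$ as constant and writing $\bfun = \id{L}\,\Am\,(\Xm\wv)$, \cref{eq:prod_derivative} gives $\der{\bfun}{\Am} = \id{L}\otimes(\Xm\wv)^\top = \id{L}\otimes\wv^\top\Xm^\top$. For the inner factor, treating $\Xm$ and $\wk$ as constant and writing $\Tm = (\Xm/\sqrt{\dk})\,\wq\,(\wk^\top\Xm^\top)$, \cref{eq:prod_derivative} gives $\der{\Tm}{\wq} = (\Xm\otimes\Xm\wk)/\sqrt{\dk}$. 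Multiplying the three factors reproduces the stated formula, and a quick dimension count ($\M{L\dv}{L^2}\cdot\M{L^2}{L^2}\cdot\M{L^2}{\dv\dk}=\M{L\dv}{\dv\dk}$) confirms that the Kronecker layouts line up without inserting any commutation matrices.

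The hard part will be the middle factor $\der{\Am}{\Tm}$, where fidelity to the row-stacking convention is essential. Because $a=\sm$ acts independently on each row of $\Tm$ and row-stacking vectorization groups each row's entries contiguously, I would argue that the Jacobian carries no cross-row couplings and is therefore block diagonal with one block per row. Within the $i$-th block I would differentiate the standard softmax map $\matrow{\Am}{i} = \sm(\matrow{\Tm}{i})$, obtaining $\partial[\matrow{\Am}{i}]_m/\partial[\matrow{\Tm}{i}]_n = [\matrow{\Am}{i}]_m(\delta_{mn} - [\matrow{\Am}{i}]_n)$, i.e. $\diag(\matrow{\Am}{i}) - \matrow{\Am}{i}\matrow{\Am}{i}^\top$, which assembles into $\der{\Am}{\Tm} = \bd(\diag(\matrow{\Am}{i}) - \matrow{\Am}{i}\matrow{\Am}{i}^\top)$. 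Checking that the block-diagonal placement of these softmax blocks is compatible with the Kronecker factors on either side—so the chain-rule products compose cleanly under row-stacking—is the one step where the index bookkeeping genuinely demands attention.
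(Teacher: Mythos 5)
Your proposal is correct, but note that the paper itself offers no proof of this lemma at all: it is imported verbatim by citation from \citet{noci2022signal} (``We start by recalling the formulas for self-attention Jacobians\dots''), so there is no in-paper argument to compare against. What your derivation supplies is precisely the omitted calculation, and it is carried out with exactly the toolkit the paper uses for its downstream results: the matrix-product rule of \cref{eq:prod_derivative} applied to $\bfun = (\Am\Xm)\,\wv\,\id{\dv}$ for the value Jacobian, and the chain-rule factorization $\der{\bfun}{\wq} = \der{\bfun}{\Am}\,\der{\Am}{\Tm}\,\der{\Tm}{\wq}$ with $\der{\bfun}{\Am} = \id{L}\otimes\wv^\top\Xm^\top$ and $\der{\Tm}{\wq} = (\Xm\otimes\Xm\wk)/\sqrt{\dk}$ for the query Jacobian. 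Indeed, the paper's own proof of \cref{th:self_attention_functional} re-derives these same inner factors in passing (e.g.\ $\der{\Tm}{\wq}$ and the factor $\id{L}\otimes\wv^\top\Xm^\top$), and its \cref{lemma:softmax-second-derivative} builds on the same per-row block-diagonal picture of the softmax derivative that you establish. Your handling of the one genuinely delicate point is also sound: under row-stacking, the entries of $\Am$ and $\Tm$ are grouped row-contiguously, and since the row-wise softmax couples no entries across rows, $\der{\Am}{\Tm}$ is block diagonal with $i$-th block $\diag(\matrow{\Am}{i}) - \matrow{\Am}{i}\matrow{\Am}{i}^\top$, the standard softmax Jacobian; your dimension count confirms that no commutation matrices are needed. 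In short, the proof is valid, self-contained, and methodologically identical to what the paper does elsewhere---it simply proves a statement the paper chose to cite rather than prove.
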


With the Jacobians of self-attention w.r.t. the weight matrices established, we can introduce the Hessian. We start with the $\bfun$\nobreakdash-outer-product Hessian and then proceed to the $\bfun$\nobreakdash-functional Hessian.

\outerproducthessian*
\vspace{-2mm}
\begin{proof}
Hessian of the mean-square error loss w.r.t. network prediction is an identity matrix scaled by $\frac{2}{L\dv}$. Hence, computing the $\bfun$\nobreakdash-outer-product Hessian block is just taking the outer products of the appropriate self-attention Jacobians from \cref{lemma:gradients_of_self_attention} and scaling them by $\frac{2}{L\dv}$. The expressions can be further simplified using the mixed-product property of the Kronecker product (\cref{eq:kronecker_mixed_product}).

Let us prove the exact formula for the mixed query-value block. The remaining formulas follow the same steps.
\vspace{-2mm}
\begin{align*}
    \hobSimple{\bfun}{\wv, \wq} &= \frac{2}{L\dv} \der{\bfun}\wv^\top \der{\bfun}\wq \\
    &= \frac{2}{Ldv} \left(\Xm^\top\Am^\top \otimes \id{\dv}\right)\left(\id{L} \otimes \wv^\top\Xm^\top\right) \dfrac{\partial \bb{A}}{\partial \Tm} \left( \dfrac{\Xm \otimes \Xm\wk}{\sqrt{\dk}} \right)\\
    &=\frac{2}{L\dv\sqrt{\dk}} \left(\Xm^\top\Am^\top \otimes \wv^\top\Xm^\top\right)\dfrac{\partial \bb{A}}{\partial \Tm} \left( \dfrac{\Xm \otimes \Xm\wk}{\sqrt{\dk}} \right) \\
    &=\frac{2}{L\dv\sqrt{\dk}} \left(\Tm_1^\top \otimes \wv^\top\right) \left(\id{L} \otimes \Xm^\top\right)\der{\Am}{\Tm}\left(\Xm \otimes \Xm\right)\left( \id{\dv} \otimes \wk \right)\\
    &=\frac{2}{L\dv\sqrt{\dk}} \left(\Tm_1^\top \otimes \wv^\top\right) \Zm_1 \left( \id{\dv} \otimes \wk \right) \in \M{L\dv}{\dv^2}.
\end{align*}
\end{proof}

Before we derive the $\bfun$\nobreakdash-functional Hessian expressions, we prove two helper lemmas which allow us to unify the structuring expressions that emerge in the $\bfun$\nobreakdash-functional Hessian. The proof of \cref{lemma:kron_komm_id_swap} is a translation of tensor networks~\citep{bridgeman2017hand} to index notation. For a more transparent presentation, in this lemma, we denote the matrix sizes using capital letters.

First, we briefly discuss commutation matrices. For more information on computation matrices, see \cite{magnus2019matrix}. Let us consider a matrix $\Am \in \R^{M\times N}$ and its row-flattened version $\vecr{\Am} \in \R^{M N}$. The commutation matrix $\K{N,M}$ provides a way to transpose the indices of $\Am$ using the flattened convention. 

Precisely, if we apply $\K{N,M}$ to the flattened version of $\Am$, we get the flattened version of $\Am^\top$:
$$
\K{N,M} \vecr{\Am} = \vecr{(\Am^\top)}.
$$
The commutation matrix can also be used to permute two indices in a super-index, i.e.\,suppose $\bb{a} \in \R^{M N}$ is a vector indexed by a super-index $(m, n)$, where $1\leq n\leq N$ and $1\leq m\leq M$. Then $[\K{N,M} \bb{a}]_{(n, m)} = [\bb{a}]_{(m, n)}$, i.e.\,the commutation matrix swaps the index order inside the super-index. To see this, note that for any $\bb{a} \in \R^{M N}$, we can find $\Am \in \M{M}{N}$ such that $\bb{a} = \vecr{\Am}$, and 
\begin{align*}
[\K{N,M} \bb{a}]_{(n,m)} &= [\K{N,M} \vecr{\Am}]_{(n,m)} \\
&= [\vecr{(\Am^\top)}]_{(n,m)}
\\
 &= [\vecr{\Am}]_{(m,n)}
\\
&= [\bb{a}]_{(m,n)}.
\end{align*}

Finally, let us note that in index notation, the entries of $\K{N,M}$ are given by
$$[\K{N,M}]_{(n,  m), (m', n')} = \delta_{n, n'} \delta_{m, m'}.$$ 

\begin{lemma}
\label{lemma:kron_komm_id_swap}

    Let $\K{N,M} \in \R^{M N \times M N}$ denote the commutation matrix, then
    \begin{equation}\label{eq:commutation-matrix-property}
    \left(
        \bb{I}_{M}
        \otimes
        \K{N,M}
    \right)
    \left(
        \vecr{\bb{I}_{M}}
        \otimes
        \bb{I}_{N}
    \right)
        = 
    \left(
        \K{M,N}
        \otimes
        \bb{I}_{M}
    \right)
    \left(
        \bb{I}_{N}
        \otimes
        \vecr{\bb{I}_{M}}
    \right)\,.
    \end{equation}
    Specifically, for $M = N$ we obtain $$(\id{M} \otimes \K{M,M}) (\vecr{\id{M}} \otimes \id{M}) = (\K{M,M} \otimes \id{M}) (\id{M} \otimes \vecr{\id{M}}).$$
\end{lemma}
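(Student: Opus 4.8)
The plan is to prove the identity by a direct entry-wise computation in index notation, exactly in the spirit of the commutation-matrix discussion preceding the statement. First I would check that both sides are $M^2 N \times N$ matrices: on the left, $\bb{I}_M \otimes \K{N,M}$ is $M^2 N \times M^2 N$ and $\vecr{\bb{I}_M} \otimes \bb{I}_N$ is $M^2 N \times N$, while on the right $\K{M,N} \otimes \bb{I}_M$ is $M^2 N \times M^2 N$ and $\bb{I}_N \otimes \vecr{\bb{I}_M}$ is $M^2 N \times N$. Since the shapes agree, it suffices to compare a single arbitrary entry, indexed by a flattened row super-index together with a column index $s \in \{1,\dots,N\}$.

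Next I would expand each factor using the index rules already available: the given formula $[\K{N,M}]_{(n,m),(m',n')} = \delta_{n,n'}\delta_{m,m'}$, its analogue $[\K{M,N}]_{(m,n),(n',m')} = \delta_{m,m'}\delta_{n,n'}$, the identity $[\vecr{\bb{I}_M}]_{(p,q)} = \delta_{p,q}$, and the standard rule that a Kronecker product interleaves the super-indices of its two factors with the outer factor as the major index. This rewrites $[\bb{I}_M \otimes \K{N,M}]$ and $[\vecr{\bb{I}_M} \otimes \bb{I}_N]$ as products of Kronecker deltas, and similarly for the two right-hand factors. I would then carry out each matrix product by contracting over the shared intermediate super-index and collapsing the resulting sums against the deltas.

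The computation should reduce the left-hand side, at row super-index $(\alpha,\beta,\gamma)$ with $\alpha,\gamma \in \{1,\dots,M\}$, $\beta \in \{1,\dots,N\}$, and column $s$, to $\delta_{\alpha,\gamma}\,\delta_{\beta,s}$, and it should reduce the right-hand side to the same $\delta_{\alpha,\gamma}\,\delta_{\beta,s}$ once the two row labelings are aligned. Equality of the entries then gives the claimed identity, and the specialization $M=N$ is immediate.

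The main obstacle is not any hard algebra but the bookkeeping: the two sides group their three free indices (ranging over $M$, $N$, $M$) through different Kronecker orderings, namely $\bb{I}_M \otimes \K{N,M}$ versus $\K{M,N} \otimes \bb{I}_M$, so the flattened row super-index means something syntactically different on each side. The crux is to fix a single convention for which sub-index is major and which is minor in every Kronecker factor, and to align the left and right row labelings consistently before comparing entries; getting this index matching exactly right is where the argument can go astray. The tensor-network viewpoint noted in the text is precisely the device that makes the contraction pattern transparent, and I would use it as an independent check that the two sides wire up the same three-index object.
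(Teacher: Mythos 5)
Your proposal matches the paper's own proof: both sides are expanded in index notation as products of Kronecker deltas, the intermediate super-index is contracted, and each side collapses to the same delta expression ($\delta_{m_1,m_2}\delta_{n_1,n_2}$ in the paper's labeling, your $\delta_{\alpha,\gamma}\delta_{\beta,s}$). The approach and the key bookkeeping concern you flag are exactly those of the paper, so this is correct and essentially identical.
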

\begin{proof}
   Consider the left-hand side of \cref{eq:commutation-matrix-property} in index notation and simplify, which yields
   \begin{align*}
   &\left[
    \left(
        \bb{I}_{M}
        \otimes
        \K{N,M}
    \right)
    \left(
        \vecr{\bb{I}_{M}}
        \otimes
        \bb{I}_{N}
    \right)
   \right]_{(m_1, n_1, m_2), n_2}
   \\
   &=
   \sum_{m_1', m_2', n_1'}
   \left[
        \bb{I}_{M}
        \otimes
        \K{N,M}
   \right]_{(m_1, n_1, m_2), (m_1', m_2', n_1')}
   \left[
        \vecr{\bb{I}_{M}}
        \otimes
        \bb{I}_{N}
   \right]_{(m_1', m_2', n_1'), n_2}
   \\
   &=
   \sum_{m_1', m_2', n_1'}
   \delta_{m_1, m_1'}
   \delta_{m_2, m_2'}
   \delta_{n_1, n_1'}
   \delta_{m_1', m_2'}
   \delta_{n_1', n_2}
   \\
   &=
   \sum_{m_1', m_2', n_1'}
   \delta_{m_1', m_1, m_2', m_2}
   \delta_{n_1', n_1, n_2}\,.
   \end{align*}
   Simplifying the right-hand side of \cref{eq:commutation-matrix-property} in index notation yields the same expression and thereby establishes the equality:
   \begin{align*}
   &\left[
    \left(
        \K{M,N}
        \otimes
        \bb{I}_{M}
    \right)
    \left(
        \bb{I}_{N}
        \otimes
        \vecr{\bb{I}_{M}}
    \right)
    \right]_{(m_1, n_1, m_2),n_2}
    \\
   &=
   \sum_{m_1', m_2', n_1'}
    \left[
        \K{M,N}
        \otimes
        \bb{I}_{M}
    \right]_{(m_1, n_1, m_2), (n_1', m_1', m_2')}
    \left[
        \bb{I}_{N}
        \otimes
        \vecr{\bb{I}_{M}}
    \right]_{(n_1', m_1', m_2'), n_2}
    \\
   &=
   \sum_{m_1', m_2', n_1'}
   \delta_{m_1, m_1'}
   \delta_{n_1, n_1'}
   \delta_{m_2, m_2'}
   \delta_{n_1', n_2}
   \delta_{m_1', m_2'}
    \\
   &=
   \sum_{m_1', m_2', n_1'}
   \delta_{m_1', m_1, m_2', m_2}
   \delta_{n_1', n_1, n_2}\,.
   \end{align*}
\end{proof}

\begin{lemma}
    \label{lemma:x_extract}
    For any matrix $\Am \in \M{m}{n}$ it holds that
    $$
    \left(\id{m}\otimes \K{m,n}\right)\left(\vecr{\Am}\otimes\Am\right) = \left(\Am \otimes \Am \otimes \id{n}\right)\left(\id{n}\otimes\K{n,n}\right)\left(\vecr{\id{n}}\otimes\id{n}\right).
    $$
\end{lemma}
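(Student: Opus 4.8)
The plan is to prove the identity
$$
\left(\id{m}\otimes \K{m,n}\right)\left(\vecr{\Am}\otimes\Am\right) = \left(\Am \otimes \Am \otimes \id{n}\right)\left(\id{n}\otimes\K{n,n}\right)\left(\vecr{\id{n}}\otimes\id{n}\right)
$$
by working entirely in index notation, mirroring the strategy used in the proof of \cref{lemma:kron_komm_id_swap}. Both sides are matrices of shape $\M{m^2 n}{n}$: on the left, $\vecr{\Am}\otimes\Am \in \M{m^2 n}{n}$ and $\id{m}\otimes\K{m,n}$ is $m^2n \times m^2n$; on the right, the chain collapses $\vecr{\id{n}}\otimes\id{n} \in \M{n^3}{n}$ down through the Kronecker factors to the same shape. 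First I would fix a row super-index and a column index for each side, expand each Kronecker product and commutation matrix into Kronecker deltas using the entry formula $[\K{N,M}]_{(n,m),(m',n')} = \delta_{n,n'}\delta_{m,m'}$ together with $[\vecr{\Am}]_{(i,j)} = A_{i,j}$, and then contract the internal summations.

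Concretely, I would index the left-hand side rows by the super-index $(i, j, k)$ with $i,k \in \{1,\dots,m\}$ (coming from $\id{m}$ and the $\Am$-factor respectively) and $j \in \{1,\dots,n\}$ (coming from $\K{m,n}$), and the column by $\ell \in \{1,\dots,n\}$. Expanding the matrix product over the internal super-index gives a sum of products of deltas and one factor $A_{i',j'}$ from $\vecr{\Am}$ and one factor $A_{k',\ell}$ from $\Am$; the deltas enforced by $\id{m}\otimes\K{m,n}$ should pin down the primed indices, collapsing the sum to a single monomial in the entries of $\Am$, namely something proportional to $A_{i,j}A_{k,\ell}$ with an appropriate delta tying $j$ to the row index and $\ell$ to the column. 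I would then perform the analogous expansion of the right-hand side, where $\vecr{\id{n}}\otimes\id{n}$ and the commutation matrices do the bookkeeping, and check that the two resulting delta-monomial expressions coincide entry-by-entry.

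The main obstacle I anticipate is purely combinatorial: correctly tracking how the Kronecker product partitions each flat index into its constituent sub-indices on both sides, especially since $\K{m,n}$ swaps an $m$-block with an $n$-block (so the factor orderings differ between the two sides), and making sure the \emph{row-wise} vectorization convention $\vecr$ is applied consistently when reading off $[\vecr{\Am}]$ and $[\vecr{\id{n}}]$. A clean way to guard against index-ordering mistakes is to verify the shapes of the intermediate factors first (so the contractions are well-typed), and then to sanity-check the final identity on a small case such as $m=n=1$ or a $2\times 2$ matrix $\Am$ before trusting the general index manipulation. Alternatively, rather than brute-force index chasing one could recognize the right-hand side via \cref{eq:kron_derivative} as the derivative of $\Xm\otimes\Xm$ evaluated suitably and the left-hand side as a reindexing of $\vecr{\Am}\otimes\Am$, but I expect the direct index-notation route, being self-contained and matching the style already established for \cref{lemma:kron_komm_id_swap}, to be the most reliable.
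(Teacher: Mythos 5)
Your index-notation plan is sound and, carried out carefully, does prove the identity: writing the left-hand side entry at row super-index $(a,i,j)$ (with $a,i\in\{1,\dots,m\}$, $j\in\{1,\dots,n\}$) and column $s$, the deltas contributed by $\id{m}\otimes\K{m,n}$ collapse the internal sum to $A_{a,j}A_{i,s}$, and the same monomial emerges on the right once the deltas from $\id{n}\otimes\K{n,n}$ and $\vecr{\id{n}}\otimes\id{n}$ contract the three inner $n$-indices against $\Am\otimes\Am\otimes\id{n}$. This is, however, a genuinely different route from the paper's proof, which never touches indices: it first rewrites $\vecr{\Am}=\left(\Am\otimes\id{n}\right)\vecr{\id{n}}$ using \cref{eq:vect_kron_vector_extraction}, pulls the constant factor $\vecr{\id{n}}\otimes\id{n}$ out via the mixed-product property, and then applies the intertwining relation $\K{m,n}\left(\id{n}\otimes\Am\right)=\left(\Am\otimes\id{n}\right)\K{n,n}$ to arrive at the right-hand side in a short chain of equalities. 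The paper's argument is more compact and reuses identities already established in the appendix, so each step is mechanical; your approach is more elementary and self-contained (and matches the style of the proof of \cref{lemma:kron_komm_id_swap}), and it doubles as a consistency check on the row-wise vectorization and commutation-matrix conventions, but it carries exactly the bookkeeping risk you identify, since a single misordered sub-index silently changes the answer. Your proposed small-case sanity check is a good safeguard, though $m=n=1$ is too degenerate to catch ordering errors; $m=n=2$ is the smallest informative test.
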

\begin{proof}
The equality follows from the chain of transformations
    \begin{align*}
       \left(\id{m}\otimes \K{m,n}\right)\left(\vecr{\Am}\otimes\Am\right) &= \left(\id{m}\otimes \K{m,n}\right)\left(\left(\Am \otimes \id{n}\right)\vecr{\id{n}}\otimes\Am\right) \\
       &= \left(\id{m}\otimes \K{m,n}\right)\left(\Am \otimes \id{n}\otimes\Am\right)\left(\vecr{\id{n}} \otimes \id{n}\right) \\
       &= \left(\Am\otimes \K{m,n}\left(\id{n}\otimes\Am\right)\right)\left(\vecr{\id{n}} \otimes \id{n}\right) \\
       &= \left(\Am\otimes \left(\Am \otimes \id{n}\right)\K{n,n}\right)\left(\vecr{\id{n}} \otimes \id{n}\right) \\
       &=\left(\Am \otimes \Am \otimes \id{n}\right)\left(\id{n}\otimes\K{n,n}\right)\left(\vecr{\id{n}}\otimes\id{n}\right),
    \end{align*}
were the first equality comes from \cref{eq:vect_kron_vector_extraction} and the remaining ones are consequences of the mixed product property (\cref{eq:kronecker_mixed_product}) and fundamental properties of the commutation matrix.
\end{proof}
\functionalhessian*
\vspace{-4mm}
\begin{proof} We derive the $\bfun$\nobreakdash-functional Hessian block by block. For each block, we derive the second derivative matrix of $\bfun$, as in \cref{fig:stacked-hessian}. Each expression can be obtained by multiplying the second derivative matrix by the gradient of the loss according to the definition of $\bfun$\nobreakdash-functional Hessian.

\textbf{$\bfun$\nobreakdash-functional Hessian w.r.t. $\wv$ \& $\wq$.}
Let us find the formula for the second derivative of the Hessian w.r.t. value and key weight matrices $\wv$ \& $\wq$. To do that we take the Jacobian of $\partial \bfun / \partial \wv$ w.r.t. the query weight matrix $\wq$. To simplify, consider the following assignments
\begin{equation*}
\begin{aligned}
\bfun_1 &:= \dfrac{\partial \bfun}{\partial \wv} = \bfun_2 \otimes \id{\dv},\\
\bfun_2 &:= \sm \left( \dfrac{\Xm\wq{\wk}^\top\Xm^\top}{\sqrt{\dk}} \right) \Xm.
\end{aligned}
\end{equation*}
\vspace{-3mm}
By the chain rule, we know that
\[
\sder{\bfun}{\wv}{\wq} = \der{\bfun_1}{\wq} = \der{\bfun_1}{\bfun_2}\der{\bfun_2}{\wq}.
\]
\vspace{-1mm}
From \cref{eq:kron_derivative} we obtain
\begin{align*}
    \der{\bfun_1}{\bfun_2} &= \left(\id{L} \otimes \K{\dv, \dv} \otimes \id{\dv} \right)\left(\id{L\dv} \otimes \vecr(\id{\dv})\right) \\
    & = \id{L} \otimes  \underbrace{\left(\K{\dv, \dv} \otimes \id{\dv} \right)\left(\id{\dv} \otimes \vecr(\id{\dv})\right)}_\Sm,
\end{align*}
where $\Sm$ is as in the theorem statement.

Moreover, by observing that $\bfun_2$ differs from $\Am$ only by the presence of matrix $\wv$, from \cref{lemma:gradients_of_self_attention} we get the derivative
\[
\der{\bfun_2}{\wq} = \left(\id{L} \otimes \Xm^\top\right)\dfrac{\partial \bb{A}}{\partial \bb{T}} \left( \dfrac{\Xm \otimes \Xm\wk}{\sqrt{\dk}} \right).
\]

Plugging the formulas together yields the formula for the second derivative. The formula for the second derivative w.r.t. value and key weight matrices can be derived in the same way, with the only exception being that we take a derivative of $\bfun_2$ w.r.t. $\wk$ instead of w.r.t. $\wq$.

\textbf{$\bfun$\nobreakdash-functional Hessian w.r.t. $\wq$.}
To obtain the query-query block, we will start by differentiating the Jacobian to obtain the formula for the second derivative of self-attention w.r.t. query weight matrix $\wq$. Let
\begin{equation*}
\begin{aligned}
\bfun_1 &:= \dfrac{\partial \bfun}{\partial \wq} = \left(\id{L} \otimes {\wv}^\top\Xm^\top\right) \Gm \left( \dfrac{\Xm \otimes \Xm\wk}{\sqrt{\dk}} \right),\\
\Gm &:= \dfrac{\partial \bb{A}}{\partial \Tm} = \text{blockdiag} \left(  \dfrac{\partial \matrow{\Am}{i}}{\partial \matrow{\Tm}{i} } \right),\\
\Tm&:=\frac{1}{\sqrt{\dk}}\Xm\wq(\Xm\wk)^\top,
\end{aligned}
\end{equation*}
as in \cref{lemma:gradients_of_self_attention}. By the chain rule, we obtain
\[
\sder{\bfun}{\wq}{\wq} = \der{\bfun_1}{\wq} = \der{\bfun_1}{\Gm}\der{\Gm}{\Tm}\der{\Tm}{\wq}.
\]
Thanks to \cref{eq:prod_derivative} and basic properties of the Kronecker product (\cref{eq:kronecker_ass,eq:kronecker_transposition}) we can write that

\begin{equation*} 
\begin{aligned}
   \der{\bfun_1}{\Gm} &= \left(\id{L} \otimes {\wv}^\top\Xm^\top \right) \otimes \left( \dfrac{\Xm \otimes \Xm\wk}{\sqrt{\dk}}\right)^\top \\
   & = \left(\dfrac{\id{L} \otimes {\wv}^\top\Xm^\top \otimes \Xm^\top 
 \otimes {\wk}^\top\Xm^\top}{\sqrt{\dk}}\right), \\[3mm]
   \der{\Tm}{\wq} &= \left( \dfrac{\Xm \otimes \Xm\wk}{\sqrt{\dk}}\right).
\end{aligned}
\end{equation*}
Additionally, we note that $\partial \Gm / \partial \Tm$ is the second derivative of row-wise softmax
\begin{equation*}
    \der{\Gm}{\Tm} = \sder{\Am}{\Tm}{\Tm},
\end{equation*}
which we describe in detail in \cref{ap:sftx-second-der}.
Plugging in the above formulas to the chain rule yields the formula for the second derivative we were looking for.

\textbf{$\bfun$\nobreakdash-functional Hessian w.r.t. $\wq$ \& $\wk$.} We proceed by differentiating the Jacobian of $\bfun$ w.r.t $\wq$. Let us again start with defining some helper terms
\begin{equation*}
\begin{aligned}
\bfun_1 :=& \left(\id{L} \otimes {\wv}^\top\Xm^\top\right),\\
\Gm :=& \dfrac{\partial \bb{A}}{\partial \Tm} = \text{blockdiag} \left(  \dfrac{\partial \matrow{\Am}{i}}{\partial \matrow{\Tm}{i} } \right),\\
\bfun_2 :=& \left( \dfrac{\Xm \otimes \Xm\wk}{\sqrt{\dk}} \right).
\end{aligned}
\end{equation*}
Note that only $\Gm$ and $\bfun_2$ depend on $\wk$ so by applying the chain rule to the product of functions and using a derivative of a matrix product formula (\cref{eq:prod_derivative}) we get 

\begin{equation}
\label{eq:der-qk-frame}
    \sder{\bfun}{\wq}{\wk} = \der{\bfun_1\Gm\bfun_2}{\wk} = \left(\bfun_1 \otimes \bfun_2^\top\right)\der{\Gm}{\wk} + \left(\bfun_1\Gm \otimes \id{\dv\dk}\right)\der{\bfun_2}{\wk}.
\end{equation}

Now, thanks to the chain rule, a derivative of a matrix product (\cref{eq:prod_derivative}) and the derivative of a transposition (\cref{eq:transpose_der})

$$
\der{\Gm}{\wk} =  \sder{\Am}{\Tm}{\Tm} \left( \dfrac{\Xm\wq \otimes \Xm}{\sqrt{\dk}}\right) \K{\dk, \dv}.
$$
Additionally, due to the chain rule and the formula for the Kronecker product derivative (\cref{eq:kron_derivative_last})

\begin{align*}
    \der{\bfun_2}{\wk} &= \left(\id{L} \otimes \K{L, \dv} \otimes \id{\dk}\right)\left(\vecr{\Xm} \otimes \id{L\dk}\right)\left(\dfrac{\Xm \otimes \id{\dk}}{\sqrt{\dk}}\right) \\
    &= \frac{1}{\sqrt{\dk}} \left(\id{L} \otimes \K{L, \dv} \otimes \id{\dk}\right)\left(\vecr{\Xm} \otimes \Xm \otimes \id{\dk}\right) \\
    &= \frac{1}{\sqrt{\dk}} \left(\id{L} \otimes \K{L, \dv}\right)\left(\vecr{\Xm} \otimes \Xm\right)  \otimes \id{\dk} \\
    &= \frac{1}{\sqrt{\dk}} \left(\Xm \otimes \Xm \otimes \id{\dv}\right)\left(\id{\dv}\otimes\K{\dv,\dv}\right)\left(\vecr{\id{\dv}}\otimes\id{\dv}\right)  \otimes \id{\dk} \\
    &= \frac{1}{\sqrt{\dk}} \left(\Xm \otimes \Xm \otimes \id{\dv}\right)\underbrace{\left(\K{\dv, \dv} \otimes \id{\dv} \right)\left(\id{\dv} \otimes \vecr(\id{\dv})\right)}_{\Sm} \otimes \id{\dk}
\end{align*}
where the first two transformations follow from the mixed product property (\cref{eq:kronecker_mixed_product,eq:kron_vector_extraction}) and the remaining two from \cref{lemma:kron_komm_id_swap,lemma:x_extract}.
Plugging in all the terms we obtain the Hessian block. 
\end{proof}

\subsubsection{The Jacobian} We highlight that our observations regarding the Hessian made throughout the paper align well with those concerning the loss gradient w.r.t. self-attention parameters and the self-attention Jacobian. \cite{liu2020understanding} observed that the gradient w.r.t. the self-attention parameters is unbalanced, with the gradients w.r.t. the key and query parameters being of a smaller norm than those w.r.t. the value weight matrix. From the self-attention Jacobian formulas derived by \cite{noci2022signal} (see \cref{lemma:gradients_of_self_attention}), we observe a heterogeneous data dependence in the self-attention Jacobians. These formulas suggest that the Frobenius norm of the Jacobian w.r.t. the query parameter should scale cubically with the magnitude of $X$, whereas the Jacobian w.r.t. the value parameter only linearly. Moreover, the value Jacobian depends on the first moment of attention, while the query and key Jacobian depend on the second moment matrix. Finally, the Jacobians exhibit heterogeneous dependence on the scale of the weight matrices, as previously noted by \cite{noci2022signal} in theorem 3.1.

\subsection{Self-Attention with Single-Matrix Parameterization}

\singlematrixparam*
\begin{proof}
The Jacobian of self-attention for $\wqk$ matrix is given by
\begin{equation}
        \der{\bfun}{\wqk} = \left(\id{L} \otimes {\wv}^\top\Xm^\top\right)\der{\bb{A}}{\Tm}\left(\Xm \otimes \Xm \right),
\end{equation}
where $\partial \bb{A}/ \partial \Tm$ is defined as in \cref{lemma:gradients_of_self_attention}. The proof follows from applying the chain rule and the derivative of matrix multiplication as in \cref{eq:prod_derivative}.

Now, to obtain the $\bfun$\nobreakdash-functional Hessian formula, let
\begin{equation*}
\begin{aligned}
\bfun_1 &:= \dfrac{\partial \bfun}{\partial \wqk} = \left(\id{L} \otimes {\wv}^\top\Xm^\top\right)\Gm\left(\Xm \otimes \Xm \right),\\
\Gm &:= \dfrac{\partial \bb{A}}{\partial \Tm} = \text{blockdiag} \left(  \dfrac{\partial \matrow{\Am}{i}}{\partial \matrow{\Tm}{i} } \right),\\
\Tm&:=\Xm\wqk\Xm^\top.
\end{aligned}
\end{equation*}
From the chain rule, we obtain 
\begin{equation*}
\begin{aligned}
        \sder{\bfun}{\wqk}{\wqk} &= \der{\bfun_1}{\Gm}\der{\Gm}{\Tm}\der{\Tm}{\wqk} \\
        & = \left(\id{L} \otimes {\wv}^\top\Xm^\top \otimes \Xm^\top \otimes \Xm^\top\right)\sder{\Am}{\Tm}{\Tm}\left(\Xm \otimes \Xm \right),
\end{aligned}
\end{equation*}
where to get the second line we use the formula for the derivative of a matrix product (\cref{eq:prod_derivative}) twice. The formula from the lemma statement follows directly from plugging in the above components to \cref{eq:gauss_newton_self_attention}.
\end{proof}

\section{Self-Attention Moment Matrices}
\label{ap:moments-general}
In this section, we prove \cref{remark:hessian_as_moment_matrices}. To notice the dependence of the Hessian on the second moment matrix, we need to derive and simplify the second derivative of the row-wise softmax---we focus on that in \cref{ap:sftx-second-der}. In \cref{ap:z_to_moment} we move on to interpreting matrices $\Zm_1$ and $\Zm_2$ through the lens of the self-attention moment matrices. 

\subsection{The Second Derivative of the Row-Wise Softmax}
\label{ap:sftx-second-der}

To gain some intuition on the structure of the second derivative of the row-wise softmax, we begin with the simplest possible case, assuming a sequence length of $L=2$. Later on, we will generalize the expressions to sequences of any length.
\newpage
\begin{remark}
    Assume that the sequences are of length $L=2$, meaning that the attention matrix is of the form 
    \[\Am = 
    \begin{bmatrix}
        a_{1, 1} & a_{1, 2} \\
        a_{2, 1} & a_{2, 2} \\
    \end{bmatrix}.
    \]
    By \cref{lemma:gradients_of_self_attention} the Jacobian of the row-wise softmax is given by matrix $\der{\Am}{\Tm}$ of the form
    \[
        \begin{bmatrix}
        a_{1, 1} - a_{1, 1}^2 & -a_{1, 1}a_{1, 2} & 0 & 0\\
        -a_{1, 1}a_{1, 2} & a_{1, 2}-a_{1, 2}^2 & 0 & 0 \\
        0 & 0 & a_{2, 1} - a_{2, 1}^2 & -a_{2, 1}a_{2, 2} \\
        0 & 0 &  -a_{2, 1}a_{2, 1} & a_{2, 2}-a_{2, 2}^2 \\
    \end{bmatrix}.
    \]
    By vectorizing the above matrix row-wise, computing derivatives w.r.t. corresponding softmax inputs, and defining $b_{i,j} := 1 - 2a_{i, j}$ we obtain that the second derivative matrix is of the form
    
    \[
    \sder{\bfun}{\Tm}{\Tm} = \left[\begin{array}{cc:cc}
        
        (a_{1, 1} - a_{1, 1}^2)b_{1, 1} & -a_{1, 1}a_{1, 2}b_{1, 1} & 0 & 0\\
        -a_{1, 1}a_{1, 2}b_{1, 1} & -a_{1, 1}a_{1, 2}b_{1, 2} & 0 & 0\\
        0 & 0 & 0 & 0\\
        0 & 0 & 0 & 0\\
        -a_{1, 1}a_{1, 2}b_{1, 1} & -a_{1, 1}a_{1, 2}b_{1, 2} & 0 & 0\\
        -a_{1, 1}a_{1, 2}b_{1, 2} & (a_{1, 2}-a_{1, 2}^2)b_{1, 2} & 0 & 0\\
        0 & 0 & 0 & 0\\
        0 & 0 & 0 & 0\\\hdashline
        0 & 0 & 0 & 0\\
        0 & 0 & 0 & 0\\
        0 & 0 & (a_{2, 1} - a_{2, 1}^2)b_{2, 1} & -a_{2, 1}a_{2, 2}b_{2, 1}\\
        0 & 0 & -a_{2, 1}a_{2, 2}b_{2, 1} & -a_{2, 1}a_{2, 2}b_{2, 2}\\
        0 & 0 & 0 & 0\\
        0 & 0 & 0 & 0\\
        0 & 0 &-a_{2, 1}a_{2, 2}b_{2, 1} & -a_{2, 1}a_{2, 2}b_{2, 2}\\
        0 & 0 &-a_{2, 1}a_{2, 2}b_{2, 2} & (a_{2, 2}-a_{2, 2}^2)b_{2, 2}\\

    \end{array}\right].
    \]
\end{remark}

To obtain each entry of the matrix, we simply apply the chain rule. This means that each entry is a sum of products, where the first factor is the derivative of the entries of $\partial \Am / \partial \Tm$ w.r.t. $a_{i, j}$, and the second factor is the derivative of $a_{i, j}$ w.r.t. the entries of $\Tm$. These derivatives can be found directly in the $\partial \Am / \partial \Tm$ matrix. The blocks of zeros occur because the softmax function is applied independently to each row, so the second derivative for mixed-row entries is always zero.

\Cref{lemma:softmax-second-derivative} generalizes the above observations to any sequence length $L$.
\newpage
\begin{lemma}
\label{lemma:softmax-second-derivative}
\textbf{Second derivative of row-wise softmax.}
    Assume self-attention matrix $\Am = [a]_{i, j \leq L}$. Then the second derivative of the row-wise softmax has a block-diagonal structure, namely 

    $$\sder{\bfun}{\Tm}{\Tm} = \bd{\Dm_i} \in \M{L^4}{L^2},$$
    where $\Dm_i \in \M{L^3}{L}$ is of the form 
    $$\Dm_i = \begin{bmatrix}
       \Dm_{i, 1} \\ \vdots \\ \Dm_{i, L},
    \end{bmatrix} \text{, with } \Dm_{i, j} = \bb{e}_i \otimes \sder{\Am_{i,j}}{\matrow{\Tm}{i}}{\matrow{\Tm}{i}},$$
    and $\bb{e}_i$ being unit vectors in the standard basis. Moreover, the single-element second derivatives can be expressed as
    \begin{align*}
        \sder{\Am_{i, j}}{\matrow{\Tm}{i}}{\matrow{\Tm}{i}} =  \Am_{i,j}\left(2\matrow{\Am}{i}\matrow{\Am}{i}^\top + \Em^{L, L}_{j,j} - \diag(\matrow{\Am}{i}) - \bb{e}_j\matrow{\Am}{i}^\top - \matrow{\Am}{i}\bb{e}_j^\top\right) \in \M{L}{L},
    \end{align*}
where $\Em^{m, n}_{j,j}=\bb{e}_j\bb{e}_j^\top \in \M{m}{n}$ is a matrix filled with zeros except for entry in $j^{th}$ row and $j^{th}$ column which is $1$. 

Additionally, if we structure the single entry second derivatives into a block column matrix $\sder{\matrow{\Am}{i}}{\matrow{\Tm}{i}}{\matrow{\Tm}{i}} = \begin{bmatrix}
    \sder{\Am_{i,1}}{\matrow{\Tm}{i}}{\matrow{\Tm}{i}}, \dots, \sder{\Am_{i,L}}{\matrow{\Tm}{i}}{\matrow{\Tm}{i}}
\end{bmatrix}^\top$ we can rewrite $\Dm_i$ concisely as
    \begin{equation}
        \label{eq:ui_structure}
        \Dm_i = \left((\mathbb{1}_{L, 1} \otimes \bb{e}_i)*  \sder{\matrow{\Am}{i}}{\matrow{\Tm}{i}}{\matrow{\Tm}{i}}\right),
    \end{equation}
    where $*$ represents Khatri-Rao product \cite{liu1999matrix} (see \cref{def:khatri_rao_product}) with blocks defined by the standard basis vectors in the LHS matrix and by the second derivatives of a single row entry in the RHS matrix.
\end{lemma}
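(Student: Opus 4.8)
The plan is to establish the three assertions of the lemma in turn --- the block-diagonal structure, the closed form of the per-entry Hessian, and the compact Khatri--Rao expression --- working throughout in index notation with the super-indices induced by the row-stacking convention $\vecr$. For the block structure I would first exploit that the row-wise softmax acts independently on each row, so $\Am_{i,j}$ depends only on $\matrow{\Tm}{i}$ and every mixed derivative $\partial^2 \Am_{i,j}/(\partial \Tm_{i',k}\,\partial \Tm_{i'',l})$ vanishes unless $i = i' = i''$. Starting from the block-diagonal Jacobian $\partial \Am/\partial \Tm$ of \cref{lemma:gradients_of_self_attention}, whose $i$-th diagonal block is $\partial \matrow{\Am}{i}/\partial \matrow{\Tm}{i}$, I would flatten it row-wise and differentiate a second time as prescribed in \cref{fig:stacked-hessian}. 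The resulting $\M{L^4}{L^2}$ matrix then has row super-index $((i,j),(i',k))$ and column super-index $(i'',l)$; because $i$ is the slowest row index it carves out contiguous row blocks of size $L^3$, while $i''$ as the slowest column index carves out column blocks of size $L$. The vanishing of every entry with $i \neq i''$ forces the block-diagonal form $\bd{\Dm_i}$, and inside the $i$-th block the residual constraint $i' = i$ is exactly what the factor $\bb{e}_i$ in $\Dm_{i,j} = \bb{e}_i \otimes \sder{\Am_{i,j}}{\matrow{\Tm}{i}}{\matrow{\Tm}{i}}$ encodes --- its $((i',k),l)$ entry equals $\delta_{i,i'}$ times the $(k,l)$ entry of the single-element Hessian.

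For the per-entry Hessian I would simply differentiate the softmax Jacobian once more. Writing it entrywise as $[\partial \matrow{\Am}{i}/\partial\matrow{\Tm}{i}]_{j,k} = \Am_{i,j}(\delta_{j,k} - \Am_{i,k})$ (reading off the $\diag(\matrow{\Am}{i}) - \matrow{\Am}{i}\matrow{\Am}{i}^\top$ form of \cref{lemma:gradients_of_self_attention}), I would apply the product rule in $\Tm_{i,l}$ and reuse the same Jacobian formula for the inner derivatives of $\Am_{i,j}$ and $\Am_{i,k}$. Collecting the resulting terms and identifying their matrix-level meaning --- $\delta_{j,k}\delta_{j,l}$ as $\Em^{L,L}_{j,j}$, $\Am_{i,k}\delta_{k,l}$ as $\diag(\matrow{\Am}{i})$, the two single-delta terms as $\bb{e}_j\matrow{\Am}{i}^\top$ and $\matrow{\Am}{i}\bb{e}_j^\top$, and the surviving quadratic term as $2\matrow{\Am}{i}\matrow{\Am}{i}^\top$ --- yields the stated expression after pulling out the common factor $\Am_{i,j}$.

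Finally I would recognize $\Dm_i$ as a Khatri--Rao product by matching blocks: viewing $\mathbb{1}_{L,1}\otimes \bb{e}_i$ as a column of $L$ blocks each equal to $\bb{e}_i$ and $\sder{\matrow{\Am}{i}}{\matrow{\Tm}{i}}{\matrow{\Tm}{i}}$ as a column of $L$ blocks whose $j$-th block is $\sder{\Am_{i,j}}{\matrow{\Tm}{i}}{\matrow{\Tm}{i}}$, \cref{def:khatri_rao_product} gives that the $j$-th block of their product is $\bb{e}_i \otimes \sder{\Am_{i,j}}{\matrow{\Tm}{i}}{\matrow{\Tm}{i}} = \Dm_{i,j}$, which is precisely the stacking defining $\Dm_i$. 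I expect the main obstacle to be the first part --- rigorously tracking how the two super-indices interleave under the row-stacking convention so that both the block-diagonal partition and the $\bb{e}_i \otimes (\cdot)$ placement emerge unambiguously; by comparison the entrywise differentiation is routine and the Khatri--Rao identity is essentially definitional.
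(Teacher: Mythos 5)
Your proposal is correct and follows essentially the same route as the paper's proof: the block-diagonal and $\bb{e}_i\otimes(\cdot)$ structure from the row-wise independence of the softmax, the per-entry Hessian by differentiating the known Jacobian once more with the product rule (you do it entrywise in index notation, the paper does it at the row-vector level via the Leibniz rule on $(\bb{e}_j - \matrow{\Am}{i})^\top\Am_{i,j}$, but the computation is the same), and the Khatri--Rao identity read off directly from the block definition. Your collected index terms match the stated matrix expression exactly, so no gap remains.
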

\vspace{-2mm}
\begin{proof}
Let us start with discussing the general structure of the second derivative matrix. Later we will focus on specific matrix entries.

\textbf{Block-column structure.}
Recall that we are computing the second derivative of a matrix-valued function $\Am$ that takes a matrix $\Tm$ as an argument. In the layout and vectorization scheme assumed in this manuscript, the Hessians of every entry of $\Am$ in the row-wise order are placed consecutively into a block-column matrix (see \cref{fig:stacked-hessian}). 

\textbf{Block-diagonal structure.}
Let us consider a single block from the block-column structure discussed in the previous paragraph. Since the softmax acts on every row of $\Tm$ separately, and the Hessian of $\Am_{i,j}$ is computed w.r.t. all $L^2$ entries of $\Tm$, the Hessian will have potentially non-zero values only in one sub-block on the diagonal whose rows end columns have indices in the range from $(i-1)L+1$ to $iL$ inclusive. This translates into the derivative of the row-wise softmax $\nicefrac{\partial^2\bfun }{ \partial \Tm \partial \Tm}$ also
having the block-diagonal structure. Specifically, let us enumerate the $L^2\times L^2$ single-entry Hessians placed into this block-column matrix with an index $J$. For any integer $i$ such that $1 \leq i \leq L$ when $(i-1)L + 1 \leq J \leq iL$ we have possible non-zero values only in columns $iL$ to $(i+1)L - 1$ inclusive. Now let us group these Hessians into $L$ larger blocks corresponding to a single row of $\Am$---the whole such block corresponding to $\matrow{\Am}{i}$ has possible non-zero entries only in columns from $(i-1)L+1$ to $iL$ inclusive.  Hence the structure of the second derivative of self-attention is block-diagonal if we consider only the possibly non-zero sub-blocks. We refer to the sub-block corresponding to row $\matrow{\Am}{i}$ w.r.t $\matrow{\Tm}{i}$ by $\Dm_i$.

\textbf{Block-column structure of $\Dm_i$.}
Finally, $\Dm_i$ consists of blocks, cut out of the element-wise Hessians we have just discussed, which explains the Khatri-Rao product in \cref{eq:ui_structure}.

\textbf{Non-zero elements.}
The non-zero entries of $\Dm_i$ correspond to $$\sder{\Am_{i,j}}{\matrow{\Tm}{i}}{\matrow{\Tm}{i}}.$$ We now find its exact formula.

Recall from \cref{lemma:gradients_of_self_attention} that 
   \begin{align*}
          \dfrac{\partial \matrow{\Am}{i}}{\partial \matrow{\Tm}{i} } = \diag(\matrow{\Am}{i}) - \matrow{\Am}{i}\matrow{\Am}{i}^\top,
   \end{align*}
   which means that 
   \begin{align*}
       \dfrac{\partial \Am_{i, j}}{\partial \matrow{\Tm}{i} } &= (\bb{e}_j - \matrow{\Am}{i})^\top\Am_{i,j} \in \M{1}{L}.
   \end{align*}
To get the second derivative it is enough to differentiate the transpose of the above function using the Leibniz product rule, namely
    \begin{align*}
       \sder{\Am_{i, j}}{\matrow{\Tm}{i} }{\matrow{\Tm}{i} } &= \der{(\bb{e}_j - \matrow{\Am}{i})}{\matrow{\Tm}{i} }\Am_{i,j} + (\bb{e}_j - \matrow{\Am}{i})\dfrac{\partial \Am_{i, j}}{\partial \matrow{\Tm}{i} }  \\
       &= -\der{\matrow{\Am}{i}}{\matrow{\Tm}{i} }\Am_{i,j} + (\bb{e}_j - \matrow{\Am}{i})\dfrac{\partial \Am_{i, j}}{\partial \matrow{\Tm}{i} }.
   \end{align*}
Since we already know both derivatives present in this expression from \cref{lemma:gradients_of_self_attention} it is enough to substitute them to obtain

    \begin{align*}
       \sder{\Am_{i, j}}{\matrow{\Tm}{i} }{\matrow{\Tm}{i} } &= - \left(\diag(\matrow{\Am}{i}) - \matrow{\Am}{i}\matrow{\Am}{i}^\top\right)\Am_{i,j} + (\bb{e}_j - \matrow{\Am}{i})(\bb{e}_j - \matrow{\Am}{i})^\top\Am_{i,j} \\
       &= \Am_{i, j}\left(2\matrow{\Am}{i}\matrow{\Am}{i}^\top +\bb{e}_j\bb{e}_j^\top- \diag(\matrow{\Am}{i}) - \bb{e}_j\matrow{\Am}{i}^\top -\matrow{\Am}{i}\bb{e}_j^\top\right).
   \end{align*}
   
\end{proof}
\subsection{Self-Attention Moment Matrices}
\label{ap:z_to_moment}
\hessianmomentmatrices*

\begin{proof}
Before starting the derivations, let us specify that for the Khatri-Rao product in the theorem statement, $\Xm$ is split row-wise and $\Mm_k$ is split block-row-wise into central moment matrices corresponding to attention rows as in \cref{def:att_first_moment}. The proof follows straight from transforming matrices $\Zm_1$ and $\Zm_2$.

\textbf{Dependence on the second central moment matrix $\Mm_2$.} It holds that
\begin{align*}
    \Zm_1 =& \left(\id{L} \otimes \Xm^\top\right)\der{\Am}{\Tm}\left(\Xm \otimes \Xm\right) \\
    =& \bd{\left(
\Xm^\top \dfrac{\partial \matrow{\Am}{i}}{\partial \matrow{\Tm}{i} }
\right)}(\Xm \otimes \Xm) \\
=&\begin{bmatrix}
    \matrow{\Xm}{1}^\top \otimes \Xm^\top \dfrac{\partial \matrow{\Am}{1}}{\partial \matrow{\Tm}{1}^\top}\Xm \\
    \vdots \\
    \matrow{\Xm}{L}^\top \otimes \Xm^\top \dfrac{\partial \matrow{\Am}{L}}{\partial \matrow{\Tm}{L}^\top}\Xm \\
\end{bmatrix}  = \Xm * \Mm_2,
\end{align*}
where the first line follows from the two first matrices in the product being block diagonal and the second from \cref{eq:block_diag_kronecker}.

\textbf{Dependence on the third central moment matrix $\Mm_3$.} Let's recall from \cref{th:self_attention_functional} that 
    \begin{align*}
        \Zm_2 &= \left(\id{L} \otimes \Xm^\top \otimes \Xm^\top \otimes \Xm^\top \right)\sder{\Am}{\Tm}{\Tm}\left(\Xm \otimes \Xm\right) \\
        &= \left(\id{L} \otimes \Xm^\top \otimes \Xm^\top \otimes \Xm^\top \right)\bd{\left(\Dm_i\right)}\left(\Xm \otimes \Xm\right).
    \end{align*}

    The first two matrices in the above matrix product are block-diagonal with $L$ equal-sized blocks, so their product is also a block-diagonal matrix with $L$ equal-sized blocks of a form $\left(\Xm^\top \otimes \Xm^\top \otimes \Xm^\top \right)\Dm_i$. Hence, by \cref{eq:block_diag_kronecker} we know that $\Zm_2$ has a block structure $\Zm_2 = \left[\Zm_{2;i}\right]_{1\leq i \leq L},$
    where 
    \begin{align*}
            \Zm_{2;i} = \left(\Xm^\top \otimes \Xm^\top \otimes \Xm^\top \right)\Dm_i\left(\matrow{\Xm}{i}^\top \otimes \Xm\right),
    \end{align*}
    with $\matrow{\Xm}{i} \in \R^{\dv}$ being the $i^{th}$ row of $\Xm$ as a column vector.\\ 
    
    Let us recall \cref{lemma:softmax-second-derivative} that gives us the $\Dm_i$ formula to transform the expression for $\Zm_{2;i}$.
    \begin{equation*}
    \label{eq:zi_formula}
    \begin{aligned}
            \Zm_{2;i} &= \left(\Xm^\top \otimes \Xm^\top \otimes \Xm^\top \right)\Dm_i\left(\matrow{\Xm}{i}^\top \otimes \Xm\right) \\
            &= \left(\Xm^\top \otimes \Xm^\top \otimes \Xm^\top \right)\left((\mathbb{1}_{L, 1} \otimes \bb{e}_i)*  \sder{\Am_i}{\matrow{\Tm}{i} }{\matrow{\Tm}{i} }\right)\left(\matrow{\Xm}{i}^\top \otimes \Xm\right) \\
            &= \begin{bmatrix}
                \xv_1 \otimes \Xm^\top \otimes \Xm^\top & \dots & \xv_L \otimes \Xm^\top \otimes \Xm^\top 
            \end{bmatrix} \begin{bmatrix}
                \bb{e}_i \otimes \sder{\Am_{i, 1}}{\matrow{\Tm}{i} }{\matrow{\Tm}{i} }\\
                \vdots \\
                \bb{e}_i \otimes \sder{\Am_{i, L}}{\matrow{\Tm}{i} }{\matrow{\Tm}{i} }
            \end{bmatrix}\left(\matrow{\Xm}{i}^\top \otimes \Xm\right)\\
            &= \left(\sum_{j=1}^L\left(\matrow{\Xm}{j} \otimes \Xm^\top \otimes \Xm^\top \right)\left(\bb{e}_i \otimes \sder{\Am_{i, j}}{\matrow{\Tm}{i} }{\matrow{\Tm}{i} }\right)\right)\left(\matrow{\Xm}{i}^\top \otimes \Xm\right) \\
            &= \left(\sum_{j=1}^L\left(\matrow{\Xm}{j} \otimes \Xm^\top\right)\bb{e}_i \otimes \Xm^\top \sder{\Am_{i, j}}{\matrow{\Tm}{i} }{\matrow{\Tm}{i} }\right)\left(\matrow{\Xm}{i}^\top \otimes \Xm\right) \\
             &= \sum_{j=1}^L\left(\matrow{\Xm}{j} \otimes \Xm^\top\right)\bb{e}_i\matrow{\Xm}{i}^\top \otimes \Xm^\top \sder{\Am_{i, j}}{\matrow{\Tm}{i} }{\matrow{\Tm}{i} } \Xm \\
            &= \sum_{j=1}^L\matrow{\Xm}{j} \otimes \matrow{\Xm}{i}\matrow{\Xm}{i}^\top \otimes \Xm^\top \sder{\Am_{i, j}}{\matrow{\Tm}{i} }{\matrow{\Tm}{i} } \Xm,
    \end{aligned}
    \end{equation*}
    where the last three equalities follow from the mixed product property of the Kronecker product. More precisely, the last equality follows from $\left(\matrow{\Xm}{j} \otimes \Xm^\top\right)\bb{e}_i\matrow{\Xm}{i}^\top = \left(\matrow{\Xm}{j}^\top \otimes \Xm^\top\right)\left(1\otimes \bb{e}_i\matrow{\Xm}{i}^\top\right)=\matrow{\Xm}{j} \otimes \Xm^\top\bb{e}_i\matrow{\Xm}{i}^\top = \matrow{\Xm}{j} \otimes \matrow{\Xm}{i}\matrow{\Xm}{i}^\top$.

    Using known properties of the commutation matrix and the mixed product property (\cref{eq:kronecker_mixed_product}), we can further simplify the expression
    \begin{align*}
        \Zm_{2;i} =& \sum_{j=1}^L\matrow{\Xm}{j} \otimes \matrow{\Xm}{i}\matrow{\Xm}{i}^\top \otimes \Xm^\top \sder{\Am_{i, j}}{\matrow{\Tm}{i} }{\matrow{\Tm}{i} } \Xm \\
        =& \sum_{j=1}^L\K{\dv,\dv}\left( \matrow{\Xm}{i}\matrow{\Xm}{i}^\top \otimes \matrow{\Xm}{j} \right)\otimes \Xm^\top \sder{\Am_{i, j}}{\matrow{\Tm}{i} }{\matrow{\Tm}{i} } \Xm \\
        =& \sum_{j=1}^L\left(\K{\dv,\dv}\otimes\id{\dv}\right)\left( \matrow{\Xm}{i}\matrow{\Xm}{i}^\top \otimes \matrow{\Xm}{j} \otimes \Xm^\top \sder{\Am_{i, j}}{\matrow{\Tm}{i} }{\matrow{\Tm}{i} } \Xm\right) \\
        =& \left(\K{\dv,\dv}\otimes\id{\dv}\right)\left( \matrow{\Xm}{i}\matrow{\Xm}{i}^\top \otimes \underbrace{\sum_{j=1}^L\matrow{\Xm}{j} \otimes \Xm^\top \sder{\Am_{i, j}}{\matrow{\Tm}{i} }{\matrow{\Tm}{i} } \Xm}_{\Nm}\right).
    \end{align*}
    
Finally, after plugging in $\partial^2 \Am_{i, j} /\partial \matrow{\Tm}{i} \partial \matrow{\Tm}{i}$ from \cref{lemma:softmax-second-derivative} and extracting $\Am_{i,j}$ in front of the Kronecker product we simplify the summation expression $\Nm$
    \begin{align*}
        \Nm =& \sum_{j=1}^L\matrow{\Xm}{j} \otimes \Xm^\top \Am_{i,j}\left(2\matrow{\Am}{i}\matrow{\Am}{i}^\top
        + \Em^{L, L}_{j,j} - \diag(\matrow{\Am}{i}) - \bb{e}_j\matrow{\Am}{i}^\top - \matrow{\Am}{i}\bb{e}_j^\top\right) \Xm \\
        =& \sum_{j=1}^L\Am_{i, j} \matrow{\Xm}{j} \otimes \left( 2 \mor{i}\mor{i}^\top + \matrow{\Xm}{j}\matrow{\Xm}{j}^\top - \Xm^\top\diag(\matrow{\Am}{i})\Xm - \matrow{\Xm}{j}\mor{i}^\top - \mor{i} \matrow{\Xm}{j}^\top \right).
    \end{align*}
This can be further simplified. We firstly note that $$\mor{i}\mor{i}^\top + \matrow{\Xm}{j}\matrow{\Xm}{j}^\top - \matrow{\Xm}{j}\mor{i}^\top - \mor{i} \matrow{\Xm}{j}^\top  = \left(\mor{i} - \matrow{\Xm}{j}\right)\left(\mor{i} - \matrow{\Xm}{j}\right)^\top.$$

Moreover, from the fact that Kronecker product distributes over addition (\cref{eq:kronecker_addition})
    \begin{align*}
        &\sum_{j=1}^L\Am_{i, j} \matrow{\Xm}{j}  \otimes \left( \mor{i}\mor{i}^\top - \Xm^\top\diag(\matrow{\Am}{i})\Xm\right) \\
        & \qquad = \left(\sum_{j=1}^L \Am_{i, j}\matrow{\Xm}{j}\right) \otimes \left( \mor{i}\mor{i}^\top - \Xm^\top\diag(\matrow{\Am}{i})\Xm\right) \\
        &  \qquad = \mor{i} \otimes \left( \mor{i}\mor{i}^\top - \Xm^\top\diag(\matrow{\Am}{i})\Xm\right).
    \end{align*}
These two equations give us
\begin{equation*}
        \begin{aligned}
        \Nm &= \sum_{j=1}^L\Am_{i,j} \matrow{\Xm}{j}  \otimes \left(\mor{i}\mor{i}^\top - \Xm^\top\diag(\matrow{\Am}{i})\Xm \right) \\
        &\quad + \sum_{j=1}^L \Am_{i,j}\matrow{\Xm}{j} \otimes \left( \mor{i}\mor{i}^\top + \matrow{\Xm}{j}\matrow{\Xm}{j}^\top - \mor{i} \matrow{\Xm}{j}^\top - \matrow{\Xm}{j}\mor{i}^\top  \right) \\
        &=\mor{i} \otimes \left( \mor{i}\mor{i}^\top - \Xm^\top\diag(\matrow{\Am}{i})\Xm\right) \\
        &\quad + \sum_{j=1}^L\Am_{i,j} \matrow{\Xm}{j} \otimes \left(\mor{i} - \matrow{\Xm}{j}\right)\left(\mor{i} - \matrow{\Xm}{j}\right)^\top.
    \end{aligned}
\end{equation*}

We are now at the finish line of obtaining the desired formula. Let us note that similar expression appear in both summands of the equation above, specifically
    \begin{align*}
        \left( \Xm^\top\diag(\matrow{\Am}{i})\Xm - \mor{i}\mor{i}^\top\right) = \sum_{j=1}^L  \Am_{i, j}\left(\mor{i} - \matrow{\Xm}{j}\right)\left(\mor{i} - \matrow{\Xm}{j}\right)^\top.
    \end{align*}

Hence,
\begin{align*}
        \Nm &=\mor{i} \otimes \left( \mor{i}\mor{i}^\top - \Xm^\top\diag(\matrow{\Am}{i})\Xm\right) \\
        &\qquad+ \sum_{j=1}^L\Am_{i,j} \matrow{\Xm}{j} \otimes \left(\mor{i} - \matrow{\Xm}{j}\right)\left(\mor{i} - \matrow{\Xm}{j}\right)^\top \\
        &= -\sum_{j=1}^L  \Am_{i, j}\mor{i} \otimes \left(\mor{i} - \matrow{\Xm}{j}\right)\left(\mor{i} - \matrow{\Xm}{j}\right)^\top 
        \\
        &\qquad+ \sum_{j=1}^L\Am_{i,j} \matrow{\Xm}{j} \otimes \left(\mor{i} - \matrow{\Xm}{j}\right)\left(\mor{i} - \matrow{\Xm}{j}\right)^\top \\
        &=\sum_{j=1}^L\Am_{i,j} \left(\matrow{\Xm}{j}-\mor{i}\right) \otimes \left(\mor{i} - \matrow{\Xm}{j}\right)\left(\mor{i} - \matrow{\Xm}{j}\right)^\top \\
        &= [\Mm_3]_{i,:}.
    \end{align*}
After plugging in the above into the $\Zm_{2;i}$ formula, we obtain
$$
\Zm_{2;i} = \left(\K{\dv,\dv}\otimes\id{\dv}\right)\left( \matrow{\Xm}{i}\matrow{\Xm}{i}^\top \otimes [\Mm_3]_{i,:}\right).
$$

Finally, note that
$$
\Xm * \Xm^\top * \Mm_3 = [\matrow{\Xm}{i}\matrow{\Xm}{i}^\top \otimes [\Mm_3]_{i,:}]_{1\leq i \leq L} \in \M{L\dv^2}{\dv},
$$
and hence
$$\Zm_2 = \left(\id{L}\otimes\K{\dv,\dv}\otimes\id{\dv}\right)\left(\Xm * \Xm^\top * \Mm_3\right).$$
\end{proof}

\vspace{-5mm}
\section{More on the Query-Key Hessian Block}
\label{ap:quey-key-eigenvalues}
\paragraph{Nested structure of the query-key Hessian.}
The query-key Hessian blocks exhibit a nested structure which prompts us to define a Gauss-Newton-like decomposition of the query-key Hessian blocks.
The mixed query-key block of the Hessian consists of three summands---one coming from the $\bfun$\nobreakdash-outer-product Hessian and two from the $\bfun$\nobreakdash-functional Hessian. The $\bfun$\nobreakdash-outer-product term as well as one of the $\bfun$\nobreakdash-functional terms are structurally similar to the query blocks we can find on the diagonal. Together they can be expressed as an outer product. This observation results in the decomposition from \cref{remark:T_decomposition}.
\begin{remark}
\label{remark:T_decomposition}
    The query-key part of the Hessian from \cref{th:self_attention_outer_product,th:self_attention_functional} can be equivalently decomposed into a sum of $\Tm$\nobreakdash-outer-product and $\Tm$\nobreakdash-functional Hessians, respectively given by
    \begin{equation*}
\begin{aligned}
    &\hob{\Tm}{\wqk,\wqk}  = \\
    & \qquad \dfrac{1}{\dk} \Vm^\top\underbrace{\left(\Zm_1^\top\left(\id{L}\otimes \wv\wv^\top\right)\Zm_1 + \left(\ddelta^\top(\id{L} \otimes \wv^\top ) \otimes \id{\dv^2}\right)\Zm_2\right)}_{\Um}\Vm, \\[3mm]
  &\hfb{\Tm}{\wqk,\wqk} =  \dfrac{1}{\sqrt{\dk}} \begin{bmatrix}
 \bb{0} & \bb{B}^\top  \otimes \id{\dk}\\
  \bb{B} \otimes \id{\dk} & \bb{0}
\end{bmatrix} = \dfrac{1}{\sqrt{\dk}} \begin{bmatrix}
    \bb{0} & \bb{B}^\top \\
    \bb{B} & \bb{0}
\end{bmatrix}  \otimes \id{\dk}.
\end{aligned} 
\end{equation*}
In the above formulas $\wqk:=\begin{bmatrix}
    \wq \\ \wk
\end{bmatrix}$, $\ddelta:=\vecr\left({\bfun(\Xm) - \Ym}\right)$ and,
\begin{align*}
    \Vm &:= \begin{bmatrix}
    \left( \wq\otimes \id{\dv} \right)\K{\dk, \dv} & 
     \id{\dv} \otimes \wk 
\end{bmatrix}, \quad 
\bb{B} := \Rm_{\dv}\left(\id{L} \otimes \wv^\top \otimes \id{\dv}\right)\left(\Zm_1 \otimes \id{\dv}\right)\Sm.
\end{align*}
This decomposition can be thought of as a Gauss-Newton decomposition when we split the function composition $\ell \circ (\Am \mapsto \Am \Xm\wv) \circ a \circ \Tm$ at the level of $\Tm(\Xm)=\Xm\wq\wk^\top\Xm^\top / \sqrt{\dk}$.
\end{remark}
\paragraph{Query-key Hessian eigenspectrum.} The structure of the query-key Hessian block implies a specific configuration of the eigenvalues of its summands. The $\Tm$\nobreakdash-functional Hessian has a characteristic block-hollow structure with blocks of zeros on the diagonal, which makes it responsible for the bulk eigenvalues of the query-key Hessian. This specific structure allows us to reason about the eigenspectrum of the query-key Hessian. Eigenvalues of $\Tm$\nobreakdash-functional Hessian come in pairs $\pm\lambda_i$ for $1 \leq i \leq \dv$. To see that it is enough to note that if $\lambda_i$ is an eigenvalue with an eigenvector $[\bb{v}_1^\top, \bb{v}_2^\top]^\top$, then also $[-\bb{v}_1^\top, \bb{v}_2^\top]^\top$ is an eigenvector with corresponding eigenvalue $-\lambda_i$.
 
Moreover, by theorem 2.1 from \cite{magnus2019matrix} eigenvalues of the Kronecker product are products of the factor matrices' eigenvalues. Since all $\dk$ eigenvalues of $\id{\dk}$ are ones, eigenvalues of $\Tm$\nobreakdash-functional Hessian are the same as eigenvalues of $$\dfrac{1}{\sqrt{\dk}} \begin{bmatrix}
 \bb{0} & \bb{B}^\top \\
  \bb{B} & \bb{0}
\end{bmatrix},$$ each with multiplicity $\dk$. This is exactly like the eigenvalue structure of the functional Hessian of a two-layer MLP, where the eigenvalues are known to come in positive-negative pairs~\cite{singh2021analytic}, each with multiplicity $\dk$.

Furthermore, the $\Tm$\nobreakdash-outer-product Hessian has at most $2\dk\dv-\dk^2$ non-zero eigenvalues when $\dv > \dk$ and at most $\dv^2$ non-zero eigenvalues when $\dv \leq \dk$. This is because of the rank bound from \cref{lemma:rank-bound-qk} and the fact that rank is equal to the number of non-zero eigenvalues for symmetric matrices.

\begin{lemma}{\cite{singh2021analytic}\\}
    \label{lemma:rank_block_kronecker}
Let $\Am \in \M{m}{n}$ and $\Bm \in \M{p}{q}$. Then 
    $$\rk {\begin{bmatrix}
         \id{q} \otimes \Am \\
         \Bm \otimes \id{n}
    \end{bmatrix}}= q \rk{\Am} + n \rk{\Bm} - \rk{\Am}\rk{\Bm}.$$
\end{lemma}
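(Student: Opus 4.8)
The plan is to compute the rank indirectly, through the nullity. Writing $\Mm := \begin{bmatrix} \id{q}\otimes\Am \\ \Bm\otimes\id{n} \end{bmatrix}$, which has $qn$ columns, I would use $\rk{\Mm} = qn - \dim(\ker\Mm)$. Because $\Mm$ is a vertical stack, a vector lies in $\ker\Mm$ exactly when it is annihilated by both blocks, so $\ker\Mm = \ker(\id{q}\otimes\Am)\cap\ker(\Bm\otimes\id{n})$, and the entire problem reduces to finding the dimension of this intersection.

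First I would reshape each candidate $\bb{x}\in\R^{qn}$ into a matrix $\Xm\in\M{n}{q}$ via column-wise vectorization, $\bb{x}=\vecc{\Xm}$, and apply the standard identity $(\Cm\otimes\Dm)\vecc{\Xm}=\vecc{\Dm\Xm\Cm^\top}$. This converts the two membership conditions into matrix equations: $(\id{q}\otimes\Am)\bb{x}=\bb{0}$ becomes $\Am\Xm=\bb{0}$, i.e.\ every column of $\Xm$ lies in $\ker\Am\subseteq\R^n$; and $(\Bm\otimes\id{n})\bb{x}=\bb{0}$ becomes $\Xm\Bm^\top=\bb{0}$, equivalently $\Bm\Xm^\top=\bb{0}$, i.e.\ every row of $\Xm$ lies in $\ker\Bm\subseteq\R^q$.

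Next I would identify the resulting space of matrices. Matrices whose column space is contained in a fixed subspace $U\subseteq\R^n$ and whose row space is contained in a fixed subspace $V\subseteq\R^q$ form a vector space of dimension $\dim U\cdot\dim V$: taking bases $\{u_i\}$ of $U$ and $\{v_j\}$ of $V$, the outer products $u_iv_j^\top$ furnish an explicit basis. With $U=\ker\Am$ (dimension $n-\rk{\Am}$) and $V=\ker\Bm$ (dimension $q-\rk{\Bm}$) this yields $\dim(\ker\Mm)=(n-\rk{\Am})(q-\rk{\Bm})$. Substituting into $\rk{\Mm}=qn-\dim(\ker\Mm)$ and expanding collapses to $q\,\rk{\Am}+n\,\rk{\Bm}-\rk{\Am}\rk{\Bm}$, which is the claim.

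The only genuinely nontrivial step is the tensor-product characterization of the intersection together with its dimension count; the rest is bookkeeping with the vec/Kronecker identity, where the one thing to watch is staying consistent about row- versus column-stacking (the final rank does not depend on the choice). As a sanity check I would verify the degenerate cases --- e.g.\ $\rk{\Am}=n$ forces $U=\{\bb{0}\}$ so that the intersection collapses and, together with $\rk{\Bm}=q$, recovers $\rk{\Mm}=qn$ --- confirming the formula behaves correctly at the boundary.
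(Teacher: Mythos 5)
Your proof is correct. Note that the paper itself does not prove this lemma at all --- it is imported verbatim from \cite{singh2021analytic} and used only to bound the rank of $\Vm^\top$ in \cref{lemma:rank-bound-qk} --- so there is no in-paper argument to compare against. Your route (rank--nullity on the stacked matrix, identifying $\ker\Mm$ as the intersection of the two block kernels, converting both conditions via $(\Cm\otimes\Dm)\vecc{\Xm}=\vecc{\Dm\Xm\Cm^\top}$ into ``column space of $\Xm$ inside $\ker\Am$, row space inside $\ker\Bm$'', and counting $\dim = (n-\rk{\Am})(q-\rk{\Bm})$ via the outer-product basis) is sound, and all the dimension bookkeeping checks out: both blocks have $qn$ columns, the reshaped $\Xm$ is $n\times q$, and the final expansion $qn-(n-\rk{\Am})(q-\rk{\Bm})=q\,\rk{\Am}+n\,\rk{\Bm}-\rk{\Am}\rk{\Bm}$ is exact. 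The only step you gesture at rather than fully justify is that matrices with column space in $U$ and row space in $V$ form a space of dimension $\dim U\cdot\dim V$; this is standard (such $\Xm$ satisfy $\Xm=\Pm_U\Xm\Qm_V^\top$ for projections onto $U$ and $V$, and the $u_iv_j^\top$ are linearly independent and span the image of that projection), but you would want to spell it out in a written version. Your boundary checks ($\rk{\Am}=n$, $\rk{\Bm}=q$ giving full column rank; or $\Bm=\bb{0}$ giving $q\,\rk{\Am}$) are a good confirmation that no off-by-one crept into the reshaping.
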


\begin{lemma}
\label{lemma:rank-bound-qk}
    Under the same assumptions as \cref{remark:T_decomposition}, and additionally assuming that $\wk$ and $\wq$ are full rank, the rank of the $\Tm$\nobreakdash-outer-product Hessian can by bounded by

    \begin{equation*}
\rk{\Hm_\mathrm{o}^{\Tm}(\wqk, \wqk)} \leq 
\begin{cases}
    2\dk\dv -\dk^2 & \text{if } \dk < \dv \\
    \dv^2 & \text{if } \dk \geq \dv.
\end{cases}
\end{equation*}
\end{lemma}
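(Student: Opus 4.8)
The plan is to reduce the statement to a single rank computation. By \cref{remark:T_decomposition}, $\Hm_\mathrm{o}^{\Tm}(\wqk,\wqk) = \frac{1}{\dk}\Vm^\top\Um\Vm$, and since the rank of a matrix product never exceeds the rank of any of its factors, $\rk{\Hm_\mathrm{o}^{\Tm}(\wqk,\wqk)} \le \rk{\Vm}$. This already explains why the result is an upper bound rather than an equality: we discard all information about $\Um$, whose rank we never need to control. It therefore remains to evaluate $\rk{\Vm}$, where $\Vm = \begin{bmatrix}(\wq\otimes\id{\dv})\K{\dk,\dv} & \id{\dv}\otimes\wk\end{bmatrix}$.

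First I would strip the commutation matrix from the first block. Because $\K{\dk,\dv}$ is a permutation matrix, right-multiplying $\wq\otimes\id{\dv}$ by it merely reorders columns and leaves the column space untouched; as the column space of a horizontally stacked matrix is the sum of the column spaces of its blocks, this yields $\rk{\Vm}=\rk{\begin{bmatrix}\wq\otimes\id{\dv} & \id{\dv}\otimes\wk\end{bmatrix}}$. Transposing (rank-invariant) and then interchanging the two resulting row blocks (a row permutation, also rank-invariant) gives $\rk{\Vm}=\rk{\begin{bmatrix}\id{\dv}\otimes\wk^\top \\ \wq^\top\otimes\id{\dv}\end{bmatrix}}$, which matches the template of \cref{lemma:rank_block_kronecker} exactly, with $\Am=\wk^\top$, $\Bm=\wq^\top$, and $q=n=\dv$.

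Applying \cref{lemma:rank_block_kronecker} then gives $\rk{\Vm}=\dv\rk{\wk}+\dv\rk{\wq}-\rk{\wk}\rk{\wq}$, using $\rk{\wk^\top}=\rk{\wk}$ and likewise for $\wq$. The full-rank hypothesis forces $\rk{\wq}=\rk{\wk}=\min(\dv,\dk)$, so substituting $\min(\dv,\dk)=\dk$ in the regime $\dk<\dv$ produces $2\dk\dv-\dk^2$, while substituting $\min(\dv,\dk)=\dv$ in the regime $\dk\ge\dv$ produces $\dv^2$, which is exactly the claimed case split (and the two formulas agree at the boundary $\dk=\dv$).

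The only delicate part is the index bookkeeping in the second step: one must check that dropping the column permutation, transposing, and swapping the blocks are each genuinely rank-preserving, and that after these operations the positions of $\wk^\top$ and $\wq^\top$ line up with the $\id{q}\otimes\Am$ and $\Bm\otimes\id{n}$ slots of \cref{lemma:rank_block_kronecker} with consistent dimensions ($m=p=\dk$, $q=n=\dv$). Once the matrix is in that canonical form, the remainder is a direct substitution.
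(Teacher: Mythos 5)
Your proposal is correct and follows essentially the same route as the paper's proof: bound the rank of the sandwich $\frac{1}{\dk}\Vm^\top\Um\Vm$ by $\rk{\Vm}$, discard the commutation matrix, and apply \cref{lemma:rank_block_kronecker} to the stacked Kronecker blocks before substituting $\rk{\wq}=\rk{\wk}=\min(\dv,\dk)$. If anything, you are slightly more explicit than the paper about why stripping $\K{\dk,\dv}$ and reordering the blocks are rank-preserving, which is a welcome clarification.
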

\begin{proof}
    \begin{equation*}
        \rk{\Hm_\mathrm{o}^{\Tm}(\wqk, \wqk)} = \rk{\frac{1}{\dk}\Vm^\top\Um\Vm} \leq \min\left(\rk{\Um},\rk{\Vm^\top}\right) \leq \rk{\Vm^\top}.
    \end{equation*}
To get the rank of $\Vm^\top$ we use \cref{lemma:rank_block_kronecker}.

\begin{equation*}
\begin{aligned}
    \rk{\Vm^\top} &\leq
     \rk{\begin{bmatrix}{\wq}^\top  \otimes \id{\dv} \\ \id{\dv} \otimes {\wk}^\top \end{bmatrix}} \\ &= \rk{\begin{bmatrix}\id{\dv} \otimes {\wk}^\top  \\ {\wq}^\top  \otimes \id{\dv}\end{bmatrix}} \\
     &= \dv \rk{\wk} + \dv \rk{\wq} - \rk{\wk} \rk{\wq} \\
     & = \begin{cases}
         2 \dv\dk - \dk^2 &\text{ when } \dk < \dv, \\
         \dv^2 &\text{ when } \dk \geq \dv.
     \end{cases}
\end{aligned}
\end{equation*}
\end{proof}

\section{Multi-Head Self-Attention Hessian}
\label{sec:multi-head-self-att-hessian}
Multi-head self-attention, a mechanism that allows the model to jointly attend to information from different representation
subspaces \citep{vaswani2017attention}, enforces an interesting structure in self-attention Jacobian and the Hessian of the self-attention loss. A multi-head self-attention layer can be defined as
\begin{equation*}
    \label{eq:self-attention-multihead}
    \bfun(\Xm) = \sum_{h}^{H} \bfun^h(\Xm) = \sum_{h}^{H}\Am^{h}(\Xm) \Xm \wv^{h} \quad\text{where}\quad\bb{A}^h(\Xm) = a \left(\Tm^{h}(\Xm)\right)\,,
\end{equation*}
where $\Tm^{h}$ do not share weights. For example, in the classical definition of multi-head self-attention, we have $\Tm^h(\Xm) = \Xm\wq^h\wk^{h\top}\Xm^\top \, / \, \sqrt{\dk}\,$ with different $\wq^h$ and $\wk^h$ for every head.

\paragraph{Multi-head classical self-attention Jacobian blocks depend only on single-head parameters.} Note that $\bfun$ is a sum of completely independent functions of $\Xm$, as every weight matrix enters $\bfun^h$ for exactly one $h$. This implies that a block of the Jacobian corresponding to weights parameterizing $\bfun^h$ depends only on $\wk^h, \wq^h$ and $\wv^h$. Recalling the formulas for the gradient from \cref{lemma:gradients_of_self_attention}, we arrive at the following \cref{remark:multi-head-self-attention-jacobian}.

\begin{remark}
\label{remark:multi-head-self-attention-jacobian}
   The Jacobians of the multi-head classical self-attention layer~\citep{vaswani2017attention} have the following form:
  \begin{align*}
    \dfrac{\partial \bfun}{\partial \wv^h} = \dfrac{\partial \bfun^h}{\partial \wv^h} &= \sm \left( \dfrac{\Xm\wq^h\wk^{h\top}\Xm^\top}{\sqrt{\dk}} \right) \Xm \otimes \id{\dv}, \\[3mm]
    \dfrac{\partial \bfun}{\partial \wq^h} = \dfrac{\partial \bfun^h}{\partial \wq^h} &= \left(\id{L} \otimes \wv^{h\top}\Xm^\top\right) \dfrac{\partial \bb{A}^h}{\partial \Tm^h} \left( \dfrac{\Xm \otimes \Xm\wk^h}{\sqrt{\dk}} \right),
  \end{align*}
  where the Jacobian of the row-wise softmax w.r.t. its inputs is defined in \cref{lemma:gradients_of_self_attention}.
\end{remark}

\paragraph{With an increasing number of heads, the outer-product Hessian dominates the Hessian.}
Note that the mixed inter-head terms of the Hessian are fully defined by the outer-product Hessian part. This is because the inter-head functional Hessian blocks are always zero, as the second derivative of $\bfun$ w.r.t. arbitrary inter-head weight matrices $\bb{W}_{c}^{h_i} \in \M{p_c}{q_c}$ and $\bb{W}_{t}^{h_j} \in \M{p_t}{q_t}$ for $h_i \neq h_j$ is zero. This results from the Jacobian expressions in \cref{remark:multi-head-self-attention-jacobian}, which always depend on weight matrices parameterizing just a single self-attention head. Hence,
\begin{equation*}
    \hfb{\phantom{\bfun}}{\bb{W}_c^{h_i},\bb{W}_t^{h_j}} = \left(\der{\ell}{\bfun}(\bfun(\cdot)) \otimes \id{p_c q_c}\right)\underbrace{\dfrac{\partial^2 \bfun}{\partial \bb{W}_{c}^{h_i} \partial \bb{W}_t^{h_j}}(\cdot)}_{=0, \text{ when } h_i \neq h_j} = 0 \,.
\end{equation*}
This implies that, for a fixed $\dv$ and $\dk = \dv / H$, the functional Hessian becomes increasingly sparse as the number of heads grows. This leads to most entries of the Hessian being fully defined by the outer-product Hessian. Notably, this observation holds irrespective of the loss function used.

\subsection{Influence of the Low-Rank Nature of $\wv$ on the Self-Attention Hessian}
A practical implementation of the multi-head self-attention assumes that $\wv^h$ is low-rank, namely, it is parameterized as $\wv^h = \wo^h\wu^h$, for some $\wo^h \in \M{\dv}{\dk}$ and $\wu^h \in \M{\dk}{\dv}$. In this setting, we compute the Hessian w.r.t. $\wo^h$ and $\wu^h$ instead of $\wv^h$ and the only self-attention Hessian blocks that get affected are the ones w.r.t. $\wo^h$ or $\wu^h$.

\paragraph{The $(\wo^h, \wu^h)$ Hessian block resembles the Hessian of 
a linear two-layer MLP.} Note that
\begin{equation}
    \bfun^h(\Xm) = \Am^{h}(\Xm) \Xm \wo^{h}\wu^{h} = \Mm_1\wo^{h}\wu^{h}
\end{equation}
resembles a two-layer linear MLP if we treated $\Mm_1$ as a single input matrix. Hence, the discussion from \cref{sec:setup-background} on the Hessian of MLPs applies to the value matrix block of the self-attention Hessian. For instance, its diagonal consists entirely of the outer-product Hessian entries, since its functional Hessian counterpart has a block-hollow structure. 

The main difference compared to a full rank parameterization by a single matrix $\wv^h$ is that now the Hessian also includes mixed blocks w.r.t. $\wo^h$ and $\wu^h$. For the MSE loss, the functional part of these blocks has a quadratic dependence on the first moment matrix $\Mm_1$, as both the derivative of the loss w.r.t. the model output $\partial \ell / \partial \bfun$, and the mixed second derivative matrix of $\bfun^h$ w.r.t. $\wo^h$ and $\wu^h$ (see \cref{eq:gauss_newton_self_attention}) depend on it linearly. Additionally, the outer-product Hessian block w.r.t. the value matrices $\wo^h$ and $\wu^h$ has an extra quadratic dependence on the selected value matrices themselves and on the first moment matrix $\Mm_1$.

\section{Experimental Setup}
\label{sec:experimental-setup}
For the numerical experiments, we adapt the setting from \cite{quirke2024understanding}. They frame number addition as the next token prediction task and generate a custom dataset, which we also use in our experiments. For 5-digit number addition that we use, the sequence length is equal to $L=17$. The model we consider is (unless stated otherwise) a single-block GPT-2 Transformer~\citep{radford2019language} from TransformerLens~\citep{nanda2022transformerlens}, in most experiments without layer normalization unless noted otherwise. The tokens and their positions are embedded and passed into a single-head self-attention layer, followed by a two-layer MLP.
In all figures except for \cref{fig:deep-linear-self-attention,fig:deep-linear-mlp} we use cross-entropy (CE) as a loss function.

To obtain \cref{fig:hess-fig,fig:entries-magnitude-no-stx} we initialize the weights of the model the same way as GPT-2, so biases are initialized to \num{0} and weights are initialized by sampling from $\mathcal{N}(0, 0.64/\dv)$. In \cref{fig:hess-fig} we use the classical definition of self-attention, while in \cref{fig:entries-magnitude-no-stx} we additionally compare to self-attention without softmax. The size of the model is $\dv=\dk=16$, and the latent dimension of the MLP is \num{64}. The Hessian is computed using \num{64} data samples.

To obtain \cref{fig:block-frob-data-dependence,fig:block-frob-data-dependence-linear} we vary the standard deviation of the initialization of the embedding layer while leaving the initialization scheme of the rest of the model unchanged. We compare (\cref{fig:frob-no-ln}) the model without layer normalization with (\cref{fig:frob-pre-ln}) a model with Pre-LN (meaning that layer normalization is applied before the self-attention layer, before the MLP layer, and after the last Transformer block as in GPT-2 \cite{radford2019language}). The size of the Transformer block is $\dv=\dk=128$ and the Hessian is computed using \num{64} sequences. We estimate the block Frobenius norm using the Hutchinson trace estimator of the block outer product, implemented in the CurvLinOps library \citep{dangel2025position}. Every configuration is repeated \num{20} times and we report the mean and standard error of the mean. For the Pre-LN we select the slope of the dashed trend lines by fitting a linear regression model into pairs of points $(\log{\sigma}, \log{\bar{f}(\sigma)}))$ where $\bar{f}(\sigma)$ is the mean Frobenius norm of the block, estimated at $\sigma$.

The setting of \cref{fig:self-attention-depth} is the same as in \cref{fig:frob-no-ln} with the only difference that we consider multi-layer GPT-2 Transformers without layer normalization.


To obtain \cref{fig:deep-linear-self-attention} we use multi-layer linear self-attention networks, meaning that we simply chain self-attention layers w/o softmax. The loss we use in this experiment is mean squared error (MSE). Similar to the experiment in \cref{fig:block-frob-data-dependence}, we vary the standard deviation $\sigma$ used to initialize the embedding matrix $\Xm$.

To obtain \cref{fig:deep-linear-mlp} we use linear MLPs with the hidden dimension $128$. The loss used in this experiment is also MSE.

\section{Additional Experimental Results}
\label{sec:additional_experimental_results}

\begin{figure}
    
    \begin{subfigure}{0.489\linewidth}
    \includegraphics[width=\linewidth, trim=0 0 240pt 0, clip]{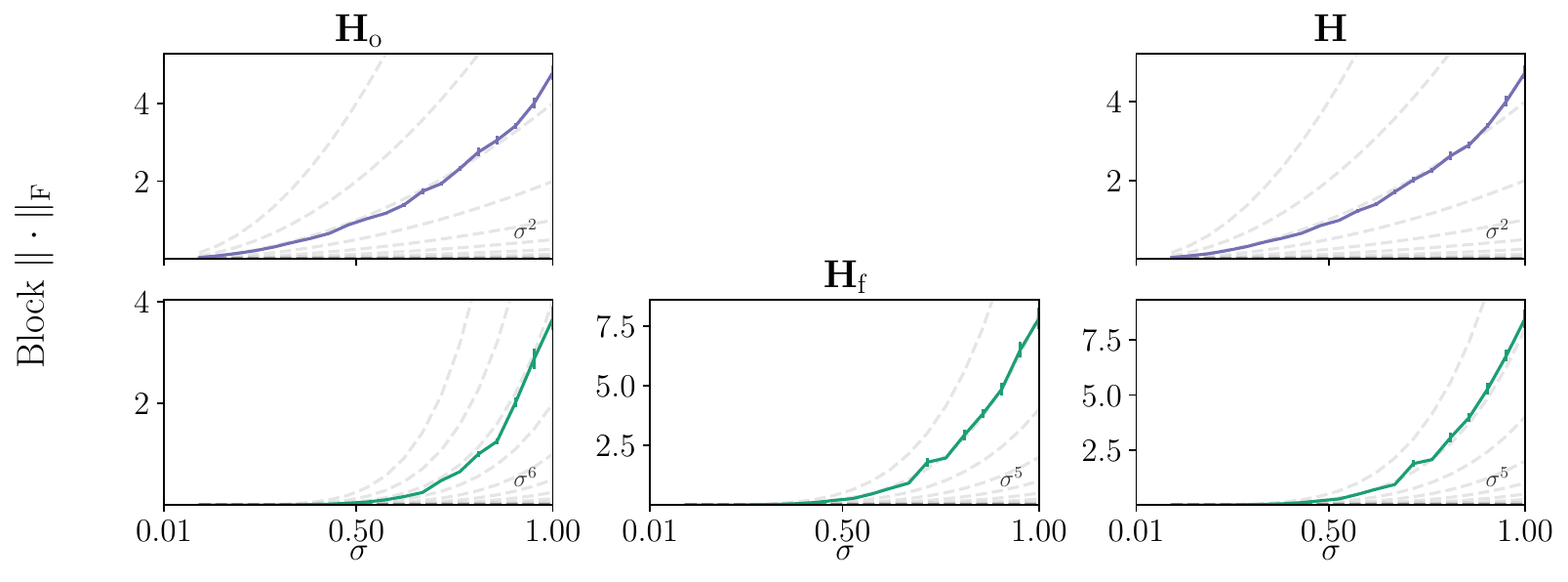}
     \begin{tikzpicture}[overlay, remember picture]
    \draw[VeryLightGray, thin, rounded corners=2pt] (4.3,3.0) rectangle (6.8,3.8);
    \draw[PurplePrint, thick] (4.5,3.6) -- (5.0,3.6) node[right, right] {\tiny\color{black}
    $\Hm(\wv, \wv)$
    };
\draw[GreenPrint, thick] (4.5,3.2) -- (5.0,3.2) node[right, right] {\tiny
\color{black}$\Hm(\wq, \wq)$};
    \end{tikzpicture}
    \vspace{-3mm}
    \caption{Practical range, i.e., $\sigma \in (0, 1)$}
    \end{subfigure}
    \hspace{1mm}
    \textcolor{gray!50}{\vrule width 0.50pt height 4.2cm} 
     \begin{subfigure}{0.489\linewidth}\includegraphics[width=\linewidth, trim=0 0 240pt 0, clip]{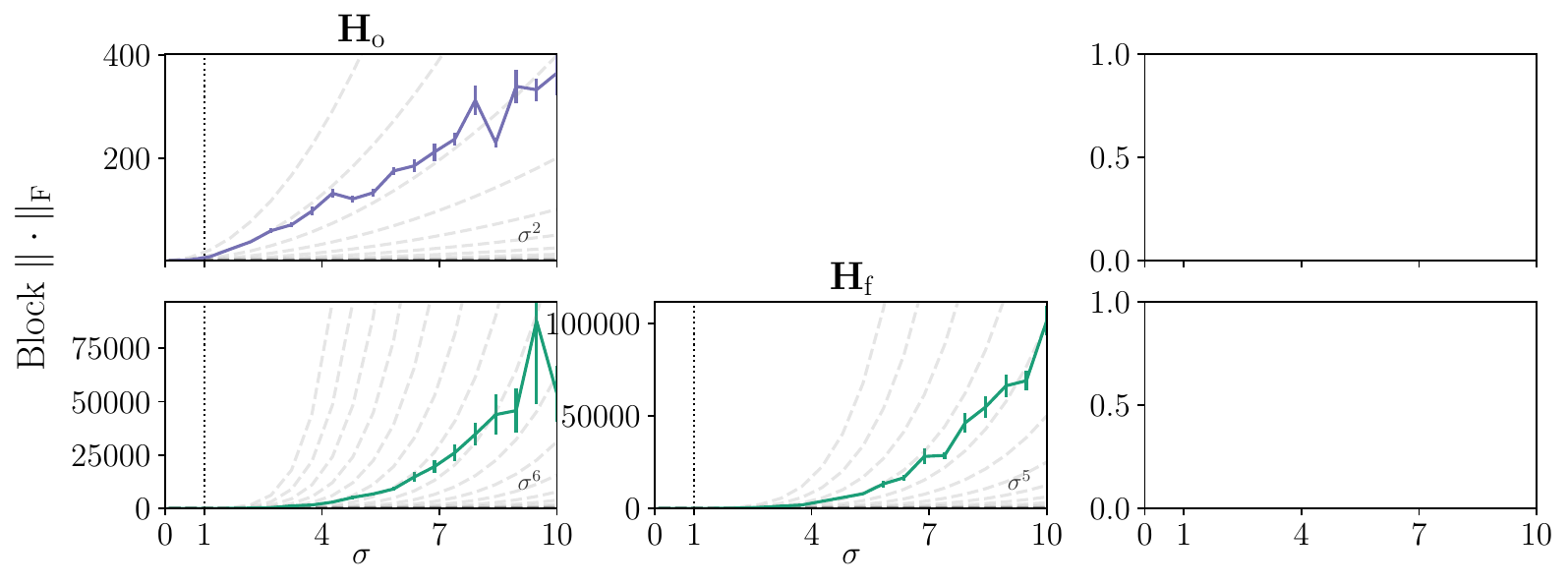}
     \begin{tikzpicture}[overlay, remember picture]
    \draw[VeryLightGray, thin, rounded corners=2pt] (4.3,3.0) rectangle (6.8,3.8);
    \draw[PurplePrint, thick] (4.5,3.6) -- (5.0,3.6) node[right, right] {\tiny\color{black}
    $\Hm(\wv, \wv)$
    };
\draw[GreenPrint, thick] (4.5,3.2) -- (5.0,3.2) node[right, right] {\tiny
\color{black}$\Hm(\wq, \wq)$};
    \end{tikzpicture}
    \vspace{-3mm}
    \caption{Non-Practical \& Practical range, i.e., $\sigma \in (0, 10)$}
    \end{subfigure}
    \vspace{-5mm}
    \caption{(Plotted in linear scale.) \textbf{Empirical verification with a CE loss confirms derived growth rates w.r.t. magnitude $\sigma$ of $\Xm$ from \cref{tab:hessian_inter_data}.} We demonstrate the growth rates through the Frobenius norm $\|\cdot \|_\text{F}$ of {\color{PurplePrint} value} and {\color{GreenPrint} query} diagonal blocks for (a) practical range $\sigma \in (0, 1)$ and (b) bigger $\sigma$ values $\sigma \in (0, 10)$. The dashed lines correspond to the trend predicted by theory as in \cref{tab:hessian_inter_data}. For details on the experimental setting, see \cref{sec:experimental-setup}. \textbf{This figure presents the same data as in \cref{fig:frob-no-ln} but using a linear scale on both axes instead of a log-log scale}.} \vspace{-5mm}
    \label{fig:block-frob-data-dependence-linear}
\end{figure}

\paragraph{Hessian blocks at different layers of a multi-layer Transformer follow the growth rates predicted for a single-layer self-attention model.} With the experiment in \cref{fig:self-attention-depth} we try to answer a question: How well do our results for a single-layer network generalize to an isolated layer inside a deep network?
To do that we empirically demonstrate how the self-attention (with softmax) Hessian blocks of multi-layer Transformers scale with the input matrix $\Xm$ scale $\sigma$.

We observe that for a realistic range of $\sigma \in (0, 1)$, which we refer to as the practical range, the query outer-product and functional blocks as well as the value outer-product block scale exactly as the prediction for a single layer. Moreover, the lines corresponding to earlier layers are lower on the log-log scale for $\sigma \in (0, 1)$, which means that their Hessian blocks have smaller multipliers as part of their expressions.

For $\sigma > 1$ and deeper layers we start observing some higher-order dependencies on $\sigma$. This suggests that similarly to the deep linear attention networks (see \cref{sec:softmax-effects}), the Hessian of a multi-layer Transformer with softmax attention exhibits a dependence on $\Xm$ that grows with depth. These higher order dependencies are not visible for $\sigma < 1$, because they converge to zero quicker than the lower order dependencies. We also note, that the Frobenius norm of the query Hessian block and the value outer-product Hessian block corresponding to the top layer quite closely follow our prediction for a single self-attention layer, also for deeper networks and $\sigma > 1$.




For layers other than the top one, the value functional Hessian block does not follow our theoretical prediction, because, for deeper layers, this Hessian block no longer equals zero. Note that the top layer (largest layer identifier in the plot) is not present in the figure because it equals zero. 

\begin{figure}
    \centering
    \begin{tikzpicture}
    \draw[VeryLightGray, thin, rounded corners=2pt] (-0.7,-0.2) rectangle (4.2,0.2);
    \draw[PurplePrint, thick] (-0.5,0) -- (0.0,0) node[right, right] {\tiny\color{black}
    $\Hm(\wv, \wv)$
    };
\draw[GreenPrint, thick] (1.9,0) -- (2.4,0) node[right, right] {\tiny
\color{black}$\Hm(\wq, \wq)$};
\end{tikzpicture}
    
    \begin{subfigure}{0.465\linewidth}  \includegraphics[width=\linewidth, trim=0 0 246pt 0, clip]{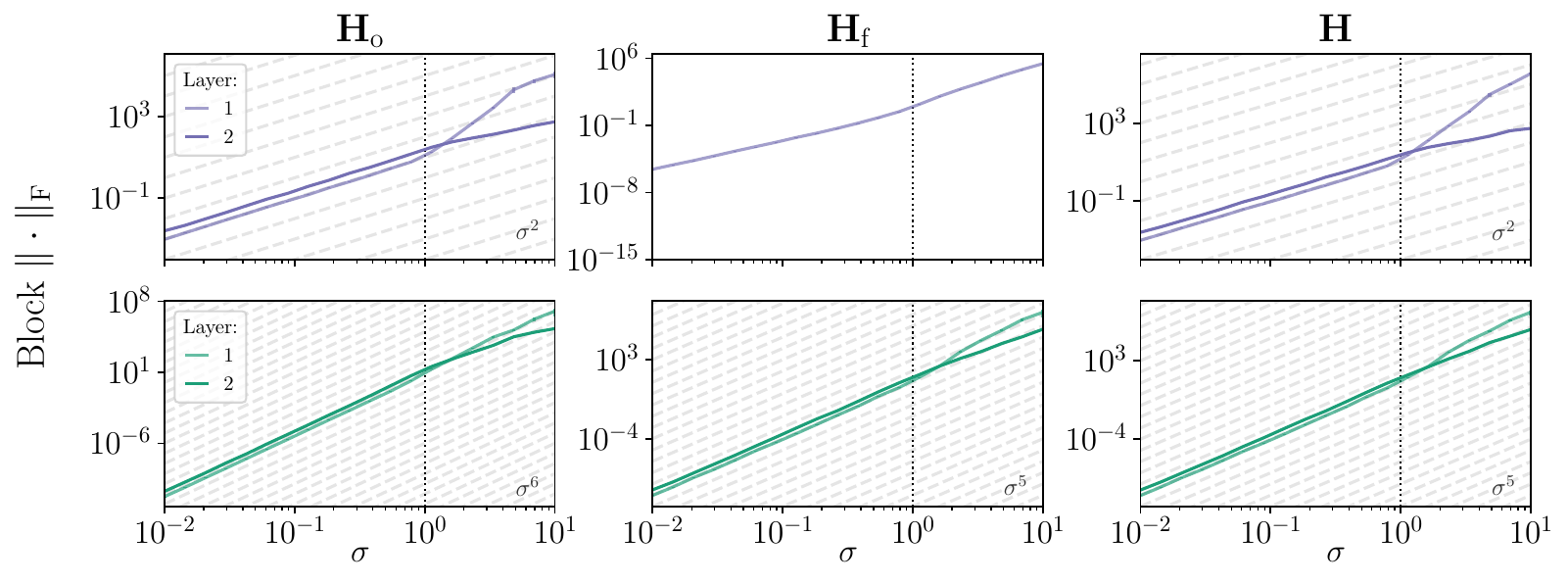}
    \caption{Depth = 2}
    \end{subfigure}   
        \begin{subfigure}{0.465\linewidth} \includegraphics[width=\linewidth, trim=0 0 246pt 0, clip]{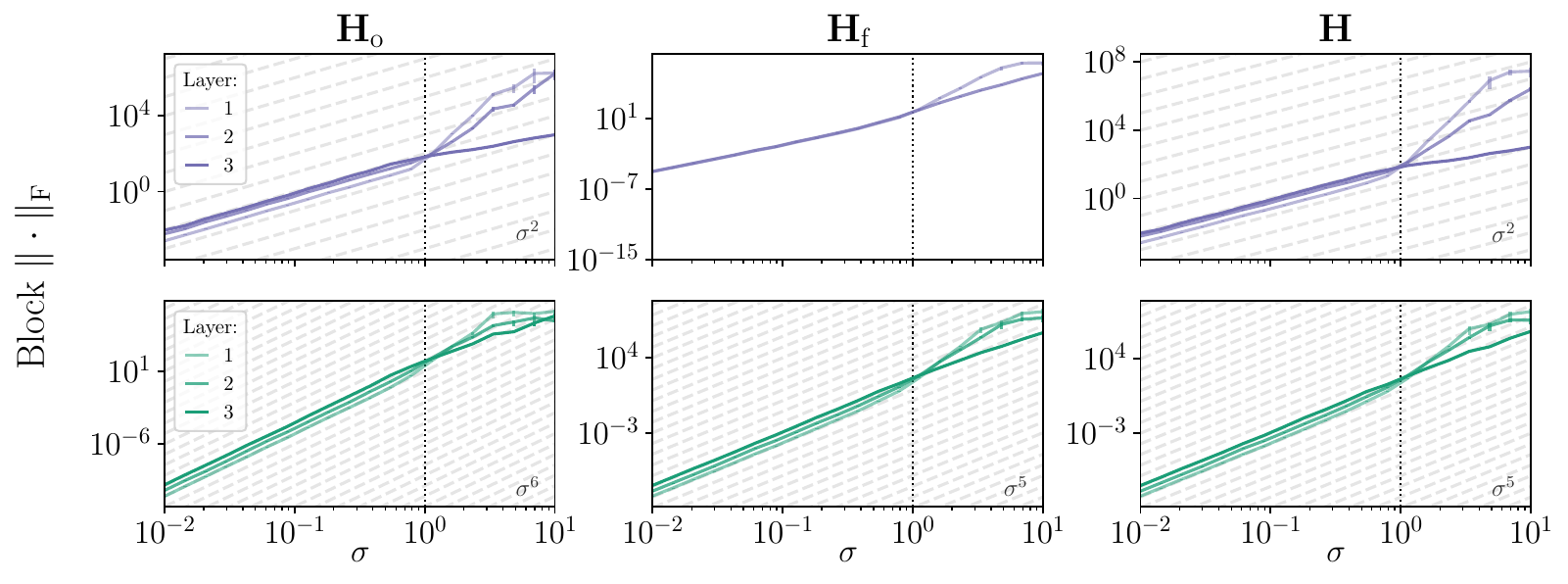}
        \caption{Depth = 3}
    \end{subfigure}  
 
        \begin{subfigure}{0.465\linewidth} \includegraphics[width=\linewidth, trim=0 0 246pt 0, clip]{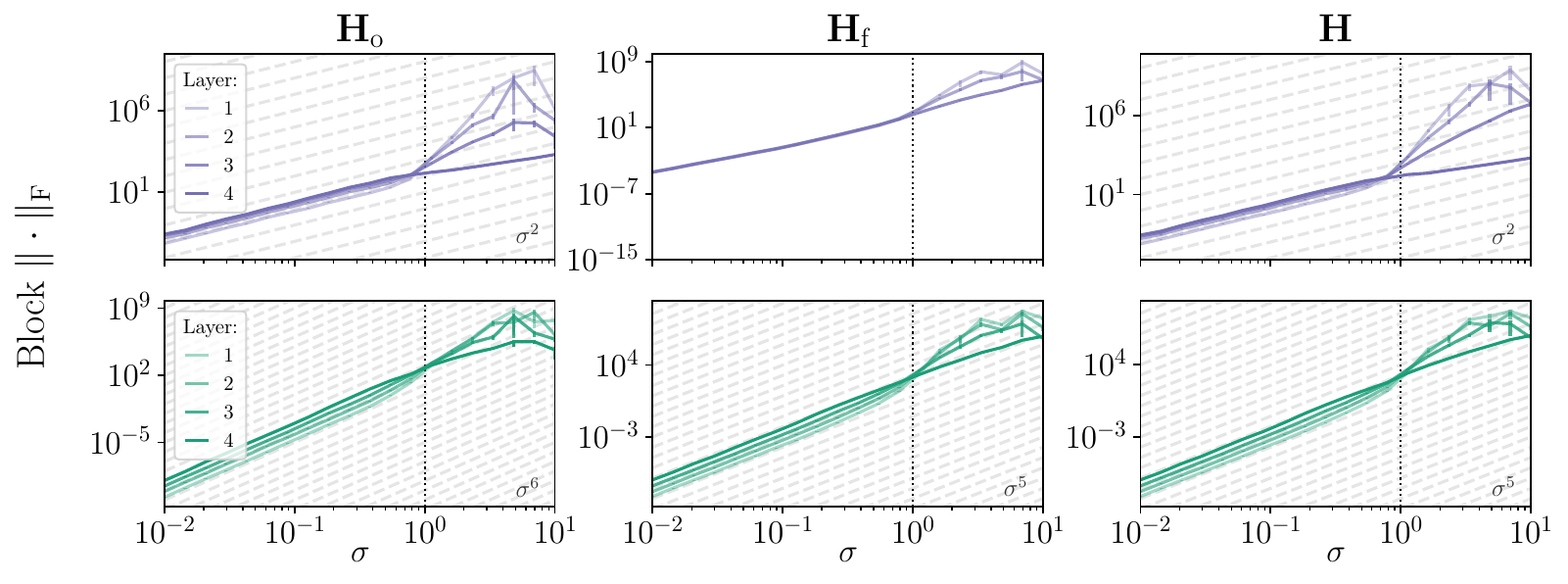}
        \caption{Depth = 4}
    \end{subfigure}  
        \begin{subfigure}{0.465\linewidth} \includegraphics[width=\linewidth, trim=0 0 246pt 0, clip]{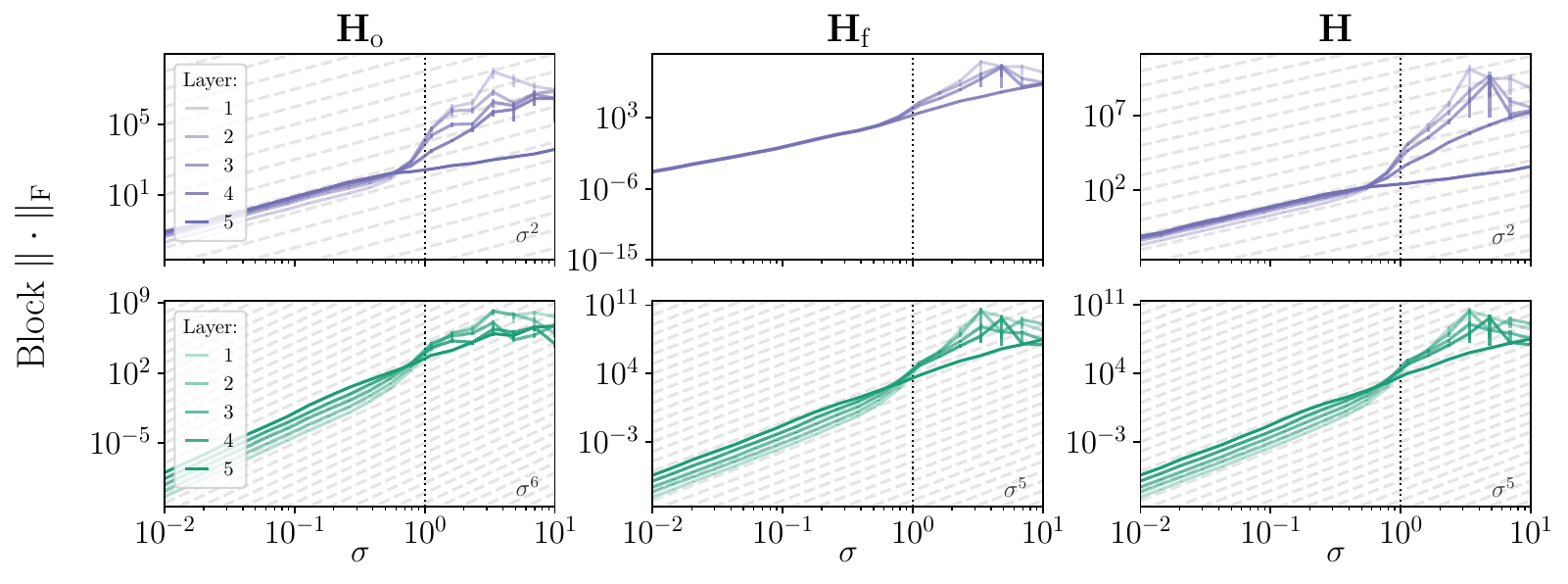}
        \caption{Depth = 5}
    \end{subfigure}  
    
    \caption{(Plotted in log-log scale.) \textbf{{\color{PurplePrint} Value} and {\color{GreenPrint} query} Hessian diagonal blocks at different layers follow the predicted theoretical growth rates for practical ranges of the input.} Frobenius norm $\|\cdot \|_\text{F}$  of the self-attention Hessian blocks for multi-layer GPT-2 Transformers without layer normalization on the next token prediction task, split by Transformer block ($1$ corresponds to the input Transformer block). We indicate the growth rates predicted by \cref{th:self_attention_outer_product,th:self_attention_functional} with the gray dashed lines and the annotation in the bottom right corners. As for the single layer, the complete Hessian $\Hm$ value and query blocks follow the trend of the outer-product and functional Hessian blocks respectively.}
    \label{fig:self-attention-depth}
\end{figure}


    


\paragraph{Empirical comparison between self-attention networks and MLPs.}
In \cref{fig:deep-linear-self-attention,fig:deep-linear-mlp} we compare the diagonal blocks of the linear self-attention and MLP Hessians. In \cref{fig:deep-linear-self-attention-depth-1} we observe the growth rates predicted by \cref{rem:hessian-no-softmax} for a single layer of linear self-attention. As expected, the diagonal blocks are fully driven by the outer-product Hessian, as the functional Hessian blocks equal zero. For growing network depth $D$, we observe that the growth rates of the outer-product Hessian $\hobeSimple{\bfun}$ blocks become super-exponential. The complete Hessian $\Hm$ blocks also follow this trend---the last layer for any $\sigma$ and the remaining ones for bigger $\sigma$. This is inline with the discussion in \cref{sec:softmax-effects}.

In contrast, the diagonal blocks of a linear MLP Hessian \cref{fig:deep-linear-mlp} grow quadratically with $\sigma$, for any of the tested network depths.

\begin{figure}
    \centering
\captionsetup[subfigure]{justification=centering}

    \begin{subfigure}{0.75\linewidth}
    \includegraphics[width=\linewidth]{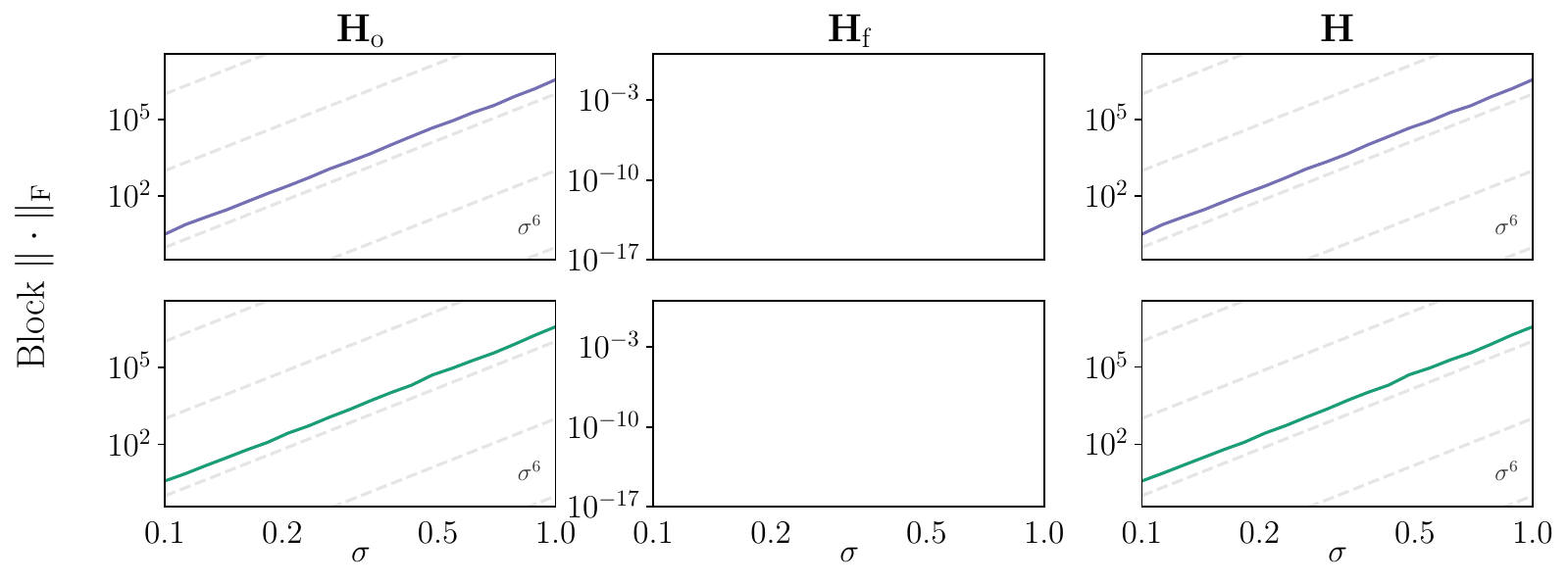}
    \caption{
    Depth = 1 \\ There is no prediction for the functional Hessian because (as predicted by \cref{rem:hessian-no-softmax}) the diagonal blocks of the functional Hessian equal zero.}
    \label{fig:deep-linear-self-attention-depth-1}
\end{subfigure}
\begin{subfigure}{0.75\linewidth}
    \includegraphics[width=\linewidth]{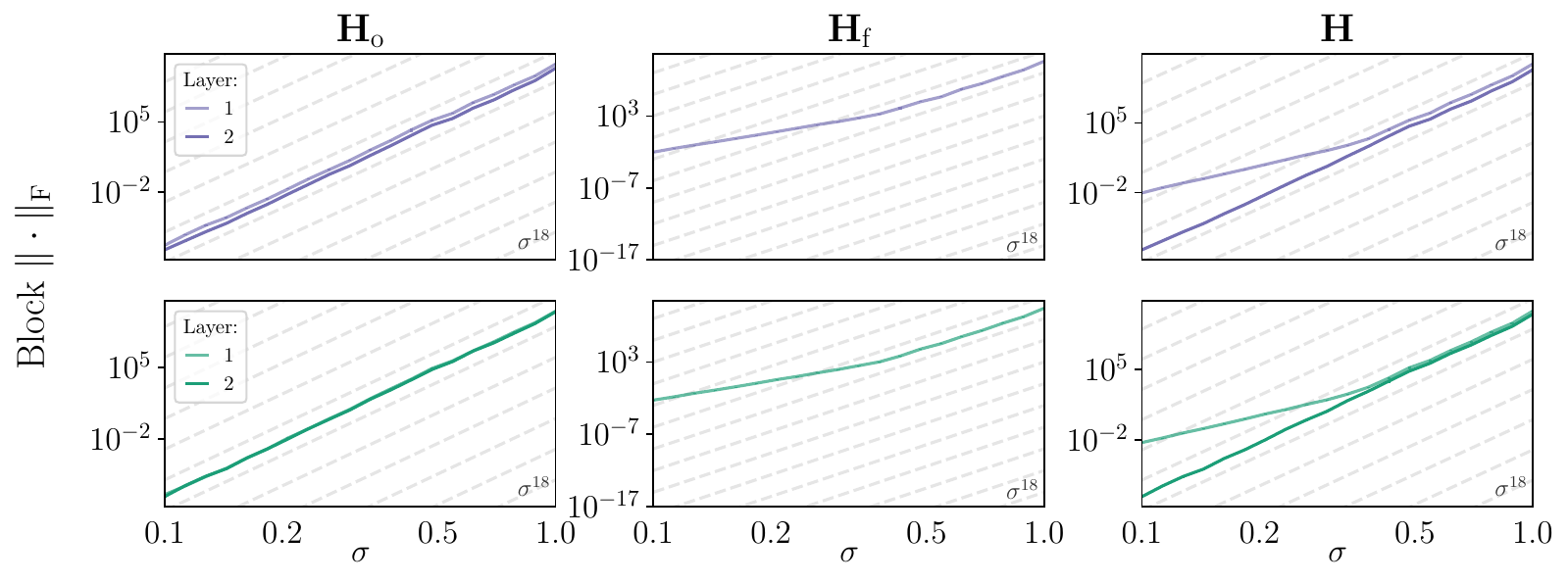}
    \caption{Depth = 2}
\end{subfigure}
\begin{subfigure}{0.75\linewidth}
    \includegraphics[width=\linewidth]{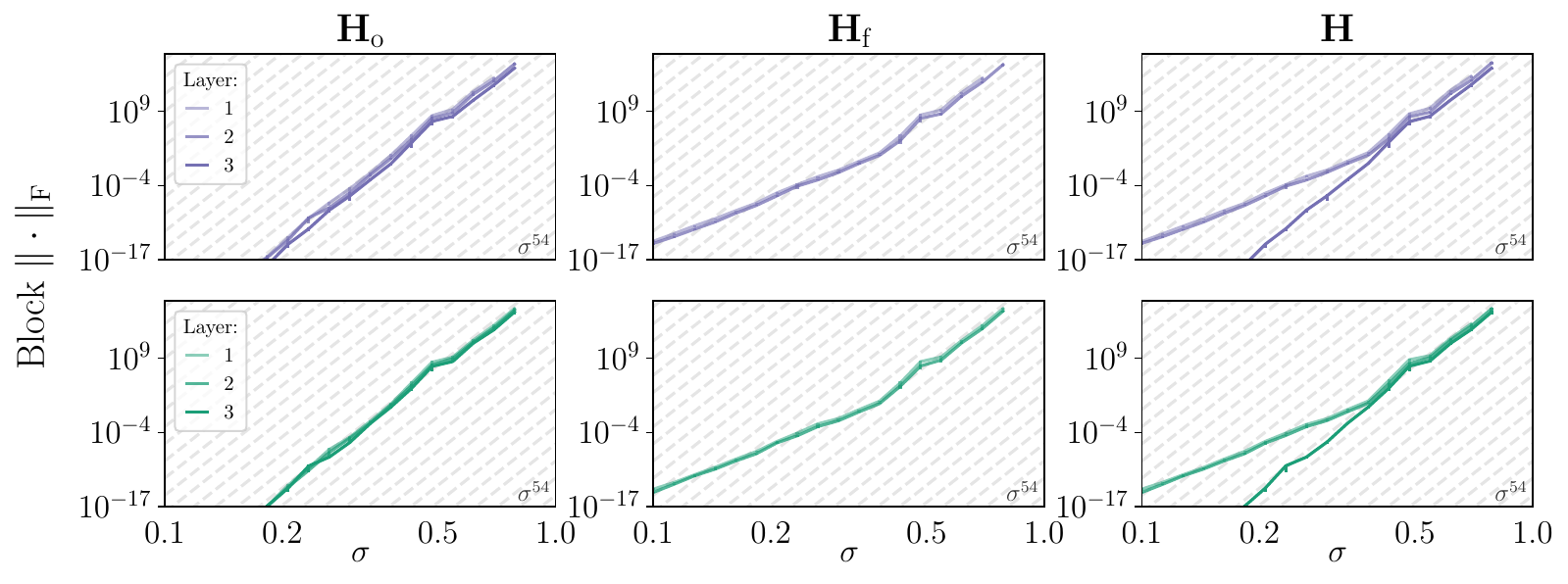}
    \caption{Depth = 3}
\end{subfigure}    
    \caption{(Plotted in log-log scale.) \textbf{In linear self-attention networks {\color{PurplePrint} value} and {\color{GreenPrint} query} Hessian diagonal blocks at different layers partially follow the super-exponential growth rate with depth.} Frobenius norm $\|\cdot \|_\text{F}$  of the self-attention Hessian blocks for multi-layer self-attention network on the next token prediction task, split by layer ($1$ corresponds to the input layer). We limit the range of sigma and the network depth, due to numerical problems caused by the super-exponential growth rate for larger $\sigma$ and deeper networks. The dashed lines indicate the trend $\sigma^{2\cdot3^D}$, where $D$ is the network depth.}
    \label{fig:deep-linear-self-attention}
\end{figure}

\begin{figure}
    \centering
    \begin{subfigure}{0.26\linewidth}
\includegraphics[width=\linewidth]{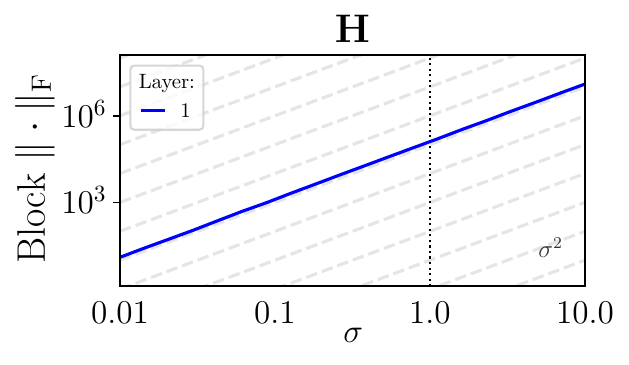}
\caption{Depth = 1}
    \end{subfigure}
    \begin{subfigure}{0.235\linewidth}
\includegraphics[width=\linewidth, trim=26.5pt 0 0 0, clip]{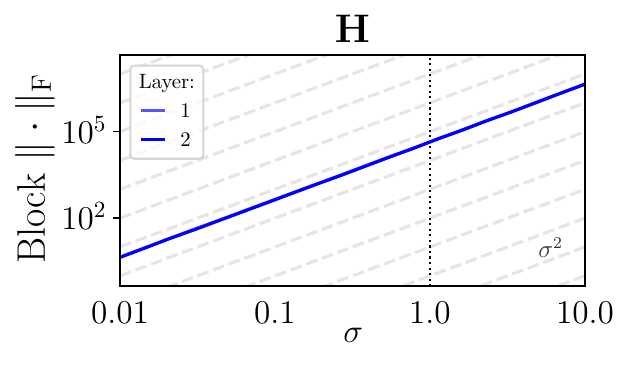}
\caption{Depth = 2}
    \end{subfigure}
    \begin{subfigure}{0.235\linewidth}
\includegraphics[width=\linewidth, trim=26.5pt 0 0 0, clip]{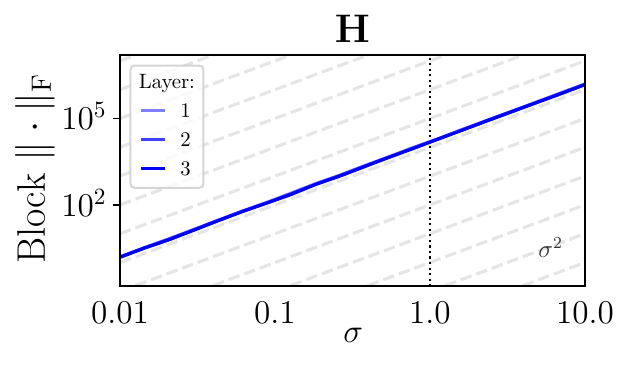}
\caption{Depth = 3}
    \end{subfigure}
    \begin{subfigure}{0.235\linewidth}
\includegraphics[width=\linewidth, trim=26.5pt 0 0 0, clip]{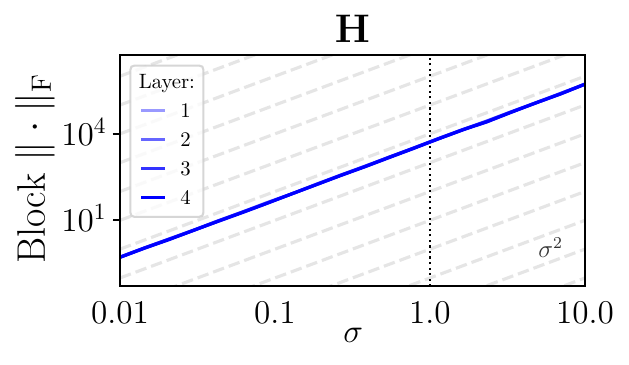}
\caption{Depth = 4}
    \end{subfigure}
    \caption{(Plotted in log-log scale.) \textbf{Diagonal blocks of a linear MLP grow the same with $\sigma$ irrespective of network depth.} Frobenius norm $\|\cdot \|_\text{F}$  of diagonal Hessian blocks for a linear MLP on the next token prediction task, split by layer ($1$ corresponds to the input layer). For a linear MLP the diagonal blocks of a functional Hessian are always zero~\citep{singh2021analytic}, so we simply plot the complete Hessian diagonal blocks, without splitting them into outer-product and functional Hessians. We plot the block Frobenius norm separately for every layer, but they perfectly overlap.}
    \label{fig:deep-linear-mlp}
\end{figure}

    

\end{document}